%% file: main.tex
\documentclass[sigconf]{acmart}
\AtBeginDocument{%
  }

\setcopyright{acmlicensed}
\copyrightyear{2018}
\acmYear{2018}
\acmDOI{XXXXXXX.XXXXXXX}

\input{math_commands.tex}

\usepackage{hyperref}
\usepackage{url}
\usepackage{graphicx}
\usepackage{amsmath}

\usepackage{amssymb}
\usepackage{enumitem}
\usepackage{bm}
\usepackage{xcolor}
\usepackage{dsfont}
\usepackage{booktabs}
\usepackage{todonotes}
\usepackage{cleveref}
\usepackage{siunitx}
\usepackage{wrapfig}
\usepackage{algorithmic}
\usepackage{multirow}
\usepackage{colortbl}
\usepackage{tcolorbox} 
\usepackage{listings}
\usepackage{subcaption}
\lstdefinestyle{mypython}{
  language=Python,
  basicstyle=\ttfamily\small,
  keywordstyle=\color{blue},
  commentstyle=\color{gray},
  stringstyle=\color{teal},
  showstringspaces=false,
  breaklines=true,
  frame=single,
  captionpos=b
}
\usepackage{algorithm}
\usepackage{algorithmic}

\newtheorem{lemma}{Lemma}
\newtheorem{assumption}{Assumption}

\definecolor{graybckgrnd}{gray}{0.975}

\acmConference[Conference acronym 'XX]{KDD'26}{June 03--05,
  2018}{Woodstock, NY}
\acmISBN{978-1-4503-XXXX-X/2018/06}

\title{Beyond Edge Deletion: A Comprehensive Approach to Counterfactual Explanation in Graph Neural Networks}

\author{Matteo De Sanctis$\text{*}$, Riccardo De Sanctis}
\authornote{Equal contribution}
\affiliation{%
\institution{Sapienza University of Rome}
\city{Rome}
\country{Italy}
}

\author{Stefano Faralli}
\orcid{0000-0003-3684-8815}
\affiliation{%
\institution{Sapienza University of Rome}
\city{Rome}
\country{Italy}
}

\author{Paola Velardi}
\orcid{0000-0003-0884-1499}
\affiliation{%
\institution{ISTC-CNR \& Sapienza University of Rome}
\city{Rome}
\country{Italy}
}
\author{Bardh Prenkaj}
\orcid{0000-0002-2991-2279}
\affiliation{%
 \institution{Technical University of Munich}
 \city{Munich}
 \country{Germany}}

\begin{document}

\begin{abstract}
Graph Neural Networks (GNNs) are increasingly adopted across domains such as molecular biology and social network analysis, yet their black-box nature hinders interpretability and trust. This is especially problematic in high-stakes applications, such as predicting molecule toxicity, drug discovery, or guiding financial fraud detections, where transparent explanations are essential. Counterfactual explanations -- minimal changes that flip a model's prediction -- offer a transparent lens into GNNs' behavior. In this work, we introduce XPlore, a novel technique that significantly broadens the counterfactual search space. It consists of gradient-guided perturbations to adjacency and node feature matrices. Unlike most prior methods, which focus solely on edge deletions, our approach belongs to the growing class of techniques that optimize edge insertions and node-feature perturbations, here jointly performed under a unified gradient-based framework, enabling a richer and more nuanced exploration of counterfactuals. To quantify both structural and semantic fidelity, we introduce a cosine similarity metric for learned graph embeddings that addresses a key limitation of traditional distance-based metrics, and demonstrate that XPlore produces more coherent and minimal counterfactuals. Empirical results on 13 real-world and 5 synthetic benchmarks show up to +56.3\% improvement in validity and +52.8\% in fidelity over state-of-the-art baselines, while retaining competitive runtime.
\end{abstract}
\maketitle

\begin{figure}[!t]
    \centering
    \includegraphics[width=\linewidth]{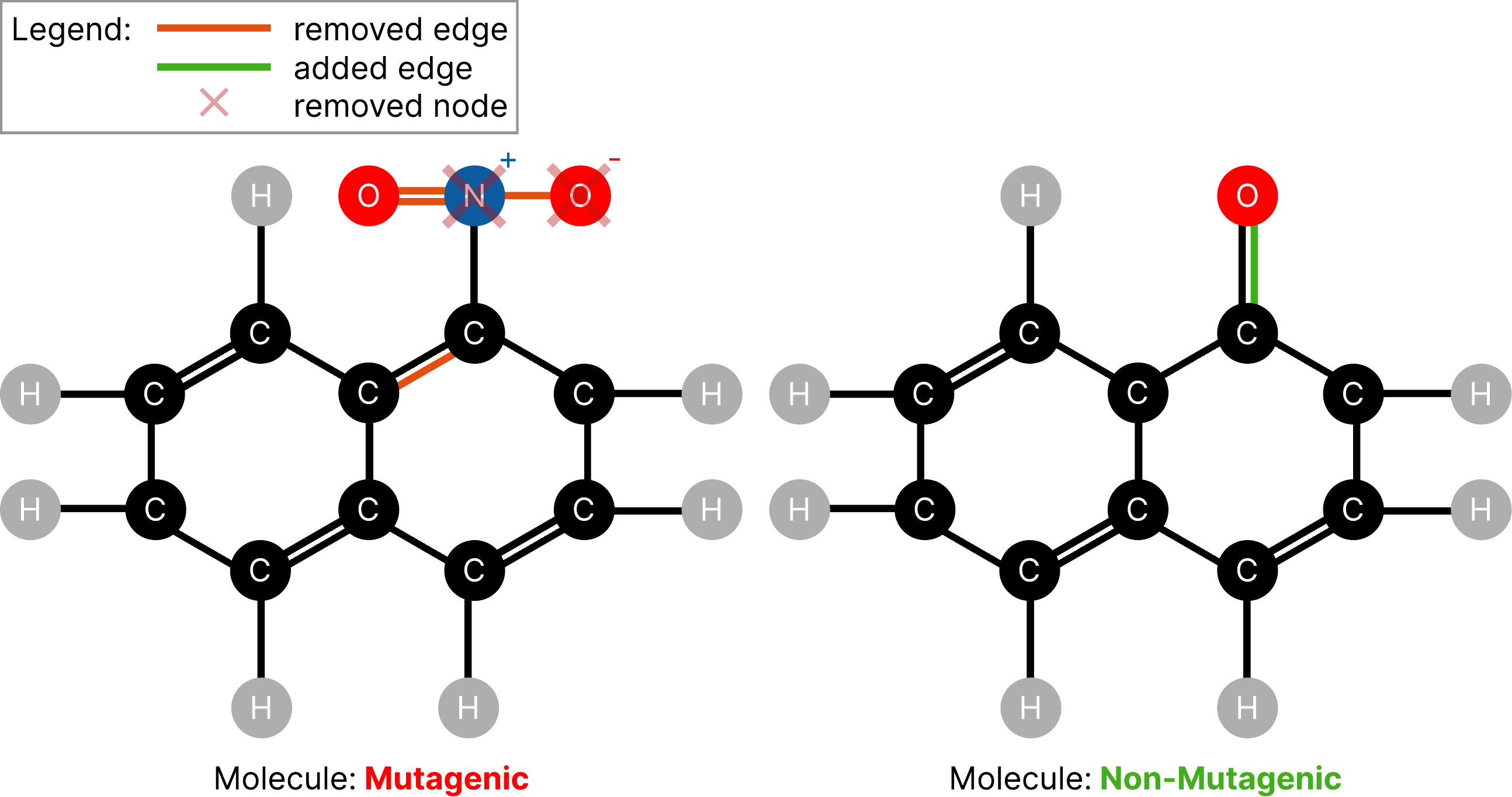}
    \caption{(left) original graph $G$ predicted mutagenic. (right) Counterfactual $G'$ predicted non-mutagenic with highlighted edge additions and removals, as well as potentially node additions to flip the prediction. Note that, although in this example, we illustrate valid chemical properties (see that the valence of atoms is respected), in practice, this might not happen as our method is domain-independent.}
    \label{fig:cf_example}
\end{figure}

\section{Introduction}

Explainability is crucial in fields such as healthcare and finance, where transparent and accountable decisions are necessary \citep{guidotti2018survey}. While deep neural networks are powerful, \citet{petch2021opening} point out that their black-box nature limits interpretability, hindering trust in sensitive applications. White-box models are more interpretable \citep{8882211} and, as \citet{verenich2019predicting} discusses, are often better suited for regulated settings; however, they frequently underperform compared to black-box models on complex, high-dimensional tasks \citep{aragona2021coronna,ding2019effective}.

GNNs have gained significant attention \citep{10.1145/3554981}, but like other deep learning models, they lack interpretability. To mitigate this, post-hoc methods, particularly counterfactual (CF) explanations, aim to reveal how changes to the input affect predictions. Recently, Graph Counterfactual Explainability (GCE) has emerged as a key area \citep{prado2022survey}. As an example, we illustrate a graph-level counterfactual on the MUTAG dataset \citep{debnath1991structure}, where each graph represents a molecule and the label indicates its mutagenicity.\footnote{Note that, in this dataset, the molecules can also be synthetic and not real compounds.} Given an input graph $G$ predicted as mutagenic, we want to produce a valid counterfactual $G'$  that flips the prediction while minimizing a graph distance that jointly accounts for structural (edge) and attribute (node-feature) edits. The resulting counterfactual highlights the smallest set of modifications that changes the model decision--see~\Cref{fig:cf_example}.

GCE techniques can be broadly categorized into search-based, heuristic-based, and learning-based approaches \citep{prado2022survey}. Search-based methods identify counterfactuals by searching within the existing data distribution. Heuristic-based approaches modify the input graph $G$ to create a perturbed version $G'$, ensuring that a prediction model $\Phi$ produces different classifications ($\Phi(G) \neq \Phi(G')$). These methods, however, require carefully designed heuristics, which often rely on domain expertise. For instance, generating chemically valid counterfactuals for molecular graphs necessitates an understanding of atomic valences and bonding rules. Learning-based strategies, in contrast, automate the discovery of meaningful perturbations by training on data, allowing for the generation of counterfactual examples at inference time.

In this work, we introduce XPlore, a significant advancement over the framework by \citet{lucic2022cf} for generating GNN counterfactual explanations. While prior methods often restrict modifications to edge deletions, XPlore substantially broadens the search space by enabling edge additions and node-feature perturbations within a unified framework. This expansion enables a more comprehensive exploration of the graph's structure and attributes, resulting in higher-quality, more diverse counterfactuals.
Central to our approach is a specialized loss function that treats the target GNN as a fixed ``oracle''. By directly leveraging oracle gradients -- the directional signals from the model's internal loss -- XPlore autonomously guides minimal perturbations to identify the closest counterfactual instance. This design increases transparency by construction. Given access to oracle gradients, the explainer introduces no auxiliary surrogate or generative model and performs no additional learning. As a result, the explanation pipeline remains lightweight, faithful to the classifier's behavior, and straightforward to inspect and audit.

Our contributions are as follows.

\noindent\textbf{(1) Comprehensive Input Perturbation:} Our framework supports edge additions, edge removals, and node feature perturbations, enabling perturbation of nearly all input degrees of freedom. This richer perturbation space allows us to explore how structural and attribute-level changes jointly influence the oracle's prediction process and to uncover richer, truly counterfactual motifs that edge-only methods cannot express. 

\noindent\textbf{(2) Gradient-Guided Optimization:} By exploiting the properties of the loss, we find counterfactuals that are locally closest in the loss-optimization landscape, as guided by directed gradient-based modifications (\Cref{app:loc_min_convergence}). This guarantees that the counterfactual explanation is not only minimal under the objective but also consistent with the oracle's learned decision boundaries. 

\noindent\textbf{(3) Mitigation of Out-of-Distribution Issues:} We introduce a cosine similarity metric to quantify both structural and semantic fidelity of counterfactuals. Together with the use of edge additions and feature perturbations, this mitigates---although not fully solves---out-of-distribution artifacts \citep{chen2023d4explainerindistributiongnnexplanations} and preserves class-relevant discriminative cues. Discussion follows in \Cref{sec:ood_effect}. 

\noindent\textbf{(4) Domain coverage and Strong Empirical Performance:} We evaluate our method on a diverse suite of real-world and synthetic graph benchmarks and show that it consistently outperforms state-of-the-art methods in validity and fidelity while maintaining competitive runtime and semantic coherence.

\section{Related Work}
We distinguish between inherently explainable and black-box methods, focusing on counterfactual explanations for graph classification. While counterfactuals are well-established in images and text \citep{vermeire2022explainable,Zemni_2023_CVPR}, their application to graphs is less common \citep{liu2021multi,ma2022clear,nguyen2022explaining,numeroso2021meg,tancf2}. As categorized in \citet{prado2022survey}, graph counterfactual explanation methods fall into heuristic search and learning-based approaches. This work focuses on instance-level, learning-based explainers that identify minimal perturbations to flip a prediction, providing actionable, data-point-specific insights.

Learning-based strategies often use perturbation matrices \citep{tancf2}, reinforcement learning (RL) \citep{numeroso2021meg}, or generative models \citep{ma2022clear,prado2024robust}. Some notable methods include CF-GNNExp. \citep{lucic2022cf}, which learns a binary perturbation matrix; CF$^\text{2}$ \citep{tancf2}, which balances factual and counterfactual reasoning through multi-objective optimization; and CLEAR \citep{ma2022clear}, which uses a Variational Autoencoder (VAE) to generate counterfactuals. Other approaches, such as RSGG-CE \citep{prado2024robust} and D4Explainer \citep{chen2023d4explainerindistributiongnnexplanations}, rely on advanced techniques like Generative Adversarial Networks (GANs) and discrete denoising diffusion, respectively. {Additional methods are INDUCE \citep{verma2024induce} which is based on a RL-based inductive approach; COMBINEX \citep{giorgi2025combinex} and C2Explainer \citep{ma2025c2explainer}.} Emerging research on time-related graph counterfactuality \citep{prenkaj2024unifying, qu2024greedy} is outside the scope of our work.

Unlike SoTA, we provide graph- and node-level explanations. In the former, we identify changes needed to alter the prediction for an entire graph. In the latter, we leverage loss information from individual nodes, enabling us to perturb both the local graph structure and node features to achieve the targeted objective of changing a single node's label. Here, an oracle performs node-wise classification, and our method focuses on modifying the inputs that directly influence the prediction of the targeted node.

\section{Method} \label{sec:Method}
\subsection{Problem Formulation}

\paragraph{Graph Counterfactual Explanation}
Suppose we have a well-trained GNN serving as an oracle $\Phi: \mathcal{G}\rightarrow\mathcal{Y}$ and an original graph instance $G \in \mathcal{G}$, with predicted label $\Phi(G)$. The objective of counterfactual explanation is to find a perturbed graph $G'$, obtained by a counterfactual model $\mathcal{E}: \mathcal{G} \rightarrow \mathcal{G}$, that minimally deviates from $G$ while ensuring that the oracle's prediction changes, i.e., $\Phi(G') \neq \Phi(G)$ \citep{prado2022survey}.  Let $\Delta(G,G')$ denote the distance function measuring the difference between $G$ and $G'$; then the problem can be stated as:
\begin{equation}
G^* = \underset{G' \in \mathcal{G}'}{\arg\min} \;\Delta(G, G')
\text{ s.t }  \Phi(G) \neq \Phi(G').
\end{equation}%

\begin{table}[!t]
\centering
\caption{Notation used in the Section~\ref{sec:Method}.}
\label{tab:notation}
\resizebox{\linewidth}{!}{
\begin{tabular}{ll}
\toprule
\textbf{Symbol} & \textbf{Description} \\
\midrule
$\mathcal{E}, \Phi$& Counterfactual and oracle model\\
$G, G'$ & Original and counterfactual graphs \\
$A, \hat{A}$ & Adjacency matrices (original and counterfactual) \\
$P, \bar{P}$ & Original and new perturbation matrices for edges \\
$N, \bar{N}$ & Binary and continuous node feature perturb. matrices \\
$X, W$ & Node feature and weight matrices  \\
$\bar{D}$ & Degree matrix \\
$\Gamma$ & Probability weight matrix with values $\gamma_{i,j} \in [0,1]$\\
$\sigma(\cdot)$ & Sigmoid activation function\\
 softmax($\cdot$) & Softmax function\\
  $\mathcal{T}_\alpha (\cdot)$& Entry-wise threshold: $\mathcal{T}_{\tau}(X) = \mathds{1} \{\sigma(X)\geq \alpha \}$\\
$\odot$ & Element-wise (Hadamard) product \\
 $\mathbf{1}\{\cdot\}$& Indicator function\\
$K$ & Number of optimization iterations (e.g. 50, see \ref{app:hyperparams})\\
$L_{pred}, L_{dist}$ & Prediction and distance losses
\\
\bottomrule
\end{tabular}%
}
\end{table}
As done in \citet{lucic2022cf}, we generate counterfactuals by reformulating the above hard-constrained formulation into a soft-unconstrained optimization by minimizing the loss function: 
\begin{equation}\label{eq:loss}
L(G, G^\prime) = L_{\text{pred}}(G, G^\prime \mid \Phi)+\beta \, L_{\text{dist}}(G, G^\prime).
\end{equation}%
Here, $L_{pred}$ is a prediction loss that encourages $\Phi(G^\prime)\ne \Phi(G)$ and $L_{dist}$ is a distance loss that promotes similarity between $G^\prime$ and $G$, with the trade-off controlled by $\beta$. Thus, the optimal counterfactual example for $G$ is obtained by solving \Cref{eq:loss}.

\paragraph{Node Counterfactual Explanation}
Our method can be extended to node-level counterfactual explanations. In this scenario, rather than modifying only the target node, we perturb the entire graph, including all node features. Moreover, the loss formulation can be adapted to encompass a set of nodes by imposing a soft constraint that preserves the class labels of nodes other than the target, using binary cross-entropy with logits as the loss function. This framework naturally generalizes to multi-node classification tasks, where the objective is to generate counterfactual explanations for a set of nodes rather than a single node.

\begin{figure*}[!t]
    \centering    
    \begin{subfigure}{0.165\linewidth}
        \centering
        \includegraphics[width=\linewidth]{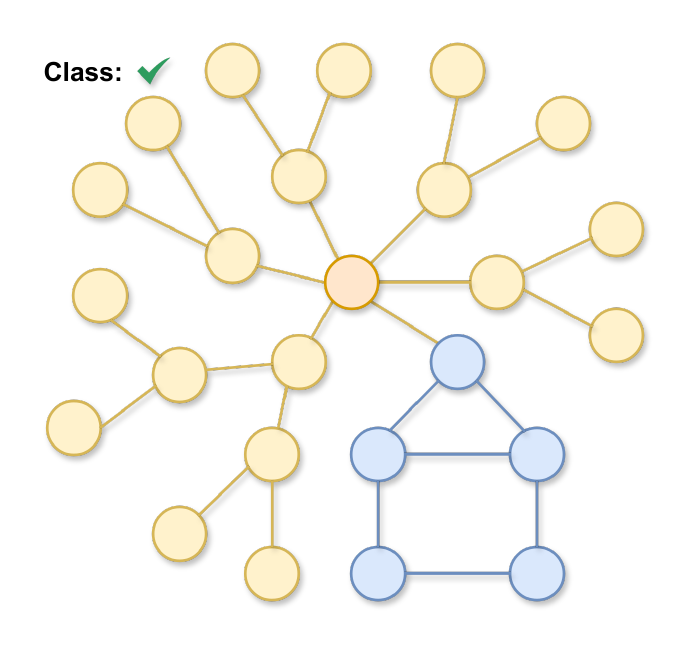}
        \caption{BAS\\house motif.}
    \end{subfigure}%
    \hfill
    \begin{subfigure}{0.165\linewidth}
        \centering
        \includegraphics[width=\linewidth]{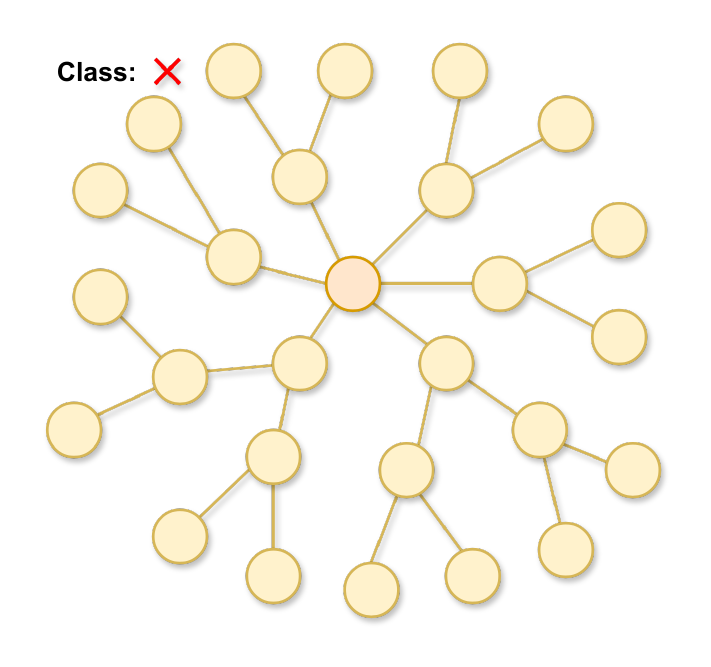}
        \caption{TCR\\No cycle.}
    \end{subfigure}%
    \hfill
    \begin{subfigure}{0.165\linewidth}
        \centering
        \includegraphics[width=\linewidth]{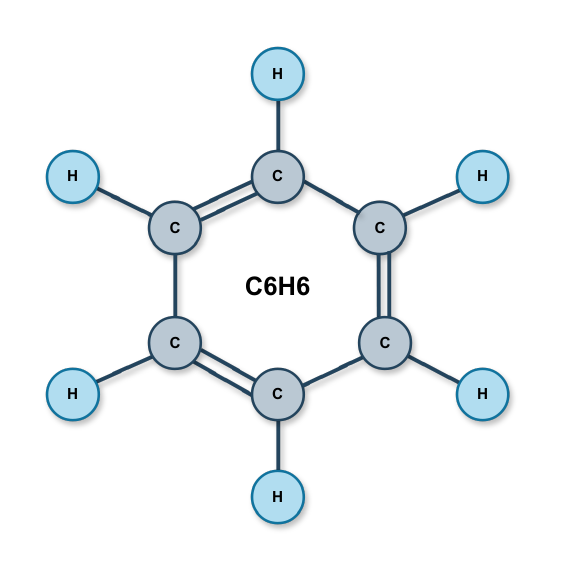}
        \caption{{Benzene\\neutral charge.}}
    \end{subfigure}%
    \hfill
    \begin{subfigure}{0.165\linewidth}
        \centering
        \includegraphics[width=\linewidth]{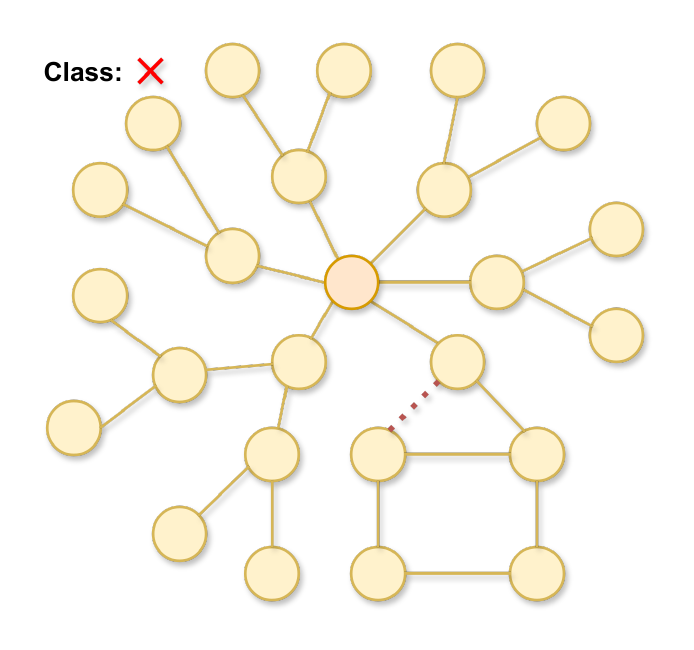}
        \caption{BAS\\edge deletion.}
    \end{subfigure}%
    \hfill
    \begin{subfigure}{0.165\linewidth}
        \centering
        \includegraphics[width=\linewidth]{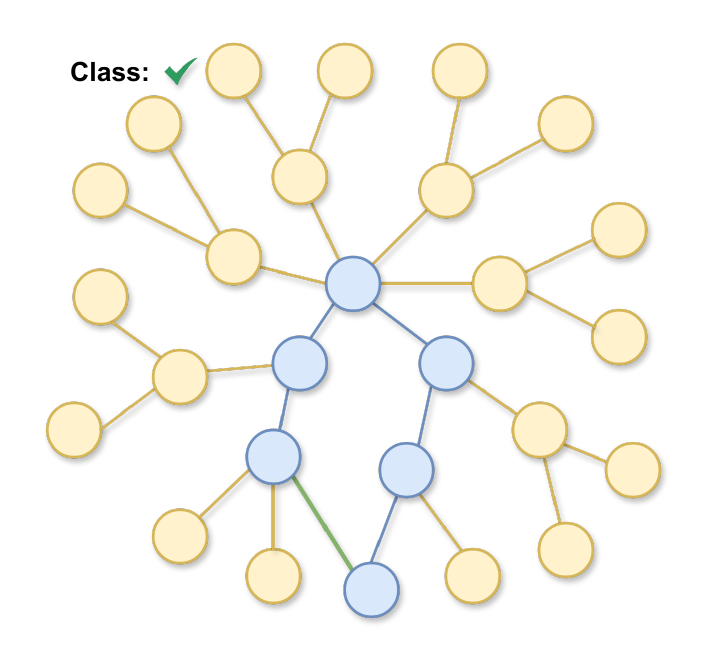}
        \caption{TCR\\edge insertion.}
    \end{subfigure}%
    \hfill
    \begin{subfigure}{0.165\linewidth}
        \centering
        \includegraphics[width=\linewidth]{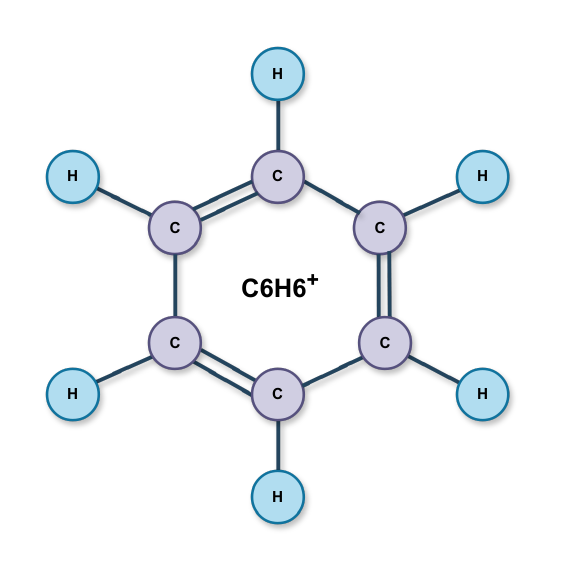}
        \caption{{Benzene\\pos. charge.}}
    \end{subfigure}
    
    \caption{Illustration of counterfactual explanations in graph classification. \textit{a} and \textit{d}: The class changes only after an \textbf{edge deletion} (red, dashed).  \textit{b} and \textit{e}: the predicted class changes only after an \textbf{edge addition} (green). {\textit{c} and \textit{f}: Feature perturbation of the carbon-group illustrates benzene converting from neutral to cationic by electron loss.} XPlore enables these types of perturbations, alone or in combination, offering a broader search space than edge deletion-only methods.}
    \label{fig:grid}
\end{figure*}

\subsection{Proposed Method}
We propose XPlore to sieve the less-constrained counterfactual search space compared to the base model \citep{lucic2022cf}, aiming at better explanations while reducing the out-of-distribution effect shown in \citet{chen2023d4explainerindistributiongnnexplanations}. XPlore not only drops instance edges but also adds them, enabling the discovery of counterfactual explanations for which a class change is unattainable by edge removal alone (e.g., TCR in~\Cref{sec:datasets}; see  ~\Cref{fig:grid}). Moreover, by modifying node features, our approach can identify counterfactuals in which a class change is unachievable solely through edge modifications (or any amount thereof), but rather through feature adjustments. We explored two modalities for handling features: i.e., \textbf{(1)} applying a gating mechanism to retain or discard node features, similarly to the edge-dropping process introduced in \citet{lucic2022cf}; \textbf{(2)} allowing for continuous adjustment in node features, enabling their values to increase/decrease smoothly. We provide formal guarantees on convergence, on {$\ell_1$}-minimality of perturbations, and semantic fidelity in Appendix ~\ref{app:appendix}.

\paragraph{Original Adjacency matrix perturbations} Let $A~\in~\{0,1\}^{n\times n}$ be the adjacency matrix -- a square matrix whose elements indicate whether pairs of vertices are adjacent or not in the graph. We initially define the perturbation similarly to \citet{lucic2022cf} , having the CF generated matrix $\hat{A} = P \odot A$, with $P = \mathbf{1}_{n\times n}
$, a binary perturbation matrix full of ones, with same dimensionality of $A$; and $\odot$ denoting element-wise (Hadamard) product (a real-valued $\bar{P}$ is first generated, then a threshold with a entry-wise sigmoid transformation $\sigma(\cdot)$ is performed to obtain a binary ${P}$, either discarding or retaining edges: $\mathcal{T}_{0.5}(\bar{P}) = \mathds{1} \{\sigma(\bar{P})\geq 0.5\}$). To include self-message passing, we add self-loops in the form of the identity matrix during CF search. We define $X$ as the node feature matrix (so that $G = (A, X)$), $W$ as the weight matrix, and $\bar{D}$ as the degree matrix, a diagonal matrix containing the number of edges attached to each vertex, based on $P \odot A+I$. The original goal is to remove edges by zeroing out entries in the adjacency matrix, so find $P$ (which acts as a gate) that minimally perturbs $A$ and use it to compute $\hat{A}$. Hence, the original counterfactual generating model, parameterized by $P$, is:
\begin{equation}\label{eq:original_formulation}
    \mathcal{E}(A, X, W; {P})\!=\!\mathrm{softmax}\big[\!\bar{D}^{-\tfrac{1}{2}}\!({P} \odot A\! +\!I)\bar{D}^{-\tfrac{1}{2}}\!XW\big].
\end{equation}%
Note that here, differently from \citet{lucic2022cf}, we perform the CF search not on a subgraph neighbourhood of a node, but rather on the whole graph.\footnote{This does not entail an add-on on the computational complexity as shown in \Cref{sec:complexity}.} For undirected graphs, only the upper triangular part of $\bar{P}$ is parametrized, and it is symmetrized at each step to obtain $\hat{A}$. This ensures that adjacency perturbations remain symmetric.

\paragraph{Edge-masking with Free Insertions}
The original formulation of the framework has some downsides when it comes to updating $P$: when the oracle prediction is obtained with the associated loss, and new edges need to be added. Essentially, gradients for adjacency and perturbation matrix entries are computed and flow back, but edges not present in $A$ {($A_{ij} = 0$)} cause respective gradients of $P$ to be zeroed out {($\frac{\partial \mathcal{L}}{\partial P_{ij}} 
= \frac{\partial \mathcal{L}}{\partial \hat{A}_{ij}} \cdot \frac{\partial \hat{A}_{ij}}{\partial P_{ij}}, \;
\hat{A}_{ij} = A_{ij} \cdot P_{ij} 
\Rightarrow 
\frac{\partial \hat{A}_{ij}}{\partial P_{ij}} = A_{ij}.$)} -- \emph{As a results edges not initially present in $A$ cannot be inserted during CF search by design choice, and gradients of missing edges indicating the direction of optimal change, even if are back-propagated until $P$,  will not lead to any update to it, thus introducing confusion for other updates that may be co-adapted/correlated to them}. To enable the inclusion of any edge, we swap the roles of $A$ and $P$:  initializing $\bar{P} \leftarrow A$, hence making it store the original adjacencies, but making it able to be updated by the iterative gradients; and making $\bar{A} = \mathbf{1}_{n\times n}$ be a matrix full of ones, fully connecting the graph. We account for the original missing edges by removing them from $\bar{P}$ (setting their values to zero), now $\hat{A}$ is computed as $\hat{A} = \bar{P}\odot \bar{A}$ plus self-loops. With this swap, we give the explainer the ability to freely add edges by updating the entire graph accordingly to the loss. $\bar{A}$ now becomes a (redundant) matrix of ones whereas $\bar{P}$ stores the graph connections, ready to be perturbed. We add to $P$ a small Gaussian noise ($std: \sigma=0.1; \mu = 0$) to break the symmetry in the gradient updates.

\begin{table}[!t]
\centering
\caption{XPlore steps to perturb edges and node features.}
\resizebox{\linewidth}{!}{
\begin{tabular}{@{}cp{0.8\linewidth}@{}}
\toprule
\textbf{Step} & \textbf{Action}\\ \midrule
\textbf{1}    & Initialize $\bar{P} \leftarrow A+\mathcal{N}(0, 0.1^2)$ and optionally add $\Gamma$.\\
\textbf{2}    & Initialize $\bar{N} \gets \mathbf{1}_{n \times f}$\\
\textbf{3}    & Compute $\hat{A} = \textit{max}(\mathcal{T}_{0.5}(\bar{P}), I) $\\
\textbf{4}    & Compute contribution of $\hat{A}$ and $\bar{N}$ to $L_{pred}$ \newline through the oracle's prediction (\Cref{eq:final_formulation,eq:loss_pred}).\\
\textbf{5}    & Update $\bar{P}$ and $\bar{N}$ via gradient descent. \\ \bottomrule
\end{tabular}%
}
\end{table}

We also assign a probability weight to missing edges, denoted by $\bar{P}$: we define a matrix $\Gamma \in [-1,1]^{n\times n}$. $\Gamma$ is added to $\bar{P}$ at initialization: $\bar{P} \leftarrow \bar{P}+\Gamma$. The idea is to manually adjust each $\gamma_{ij} \in [-1,1]$ so that by adding $\Gamma$ it is possible to induce edge presence or absence. For example, we may be interested in a counterfactual explanation with specific connections; we can directly encode them in $\Gamma$.
Note that we swapped $P$ and $A$ to be consistent with the original formulation: since $A$ now consists of a matrix of ones, it is redundant for updating $\bar{P}$ and can be omitted. Now, the explainer is as in~\Cref{eq:new_formulation}:

\begin{algorithm}[!t]
\caption{XPlore.}\label{alg:XPlore}

\resizebox{0.9\columnwidth}{!}{
\begin{minipage}{\columnwidth}
\begin{algorithmic}[1]
  \STATE \textbf{Input} graph $G=(A,X)$, trained oracle $\Phi$, explainer $\mathcal{E}$, loss function $L$, learning rate $\alpha$, number of iterations $K$, matrix $\Gamma$, node features $gate$ flag, distance function $d$.
  \STATE $y\gets\Phi(G)$ 
  \STATE $\bar{P} \gets A+\mathcal{N}(0,0.01)+\Gamma$
  \STATE $\bar{N} \gets \mathbf{1}_{n \times f}$
  \STATE $G^*= [ \ ]$   
  \STATE \textbf{for} $K$ iterations \textbf{do}
    \STATE \hspace{\algorithmicindent}  $G', G^* =\; \text{\texttt{GET\_CF\_EXAMPLE()}}$
    \STATE \hspace{\algorithmicindent}  $L \gets L(G, G', \Phi)$  \COMMENT{\textcolor{gray}{$\blacktriangleright$ Eqns. \ref{eq:loss_pred}-\ref{eq:loss_dist}}}
    \STATE \hspace{\algorithmicindent}  $\bar{P} \gets \bar{P} - \alpha \nabla_{\bar{P}}L$  \COMMENT{\textcolor{gray}{$\blacktriangleright$ Update $\bar{P}$}}
    \STATE \hspace{\algorithmicindent} $\bar{N} \gets \bar{N} - \alpha \nabla_{\bar{N}}L$ \COMMENT{\textcolor{gray}{$\blacktriangleright$  Update $\bar{N}$}}
  \STATE \textbf{return} $G^*$
  \STATE
  \STATE \textbf{func} \texttt{GET\_CF\_EXAMPLE()}
  \STATE \hspace{\algorithmicindent} $\hat{A} \gets max(\mathcal{T}_{0.5}(\bar{P}),I)$ \COMMENT{\textcolor{gray}{$\blacktriangleright$ Perturbed adj.}}
  \STATE \hspace{\algorithmicindent} \textbf{if} $gate$ \textbf{then}  \COMMENT{\textcolor{gray}{$\blacktriangleright$ Gate node feat.}}
  \STATE \hspace{\algorithmicindent} \hspace{\algorithmicindent} ${N} \gets \mathcal{T}_{0.5}(\sigma(\bar{N}))$
  \STATE \hspace{\algorithmicindent} \textbf{else} \COMMENT{\textcolor{gray}{$\blacktriangleright$ Freely perturb node feat.}}
    \STATE \hspace{\algorithmicindent} \hspace{\algorithmicindent} ${N} \gets \bar{N}$
    \STATE \hspace{\algorithmicindent} $N \leftarrow N \odot X$
  \STATE \hspace{\algorithmicindent} ${G}'_{cand} \gets (\hat{A}, {N})$
  \STATE \hspace{\algorithmicindent} \textbf{if} $\Phi(G) \neq \Phi({G}'_{cand})$ \textbf{then}
    \STATE \hspace{\algorithmicindent} \hspace{\algorithmicindent} ${G}' \gets {G}'_{cand}$
    \STATE \hspace{\algorithmicindent} \hspace{\algorithmicindent} \textbf{if} not ${G}^*$ \textbf{then}
      \STATE \hspace{\algorithmicindent} \hspace{\algorithmicindent}\hspace{\algorithmicindent} ${G}^* \gets {G}'$  \COMMENT{\textcolor{gray}{$\blacktriangleright$ First CF}}
    \STATE \hspace{\algorithmicindent} \hspace{\algorithmicindent} \textbf{else if} $d(G, {G}') \leq d(G, {G}^*)$ \textbf{then}
      \STATE \hspace{\algorithmicindent} \hspace{\algorithmicindent}\hspace{\algorithmicindent} ${G}^* \gets {G}'$  \COMMENT{\textcolor{gray}{$\blacktriangleright$ Closer CF}}
  \STATE \hspace{\algorithmicindent} \textbf{return} $G', {G}^*$
\end{algorithmic}
\end{minipage}
}
\end{algorithm}

\begin{equation}\label{eq:new_formulation}
    \mathcal{E}(X, W; \bar{P}) = \mathcal{S} \big[\bar{D}^{-1/2} \hat{A} \bar{D}^{-1/2}XW\big],
\end{equation}%
where $\mathcal{S}$ is either the element-wise sigmoid or the softmax function, depending on whether the task is binary or multi-class; and $\bar{D}$ is based on $\hat{A}$, slightly different from \Cref{eq:original_formulation}, ensuring that values in the diagonal are ones, specifically when self-loop are already present -- due to $\hat{A}$ having no more the diagonal entries fixed at zero as it depends on $\bar{P}$  for storing perturbed adjacencies and not on $A$ any more.

\paragraph{Node Features Perturbation} Similarly to what has been done for the edge perturbation matrix $\bar{P}$, we can introduce a perturbation matrix $\bar{N} = \mathbf{1}_{n\times f}$, $f$ number of node features, to perform continuous perturbation on the node features. We can apply two different mechanisms (\Cref{alg:XPlore}): i.e., \textbf{(1)} gate the feature values with a sigmoid, by applying the sigmoid and then the threshold to have a binary mask, and \textbf{(2)} let them freely change, updating them proportionally to the gradient information of the loss. In results, these mechanisms added on top of the edges addition, are defined as XPlore w/ gating and XPlore w/ freedom. Hence, the final formulation of our explainer, considering all perturbations, is: 
\begin{equation}
N  = \mathcal{T}_{0.5}(\bar{N}) \text{ if } gate \text{ else } \bar{N}
\end{equation}
\begin{equation}\label{eq:final_formulation}
    \mathcal{E}(X, W; \bar{P}, \bar{N}) = \mathcal{S} \big[\bar{D}^{-1/2}\hat{A}\bar{D}^{-1/2}(X\odot {N})W\big].
\end{equation}%
\paragraph{Loss function optimization}
We generate $\bar{P}$ by minimizing ~\Cref{eq:loss}, defining the prediction loss $L_{\text{pred}}$ as in ~~\Cref{eq:loss_pred}: for $L_{logits}$ we employ the cross-entropy (CE) loss for single-label classification tasks, encompassing binary and multi-class scenarios, and the binary cross-entropy with logits loss for multi-label classification tasks:
\begin{equation}\label{eq:loss_pred}
\begin{gathered}
    {L}_{\mathrm{pred}}(G, {G}' \;|\; \Phi) = - \mathbf{1} \big[\Phi(G) = \Phi({G}')\big]\\ \cdot {L}_\mathrm{logits}(\Phi_{\ell-1}(G), \Phi_{\ell-1}(G')),
\end{gathered}
\end{equation}%
where $\Phi_{\ell-1}(G)$ outputs the logits given the input $G$, and $\ell$ is the number of layers. $G'$ is $\mathcal{E}(G)$.
The $L_{dist}$ in~\Cref{eq:loss} is the element wise distance between $G$ and $G'$, corresponding to the sums of the two L\textsubscript{p}-norms of $A$ and $A'$, and $X$ and $X'$:
\begin{equation}\label{eq:loss_dist}
    L_\mathrm{dist}(G,{G}') = \|A - {A}'\|+\|X - {X}'\|.
\end{equation}

\begin{table*}[!t]
\centering
\caption{Comparison of XPlore with SoTA methods (validity $\uparrow$ -- up; fidelity $\uparrow$ -- down). \textbf{Bold} is best-performing; \underline{underline} is second-best. \textbf{XPlore is best in 17/18 on validity, and 17/18 on fidelity (of which 1/17 on par with RSGG-CE).} \Cref{tab:results_extended,tab:results_extended_new_row} show a detailed comparison of other metrics. $\dag$ indicates the second-best explainer.}
\label{tab:results}
\setlength{\tabcolsep}{1mm}
{\fontsize{9}{11}\selectfont
\resizebox{\textwidth}{!}{
\begin{tabular}{@{}lcccccccccccccccccc@{}}
\toprule
\multicolumn{1}{c}{\textbf{}} & \rotatebox{90}{TCR} & \rotatebox{90}{BAS} & \rotatebox{90}{BZR} & \rotatebox{90}{AIDS} & \rotatebox{90}{ENZYMES} & \rotatebox{90}{Fingerprint} & \rotatebox{90}{COLORS-3} & \rotatebox{90}{TG} & \rotatebox{90}{MUTAG} & \rotatebox{90}{COX2} & \rotatebox{90}{BBBP} & \rotatebox{90}{PROTEINS} & \rotatebox{90}{COLLAB} & \rotatebox{90}{TRIANGLES}  &\rotatebox{90}{{DBLP}} &\rotatebox{90}{{IMDB}} &\rotatebox{90}{{TWITTER}}&\rotatebox{90}{{MSRC}}\\ \midrule
iRand & 27.92 & 50.70 & 27.16 & 0.00 & 26.67 & 0.09 & 42.99 & 36.16 & 2.66 & \underline{69.81} & 19.76 & 18.87 & 4.60 & 6.38  & 1.17& 3.70& 0.00&\underline{95.74}\\
CF$^\text{2}$ & 50.04 & 45.78 & 19.75 & 0.10 & 68.33 & 24.52 & 52.07 & 49.86 & 0.00& 24.20 & \underline{25.26} & 16.35 & \underline{52.66} & 37.13  & 5.76& 50.60& 38.82&90.94\\
CLEAR & 50.68 & 50.96 & \underline{60.49} & 16.75& 83.17 & 72.73 & 0.00 & 58.40& 35.11 & 22.06 & 22.90 & 0.00 & 0.00 & 89.99  & 0.68& 56.20& 7.65&23.98\\
RSGG-CE $\dag$ & \underline{67.90}& \underline{91.04} & 21.23 & \underline{19.80} & \underline{98.33} & \underline{90.46} & \underline{94.57}& \underline{89.28} & \underline{56.91}& \textbf{99.36} & 22.90 & \underline{58.67} & 0.00 & \underline{99.84}  & \underline{57.51}& \textbf{86.10}& \underline{48.36}&\textbf{100.0}\\
D4Explainer & 44.82 & 35.16 & 20.00 & 0.10 & 68.00 & 24.52 & 0.00 & 49.86 & 9.57& 22.06& 21.24 & 0.00 & 0.00 & 31.11  & 7.02& 49.60& 23.45&90.94\\ \midrule
CF-GNNExpl & 50.04& 44.18 & 19.75 & 0.10 & 68.33 & 24.52 & 52.07 & 49.86 & 10.11& 22.06 & 22.31 & 16.35 & \underline{52.66} & 37.13  & 5.76& 50.60& 38.82&90.94\\
XPlore & \textbf{100.0}& \textbf{100.0}& \textbf{100.0}& \textbf{32.30}& \textbf{100.0}& \textbf{100.0}& \textbf{100.0}& \textbf{100.0}& \textbf{67.55}& \textbf{99.36} & \textbf{81.51}& \textbf{65.41}& \textbf{100.0} & \textbf{100.0}  & \textbf{91.84}& \underline{78.80}& \textbf{50.71}&\textbf{100.0}\\ \bottomrule
\end{tabular}%
}
}
\setlength{\tabcolsep}{1mm}
{\fontsize{9}{11}\selectfont
\resizebox{\textwidth}{!}{
\begin{tabular}{@{}lcccccccccccccccccc@{}}
& & & & & & & & & & & & & & & & & & \\
\toprule
iRand & 0.279 & 0.507 & 0.262 & 0.000& 0.238 & 0.001& 0.300 & 0.356 & 0.005 & \underline{0.677} & \underline{0.275} & 0.183 & 0.006 & 0.053  & 0.06& 0.026& 0.000&\underline{0.391}\\
CF$^\text{2}$ & 0.500 & 0.457 & 0.188 & 0.001 & 0.633& 0.133 & 0.398 & 0.499 & 0.005 & 0.229 & 0.253 & 0.151 & \underline{0.262} & 0.364  & 0.027& 0.366& 0.345&0.371\\
CLEAR & 0.507 & 0.510 & \underline{0.595} & 0.168& 0.797 & 0.515 & 0.000& 0.584& 0.309 & 0.208 & 0.229 & 0.000& 0.000& 0.892  & 0.005& 0.420& 0.039&0.012\\
RSGG-CE $\dag$ & \underline{0.679}& \underline{0.910} & 0.202 & \underline{0.198} & \underline{0.930} & \underline{0.653} & \underline{0.824} & \underline{0.888} & \underline{0.516}& \textbf{0.968} & 0.229 & \underline{0.556} & 0.000& \underline{0.987}  & \underline{0.450}& \textbf{0.664}& \underline{0.434}&\textbf{0.423}\\
D4Explainer & 0.448 & 0.446 & 0.190 & 0.001 & 0.632 & 0.133 & 0.000& 0.499 & 0.021& 0.208& 0.212 & 0.000& 0.000& 0.302  & 0.031& 0.358& 0.161&0.371\\ \midrule
CF-GNNExpl & 0.500& 0.442 & 0.188 & 0.001 & 0.633 & 0.133 & 0.398 & 0.499 & 0.005& 0.208 & 0.223 & 0.151 & \underline{0.262} & 0.364  & 0.027& 0.366& 0.345&0.371\\
XPlore & \textbf{1.000} & \textbf{1.000} & \textbf{0.980} & \textbf{0.323}& \textbf{0.942} & \textbf{0.730}& \textbf{0.896}& \textbf{0.994} & \textbf{0.548} & \textbf{0.968} & \textbf{0.803}& \textbf{0.627} & \textbf{0.570}& \textbf{0.988}  & \textbf{0.748}& \underline{0.606}& \textbf{0.613}&\textbf{0.423}\\ \bottomrule
\end{tabular}%
}
}
\end{table*}

\subsection{Algorithmic Implementation}
We summarize the details of our method XPlore\footnote{\url{https://anonymous.4open.science/r/XPlore}} in \Cref{alg:XPlore}.  Given a graph in the test set G, its prediction is obtained from the GNN oracle $\Phi$. $\bar{P}$ is initialized as the adjacency matrix $A$ and summed with $\Gamma$, giving missing edges a probability value of $\gamma_{i,j} \in [-1,1]$. 
$\bar{N}$ is assigned to a matrix of ones. XPlore is run for $K$ iterations (\Cref{app:hyperparams}), at each one, an optimization step is performed to find a valid counterfactual and improve it, there is no stopping criterion as the same CF is iteratively modified and potentially improved across successive iterations, as conceived by \citet{lucic2022cf}.~\Cref{eq:final_formulation} is used to find a counterfactual example. ${P}$ is computed by applying a sigmoid transformation on $\bar{P}$ and then a threshold to obtain a binary matrix. 
Although a hard threshold $\mathcal{T}$ is applied in the forward pass,  gradients are back-propagated through this non-differentiable step via the straight-through estimator (STE) \citep{bengio2013estimatingpropagatinggradientsstochastic}, which treats thresholding as identity during backpropagation (\Cref{sec:A.2 Thresholding}). We add self-loops by summing the identity matrix. According to the node perturbation mechanism, node features are either gated or simply multiplied by the node features perturbation matrix ${\bar{N}}$. The candidate graph produced ${G}'_{cand}$ is fed to the GNN oracle $\Phi$. If the output prediction is different from the initial node, a valid counterfactual is found. Upon success, the closest counterfactual is returned as the optimal CF example ${G}^*$ after $K$ iterations. $\bar{P}$ and $\bar{N}$ are updated based on the loss, which is calculated according to \Cref{eq:loss,eq:final_formulation,eq:loss_pred,eq:loss_dist}.  This setting allows for performing updates freely by perturbing edges and node features and adding missing ones. The optimal explanation is retrieved as $\Delta_{G}^* = G - {G}^*$. The algorithm is linear in the number of edges and in the multiplication of features with the number of nodes (i.e. $O\bigl(|E|\,d+n\,d\,f\bigr)$,  \Cref{sec:complexity}).

\section{Experiments}

\subsection{Experimental Setup}
We train separate GCN oracles for each dataset using an 80-20 train-test split, with architectural and hyperparameter details provided in~\Cref{app:oracles} and~\ref{app:hyperparams}, respectively. We compare XPlore against CF-GNNExpl. \citep{lucic2022cf}, CF$^\text{2}$~\citep{tancf2}, CLEAR~\citep{ma2022clear}, RSGG-CE~\citep{prado2024robust}, D4Explainer~\citep{chen2023d4explainerindistributiongnnexplanations}, and iRand~\citep{prado2023generative}. For iRand, we set the edge perturbation probability ($p$) to 0.01 and the number of iterations ($t$) to 3.

Following the protocol from~\cite{prado2022survey}, we use a comprehensive evaluation suite, including \textit{Oracle Accuracy}, \textit{Validity}, \textit{Fidelity}, \textit{Sparsity}, \textit{Graph Edit Distance (GED)}, \textit{Oracle Calls} and \textit{Runtime}. Additionally, we use the cosine similarity (CS), which measures the semantic similarity between original and counterfactual graphs. Unlike structural metrics such as GED or sparsity, CS captures graph meaning alignment by comparing embeddings.

Given a set of graph embedders $E$ ($|E| = M$), we compute vector embeddings $\mathbf{e_{ji}} = E_j(G_i)$ and $\mathbf{e'_{ji}} = E_j(G'_i)$ for each graph $G_i$ ($|G|=N$) and its counterfactual $G'_i$, using $E_j \in E$. CS is defined as:
\begin{equation}
\text{CS}(\mathbf{e_{ji}}, \mathbf{e'_{ji}}) = \frac{1}{MN} \sum_{j = 1}^{M} \sum_{i = 1}^{N} \frac{\mathbf{e_{ji}} \cdot \mathbf{e'_{ji}}}{|\mathbf{e_{ji}}| |\mathbf{e'_{ji}}|} \in [-1, 1]
\end{equation}
The set $E$ includes following embedders: Feather-G~\citep{rozemberczki2020characteristicfunctionsgraphsbirds}, Graph2Vec~\citep{narayanan2017graph2veclearningdistributedrepresentations}, NetLSD~\citep{Tsitsulin_2018}, WaveletCharacteristic~\citep{Wang_2021}, IGE~\citep{galland:hal-02947290}, LDP~\citep{cai2022simpleeffectivebaselinenonattributed}, GeoScattering~\citep{pmlr-v97-gao19e}, GL2Vec~\citep{10.1007/978-3-030-36718-3_1}, SF~\citep{delara2018simplebaselinealgorithmgraph}, and FGDS~\citep{NIPS2017_d2ddea18}. Definitions of standard metrics are detailed in Appendix~\ref{app:metrics}.

\subsection{Experimental Results} \label{sec:results}

\begin{figure}
    \centering
    \includegraphics[width=\linewidth]{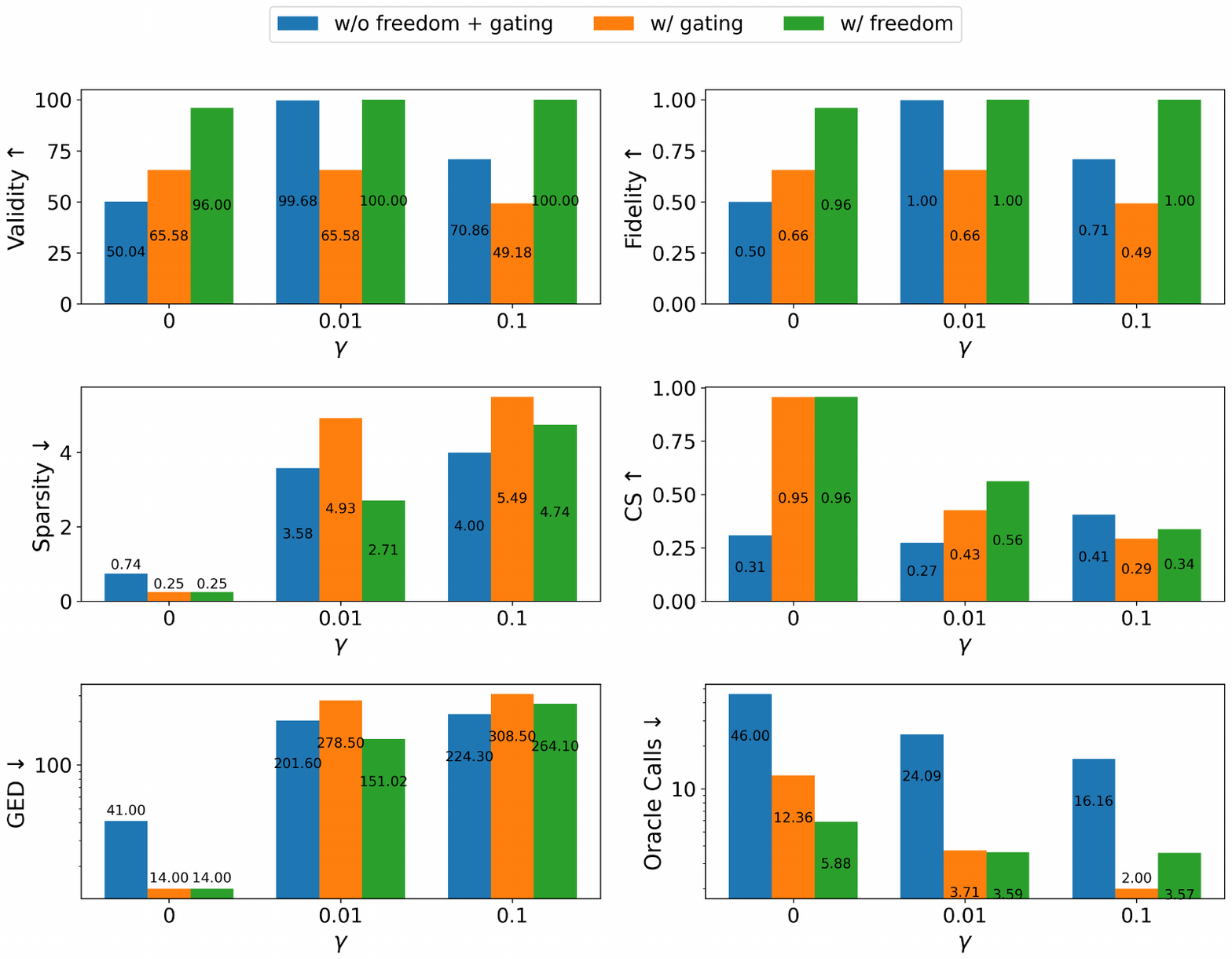}
    \caption{Metrics comparison over TCR dataset for different values of hyperparameter $\gamma$.}
    \label{fig:TCR_metrics}
\end{figure}




\begin{table}[!t]
\centering
\caption{Validity (\%) for different variants of XPlore ($\gamma=0$.01). Bold values are best.}
\label{fig:Accuracy_ablation}
\resizebox{\linewidth}{!}{%
\begin{tabular}{@{}lcccc@{}}
\toprule
\multicolumn{1}{l}{} & \multicolumn{1}{l}{w/o freedom + gating} & \multicolumn{1}{l}{w/ gating} & \multicolumn{1}{l}{w/ freedom} & \multicolumn{1}{l}{CF-GNNExpl}\\ \midrule
TCR   & 99.680          & 65.580          & \textbf{100.00}  &50.040\\
TG    & 99.660          & \textbf{100.00} & \textbf{100.00}  &49.860\\
BAS   & 94.340& \textbf{100.00}  & \textbf{100.00}  &44.180\\
MUTAG & 45.745& 60.106& \textbf{67.553} &10.106\\ 
BZR   & 98.519& \textbf{100.00} & \textbf{100.00}  &19.753\\
COX2  & \textbf{99.358}& 97.859& 98.073 &22.056\\
AIDS  & 0.300& 19.850& \textbf{30.300}&0.100\\
BBBP  & 38.892& 38.794& \textbf{79.794}&22.315\\
 ENZYMES& 76.500& \textbf{99.667}&98.833 &68.333\\
 PROTEINS& 17.071& 64.960&\textbf{65.409} &16.352\\
 Fingerprint& 26.105& \textbf{98.232}&94.742&24.523\\
 COLLAB& 55.640& \textbf{100.00}&93.380&52.660\\
 COLORS-3& 67.943& \textbf{100.00}&94.476 &52.067\\
 TRIANGLES& 39.873& 90.436&\textbf{100.00} &37.127\\ \bottomrule
\end{tabular}%
}
\end{table}

\noindent\textbf{XPlore on average achieves +15.1\% percentage improvement in validity, and +14.0\% on fidelity across the board vs. the second-best in graph classification.}~\Cref{tab:results} reports the performance of XPlore and all state-of-the-art baselines across ten runs on each dataset. See \Cref{tab:results_extended,tab:results_extended_new_row} for a broader analysis. We show the validity, fidelity, sparsity, and oracle calls, as they describe the explainer's ability to find valid counterfactuals that are also cheap to query the underlying predictor. Note that XPlore is the best across 17/18 datasets in terms of validity and 17/18 in terms of fidelity. This shows that a simple modification to the learning objective and node-feature manipulation enhances the explainer's faithfulness. XPlore maintains a relatively low number of oracle calls, indicating that in at most 14.692 (see TG) iterations, the optimal counterfactual is found (see line 6 of~\Cref{alg:XPlore}). 
Additionally, we show the relationship between GED and CS in~\Cref{fig:ged_vs_cs}. 

\begin{figure}
    \centering
    \includegraphics[width=\linewidth]{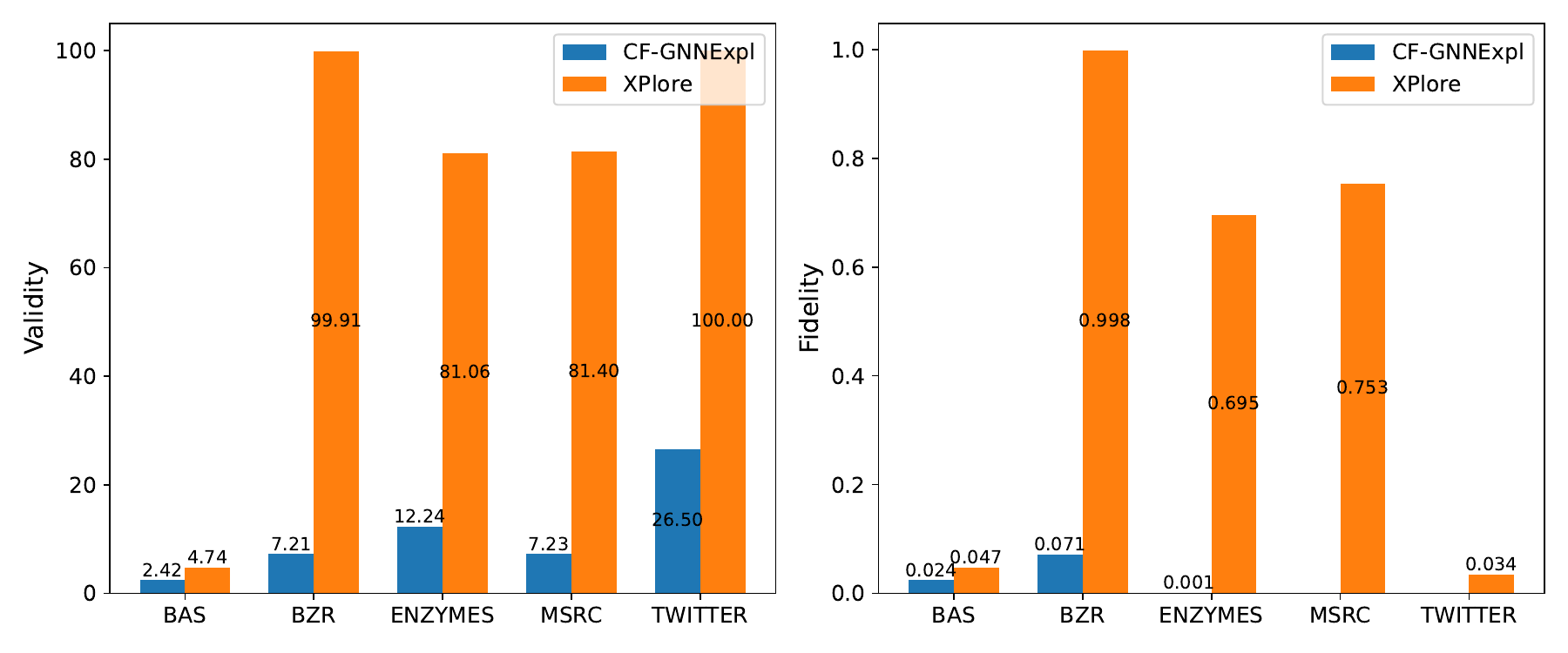}
    \caption{Validity and Fidelity results for node explanation.}
    \label{tab:NodeResults}
\end{figure}

However, we are interested in those explainers that might induce more edit changes to the counterfactual while maintaining high semantic similarity. 
Note that XPlore generally exhibits high GED values, while consistently achieving strong CS scores, underscoring our intuition that semanticity plays an important role in counterfactuality. We reserve more thorough experimentation for future work. Lastly, we report the inference-time runtime (s) for all methods to show that XPlore is at least on par with SoTA explainers (see~\Cref{fig:runtime} in the Appendix).

\noindent\textbf{XPlore outperforms CF-GNNExpl on node classification, improving per-class explainability by +62.30 validity points on average.} We compare both methods on five datasets, selecting one per domain. Despite prior claims~\citep{lucic2022cf}, generating effective node-level counterfactuals remains difficult, particularly in class-specific scenarios. As summarized in~\Cref{tab:NodeResults}, XPlore shows consistent improvements, reflecting its ability to finely manipulate node-level features.

\subsection{Residual Out-of-distribution Influence}\label{sec:ood_effect}
\begin{figure}[!t]
    \centering
        \includegraphics[width=\linewidth]{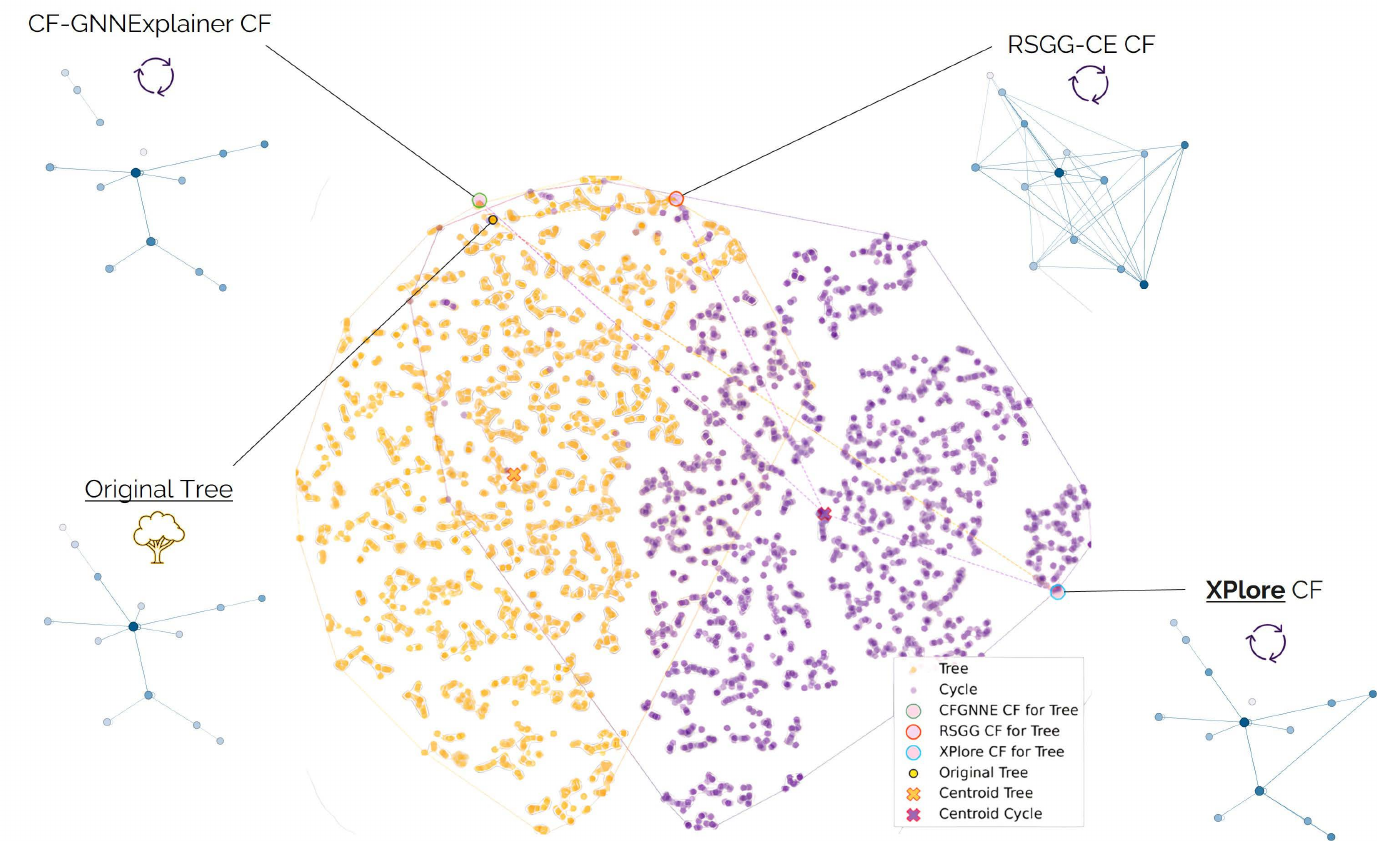}
    \caption{t-SNE projection of Wavelet Characteristic embeddings for TCR, comparing CFs generated by CF-GNNExpl, XPlore, and RSGG for the Tree motif. CF-GNNExpl and RSGG find a close CF but fail to land in the Cycle distribution, while XPlore achieves this correctly.}
    \label{fig:tsne}
\end{figure}
\begin{figure*}[!t]
    \centering
\includegraphics[width=\linewidth]{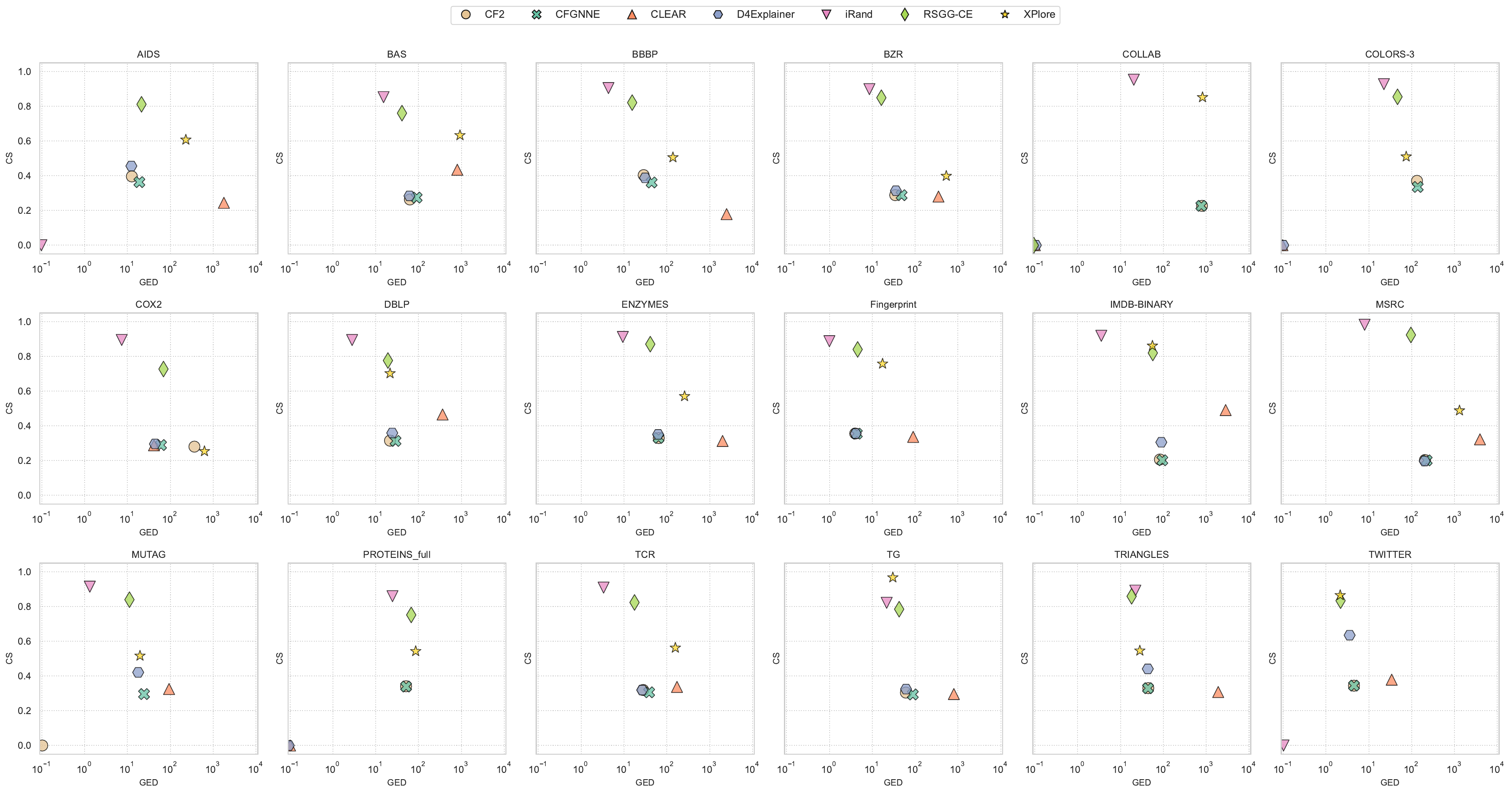}
    \caption{Relationship between GED and CS.    XPlore consistently achieves strong CS scores w.r.t. the competitors along with the same GED values in log-scale, indicating that counterfactuals retain meaningful semantic relations to original graphs, partially mitigating OOD.}
    \label{fig:ged_vs_cs}
\end{figure*}
\citet{chen2023d4explainerindistributiongnnexplanations} observed that CF-GNNExpl leverages the out-of-distribution (OOD) effect that influences the oracle's prediction, by deriving explanatory subgraphs while omitting additional potential edges.

Consequently, the extracted explanation lacks discriminative information for the CF class but is still classified as CF due to this OOD effect, ultimately misleading the oracle and compromising reliability.  As shown in~\Cref{fig:ged_vs_cs}, we reduce OOD effect exploitation by increasing cosine similarity: XPlore semantic similarity is higher w.r.t. competitors with similar GED scores and operating at the same perturbation level, thus relying less on OOD effect. \Cref{fig:violin_plots} shows that by identifying more CFs, XPlore also captures harder instances, yielding elongated or bimodal CS distributions, reflecting competitive performance on both easy and hard instances. t-SNE projection of the Wavelet Characteristic embedding space reveals that XPlore's CF exhibits substantial semantic overlap with the target class, landing correctly in the Cycle distribution embedding space, which integrates both topological and node attribute information (\Cref{fig:tsne}). Other explainers do not achieve this behavior, remaining within the Tree distribution and exploiting the OOD effect. Although XPlore's ability leads to more comprehensive, and hence more robust, explanations, the OOD effect persists (see \Cref{app:OOD}). This highlights the oracle's limited expressive power and its lack of training to mitigate adversarial instances. We refer the reader to~\cite{leemann2024towards} for non-adversarial CF explanations.

\subsection{Ablation Studies}\label{sec:ablation}
\textbf{Free node perturbation leads to better counterfactual validity.} Since XPlore performs free node perturbations, we study the effect of this freedom by constraining it and assessing whether simpler mechanisms work in the same manner. Therefore, we test \textbf{(1)} XPlore w/ gating that performs edge additions and node feature manipulation by gating them, similarly to the original edge drop mechanism \citep{lucic2022cf}, either retaining or discarding them; and \textbf{(2)} a variant that does not have the gating mechanism and cannot freely manipulate node features, called XPlore w/o freedom \& gating. To be consistent with the previous nomenclature, we rename XPlore in XPlore w/ freedom. We show the validity of these variants in \Cref{fig:Accuracy_ablation}. Thanks to the better manipulation of graph elements, all three variants have higher validity w.r.t. the original CF-GNNExplainer. The additional ability to alter node features generally leads to even better results. The flexibility of this modulation enables XPlore to better meet the explainability criteria and identify more effective counterfactuals.

In~\Cref{fig:TCR_metrics}, we report metrics for $\gamma = 0$ and $\gamma = 0.01$ respectively, where $\gamma$ is the value used to populate the $\Gamma$ matrix entries that correspond to missing edges of $A$. $\Gamma$ is added to $P$ at initialization. Recall that an edge is present if and only if $\sigma(v_i,v_j) > 0.5$ s.t. $v_i, v_j$ are nodes. A positive $\gamma$ increases the likelihood that edges are present at initialization, since $\sigma(\gamma+\epsilon)>0.5$. However, due to the added Gaussian noise, the presence is not guaranteed. This behavior may result in a higher number of counterfactuals found, but at the cost of higher GED, greater sparsity, and lower CS. This functionality may be needed if an edge must be present in the CF explanation; this prior knowledge is injected into the matrix $\Gamma$ as edge probabilities.

\section{Conclusion}
We introduced XPlore, a novel counterfactual explainer for Graph Neural Networks (GNNs) that explores a more complete search space, enabling both edge deletions/insertions and node-feature perturbations. Our method is not a black box itself; it relies on a basic gradient-based optimization building block to perform an intuitive counterfactual search. Once an oracle is trained, our explainer is fast and lightweight, requiring no further training or intense computation. XPlore identifies minimal yet impactful modifications, ensuring high-quality counterfactual explanations while avoiding heuristic-driven biases. Through extensive benchmarking across multiple datasets, XPlore consistently outperformed state-of-the-art explainers in terms of validity and fidelity. Our empirical evaluation highlights that XPlore achieves an average percentage gain of +17.3\% in validity and +15.0\% in fidelity over the second-best method across multiple datasets. We also rely on cosine similarity of graph embeddings as a complementary metric, demonstrating that XPlore captures structural modifications while maintaining semantic fidelity. This confirms the effectiveness of incorporating a more flexible perturbation space and a comprehensive loss function.

\paragraph{Future work} We acknowledge that mitigating out-of-distribution (OOD) effects and optimizing node feature perturbations remain key challenges. While our method has shown promise in this area, the ability to reduce these effects depends on how effectively the oracle captures the data distribution. We posit that less robust oracles are more prone to OOD explanations, which gives us hope that a more robust oracle, such as a diffusion model-based one, could significantly reduce XPlore's OOD effects. We will also investigate the link between counterfactual explanations and model robustness, aiming to create explainability methods that are both interpretable and resistant to adversarial manipulation. Another avenue would be to refine the loss function to better balance minimal perturbations with explanation quality, and to develop more precise control over feature modifications.

\bibliographystyle{ACM-Reference-Format}
\bibliography{bibliography}
\clearpage
\appendix
\section{Theoretical Foundations of Gradient-Guided Counterfactual Perturbations} \label{app:appendix}

This section is meant to give a mathematical justification for why our gradient‑based procedure converges and finds small ($\ell_1$‑minimal) local perturbations to the graph and features.

\noindent\textbf{Appendix Roadmap.}
We first establish convergence of projected gradient descent (A.1), then handle non‑smooth thresholding (A.2), prove the \(\ell_1\)‑minimality bound (A.3), derive the edge‑insertion/deletion condition (A.4), show our prediction loss is \(L\)‑smooth (A.5), choose sparsity–accuracy trade‑offs (A.6), analyze computational complexity (A.7), extend to weighted and directed graphs (A.8), and finally discuss search‑space expressivity (A.9).\\
\\
In \Cref{sec:Method}, we described a soft, differentiable objective $L(\cdot)$ (~\Cref{eq:loss}) that trades off between (a) changing the model’s prediction and (b) paying an $\ell_p$ “cost” for every edit. What remains is to show that, if we optimize this objective with a natural algorithm, we end up with a valid counterfactual that doesn’t make too large edits. Hence, we here justify why the joint gradient-based optimization over edge masks ${P}$ and feature perturbations ${N}$ (cf. ~\Cref{alg:XPlore,eq:loss,eq:final_formulation}) converges to a meaningful counterfactual and yields nearly $\ell_\mathrm{1}$-minimal changes.

\subsection{Convergence to Local Minimizers}\label{app:loc_min_convergence} 

Let \(\theta = (\mathrm{vec}(P), \mathrm{vec}(N))\) and define the soft objective \(L(\theta)\) as in~\Cref{eq:loss}, with projected gradient descent (PGD) updates
\[
\theta^{t+1} = \Pi_\Theta \big(\theta^t - \eta \nabla L(\theta^t)\big),
\]
where the projection \(\Pi_\Theta\) enforces \(P \in [0,1]^{n\times n}\) and leaves \(N \in \mathbb{R}^{n\times f}\) unconstrained.  

\begin{assumption}
The loss \(L\) is differentiable except for a convex nonsmooth part \(L_\mathrm{dist}\), \(L\)-smooth on the smooth part, and coercive on \(\Theta\).
\end{assumption}

Under these standard conditions and for step-size \(\eta \le 1/L\), classical PGD convergence results \citep[Ch.~2.3]{bertsekas1997nonlinear} guarantee that the iterates satisfy \(\theta^{t+1}-\theta^t \to 0\), and any limit point \(\theta^*\) is Clarke-stationary:
\[
0 \in \nabla L(\theta^*) + NC_\Theta(\theta^*),
\]
i.e., no feasible small perturbation can locally decrease \(L\). Hence, PGD converges to a stationary point of the soft objective under the box constraint, as required for our method.

\subsection{Handling non-smooth Thresholding via Surrogate Gradients} \label{sec:A.2 Thresholding}
Let $P \in [0,1]^{n\times n}$ be our continuous edge-importance matrix. We obtain a hard adjacency
$$\hat{A} = 1[P >0.5] \in \{0,1\}^{n\times n}$$
for the forward GNN pass. To propagate gradients back to $P$, we use the common PyTorch "detach" trick (an instance of the straight-through estimator \citep{bengio2013estimatingpropagatinggradientsstochastic}) 
\begin{lstlisting}[style=mypython, caption={STE-based hard‐thresholding in PyTorch}]
mask   = (P >= 0.5).float() # no gradient
P_hard = P+(mask - P).detach() # forward uses mask; backward flows into P
outputs = GNN(P_hard, features)
\end{lstlisting}%
In effect, the backward pass treats the binarization as if it were the identity function. This can be seen as a limiting case of using a steep sigmoid surrogate $\sigma_\alpha(x) =  1/(1+e^{-\alpha(x-0.5)})$ with $\alpha \to \infty$. 
Under this STE, the composite loss
$$L(P) = L_\mathrm{pred}(\hat{A}, X)+\beta||P-P_0||_1$$
is differentiable almost everywhere in $P$, and convergence to a Clarke-stationary point follows from standard projected-gradient arguments. $P_0$ is the original adjacency mask.\\
Note that STE is a heuristic used in practice, and the previous convergence proofs refer to the continuous/relaxed problem, not the exact gradients through hard thresholding.

Note that we replace the non‐differentiable hard threshold by the identity in the backward pass (i.e.\ STE). Using the identity in the backward pass induces a bias (dependent on the size of the downstream gradient) by propagating nonzero gradients where the true hard‐threshold has zero derivative, whereas zeroing those gradients would irreversibly freeze masked entries -- hence, to mitigate this, one may use a temperature‐controlled sigmoid $\sigma_\alpha$ (or even an annealed $\alpha$) that smoothly trades off bias and trainability.

\subsection{$\ell_1$-Minimality Bound of Perturbations}
Let again,
$$\theta = \big(\mathrm{vec}(P), \mathrm{vec}(N)\big), \quad \theta_0=\big(\mathrm{vec}(A), \mathrm{vec}(1_{n\times f})\big),$$
and the well-known soft objective
$$
L(G; \theta) = L_\mathrm{pred}\big(\Phi(G), E(G;\theta)\big) +\beta ||\theta-\theta_0||_1,
$$
\begin{lemma}
Assume $L_\mathrm{pred}$ is bounded below and set $\Delta = L_\mathrm{pred} - \mathrm{inf}_\Theta L_\mathrm{pred}$, the difference of $L_\mathrm{pred}$ with the infimum (greatest lower bound) of the prediction loss $L_\mathrm{pred}$ over all feasible $\theta\in \Theta = [0,1]^{n\times n}\times\mathbb{R}^{n\times f}$, the best possible point in $\Theta$ achievable by $\theta^*$. 
Then any stationary point $\theta^*$ of $L$ satisfies 
$$||\theta^*-\theta_0||_1 \leq \frac{\Delta}{\beta},$$
where $\theta_0$ is the original unperturbed $(P,N)$-vector. 
\end{lemma}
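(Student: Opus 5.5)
The plan is an energy-comparison argument rather than a first-order analysis. I will read $\Delta$ as the gap at the starting point, $\Delta = L_\mathrm{pred}(\theta_0) - \inf_\Theta L_\mathrm{pred}$, which is finite because $L_\mathrm{pred}$ is bounded below. The aim is to show that the point $\theta^*$ produced by the method has total objective no larger than at the unperturbed point:
\[
L(G;\theta^*) \le L(G;\theta_0) = L_\mathrm{pred}(\theta_0).
\]
The equality holds because the penalty $\beta\|\theta_0-\theta_0\|_1$ vanishes. Once this inequality is in hand, the bound is pure algebra:
\[
\beta\|\theta^*-\theta_0\|_1 \le L_\mathrm{pred}(\theta_0) - L_\mathrm{pred}(\theta^*) \le L_\mathrm{pred}(\theta_0) - \inf_\Theta L_\mathrm{pred} = \Delta .
\]
The second inequality uses $\theta^*\in\Theta$, so that $L_\mathrm{pred}(\theta^*)\ge \inf_\Theta L_\mathrm{pred}$. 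Dividing by $\beta>0$ then gives the claim.

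The key steps are as follows.

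\textbf{(i)} Establish the descent property of the iterates $\theta^{t}$, initialized at $\theta_0$. Under the Assumption of Section~\ref{app:loc_min_convergence}, I will use the proximal/projected gradient form of the update with $\eta\le 1/L$, treating $\beta\|\cdot-\theta_0\|_1$ as the convex nonsmooth part. The standard sufficient-decrease inequality (descent lemma plus the prox/projection optimality condition) gives
\[
L(\theta^{t+1}) \le L(\theta^t) - \tfrac{1}{2\eta}\|\theta^{t+1}-\theta^t\|^2 .
\]
Hence $L(\theta^t)$ is nonincreasing and $L(\theta^t)\le L(\theta_0)$ for all $t$.

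\textbf{(ii)} Pass to the limit. By coercivity the iterates are bounded, so a limit point $\theta^*$ exists. By Section~\ref{app:loc_min_convergence} this limit point is Clarke-stationary, and lower semicontinuity (indeed continuity) of $L$ gives $L(\theta^*)\le \liminf_t L(\theta^t)\le L(\theta_0)$.

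\textbf{(iii)} Conclude with the algebra above. The same argument also covers the case where $\theta^*$ is any global minimizer of $L$, since then $L(\theta^*)\le L(\theta_0)$ trivially.

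The main obstacle is that stationarity by itself does not suffice. An arbitrary stationary point of the nonconvex $L$ could lie far from $\theta_0$ at a higher objective value, and then no bound of the form $\Delta/\beta$ follows. So the lemma has to be read as applying to stationary points reached by the descent dynamics from $\theta_0$, or to points satisfying $L(\theta^*)\le L(\theta_0)$. I would state this qualification explicitly.

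A second, milder issue is that the STE of Section~\ref{sec:A.2 Thresholding} breaks the exact descent inequality. For the hard-threshold variant, I would therefore either argue on the relaxed (smooth-sigmoid) problem, or simply note that Algorithm~\ref{alg:XPlore} can retain the best iterate. The latter enforces $L(\theta^*)\le L(\theta_0)$ by construction.
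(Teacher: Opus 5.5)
Your closing algebra is exactly the paper's proof, which also writes $L(\theta^*)\le L(\theta_0)=L_\mathrm{pred}(\theta_0)$, rearranges to $\beta\|\theta^*-\theta_0\|_1\le L_\mathrm{pred}(\theta_0)-\inf_\Theta L_\mathrm{pred}=\Delta$, and divides by $\beta$. The difference is how the pivotal inequality $L(\theta^*)\le L(\theta_0)$ is obtained.

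The paper simply asserts that ``stationarity (or just minimality)'' gives it. Only the minimality reading does, and your objection is right: the claim for \emph{any} stationary point is false. Already in one coordinate, take $\Theta=[0,1]$, $\theta_0=0$, $\epsilon<\beta$, $\delta=\epsilon/\beta$ and $L_\mathrm{pred}(\theta)=\epsilon\bigl(1-\max\{0,\theta-1+\delta\}^2/\delta^2\bigr)$. This function is smooth with values in $[0,\epsilon]$, so $\Delta=\epsilon$, and $L'(1)=-2\beta+\beta=-\beta$. Hence $\theta^*=1$ is a KKT point, indeed a local minimizer of $L$ on the box, yet $\|\theta^*-\theta_0\|_1=1>\Delta/\beta$.

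Your sufficient-decrease argument for proximal-gradient iterates started at $\theta_0$, with coercivity and continuity to pass to a limit point, is what actually supplies the inequality for the points the method produces. The two routes trade off as follows.
\begin{itemize}
\item The paper's route buys brevity, but is valid only for global minimizers (or any point of the sublevel set $\{L\le L(\theta_0)\}$).
\item Yours gives a true statement about the algorithm's limit points. The price is the smoothness and coercivity assumptions, $\eta\le 1/L$, and exact initialization at $\theta_0$. It also needs a genuine prox step on the $\ell_1$ term (shifted soft-thresholding, clipped to $[0,1]$ on the edge block) instead of the plain projected (sub)gradient step written in Section~\ref{app:loc_min_convergence}.
\end{itemize}
On the initialization point: from a different feasible start $\theta^{\mathrm{init}}$ (noise, $\Gamma$), the same argument only gives $\|\theta^*-\theta_0\|_1\le\|\theta^{\mathrm{init}}-\theta_0\|_1+\bigl(L_\mathrm{pred}(\theta^{\mathrm{init}})-\inf_\Theta L_\mathrm{pred}\bigr)/\beta$.

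One caution on your STE fallback. Algorithm~\ref{alg:XPlore} returns the closest valid counterfactual under $d$, not the lowest-$L$ iterate, so ``retaining the best iterate'' is a change to the method. It also enforces the bound only if $\theta_0$ itself is among the candidates, in which case the retained point may just be $\theta_0$. The relaxed-problem reading, which the paper itself adopts in Section~\ref{sec:A.2 Thresholding}, is the cleaner fix.
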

\begin{proof}
Since $\theta_0 \in \Theta$, stationarity (or just minimality) of $\theta^*$ gives
$$L(\theta^*) \leq L(\theta_0)  \Longrightarrow  L_\mathrm{pred}(\theta^*)+\beta||\theta^* - \theta_0||_1 \leq L_\mathrm{pred}(\theta_0).$$
Rearranging,
\begin{equation*}
\begin{gathered}
    \beta||\theta^* - \theta_0||_1  \leq   L_\mathrm{pred}(\theta_0) -  L_\mathrm{pred}(\theta^*) \\\leq  L_\mathrm{pred}(\theta_0) - \mathrm{inf}_\Theta L_\mathrm{pred} = \Delta
\end{gathered}
\end{equation*}
Dividing by $\beta$, we get the clean bound
$$||\theta^*-\theta_0||_1 \leq \frac{\Delta}{\beta}.$$ Thus, choosing $\beta \geq \Delta/ \epsilon$ forces $||\theta^*-\theta_0||_1 \leq \epsilon$, (i.e. the size of our edits (in $\ell_1$) is at most the ratio of “how much we can lower the prediction loss” over “how costly each unit of edit is.” 

\end{proof} 
\textbf{Practical choice of \(\beta\).}
In theory, the bound 
\[
  \|\theta^*-\theta_0\|_1 \;\le\;\frac{\Delta}{\beta},\quad\Delta=L(\theta_0)-L(\theta^*)
\]
guides \(\beta\) to achieve a target \(\ell_1\)‐norm (sparsity).  In practice \(\Delta\) is unknown, so we select \(\beta\) empirically -- e.g.\ via grid search or cross‐validation -- by monitoring the resulting \(\|\theta^*\|_1\) (or edge count) and choosing the smallest \(\beta\) that attains the desired sparsity level \(\epsilon\).



\subsection{Edge-Insertion and Edge-Deletion Condition }
We now derive a simple, quantitative criterion under which a zero-entry $p_{ij} = 0$ will be driven strictly positive, and a one-entry $p_{ij} = 1$ will be driven to zero, by projected gradient descent on the soft objective in~\Cref{eq:app_loss}.\\
Recall that $L_\mathrm{dist}$ penalizes deviations of both edges and node features, but here we focus on its effect on the edge variable $p_{ij}$.\\
\\
We consider the soft objective
\begin{equation}\label{eq:app_loss}
L(P,N) \;=\; L_{\rm pred}(P,N)\;+\;\beta\,L_{\rm dist}(P,N),
\end{equation}%
where
\begin{itemize}
  \item $P=[p_{ij}]\in\{0,1\}^{n\times n}$ is the \emph{final} binary adjacency mask,  
  \item $N$ is the node‐feature matrix,  
  \item $\bar P=[\bar p_{ij}]\in[0,1]^{n\times n}$ is the \emph{original} continuous adjacency (here we assume no noise, in practice we add a small Gaussian noise to $\bar{P}$, which does not materially change the $\pm1$ subgradients at the boundaries),  
  \item $L_{\rm dist}(P,N)$ includes the edge penalty $\sum_{i,j}|p_{ij}-\bar p_{ij}|$, ~\Cref{eq:loss_dist} (plus any feature‐penalty terms).
\end{itemize}%
To decide whether an edge coordinate \(p_{ij}\) will switch its value under projected gradient descent, we inspect the one‐dimensional update
\[
p_{ij}^{(t+1)}
=\Pi_{\{0,1\}}\Bigl(
  p_{ij}^{(t)}
  -\eta\Bigl(
    g_{ij}
    +\beta\,d_{ij}
  \Bigr)
\Bigr),
\]
where
\[
g_{ij}
=\frac{\partial L_{\rm pred}}{\partial p_{ij}}
\,,\quad
d_{ij}
=\frac{\partial}{\partial p_{ij}}\bigl|p_{ij}-\bar p_{ij}\bigr|
=\mathrm{sign}\bigl(p_{ij}-\bar p_{ij}\bigr).
\]
By the sign‐definition,
\[
d_{ij} =
\begin{cases}
+1, & p_{ij}>\bar p_{ij},\\
-1, & p_{ij}<\bar p_{ij},
\end{cases}
\quad
d_{ij}\in[-1,1]\ \text{if }p_{ij}=\bar p_{ij}.
\]

\paragraph{Insertion.} 
At the insertion boundary \(p_{ij}=0\) with continuous \(\bar p_{ij}>0\), we have \(d_{ij}= -1\) if \(\bar p_{ij}>0\), but to capture the switch from 0 to 1 we consider the subgradient at the boundary:  
\[
p_{ij}^{(t+1)}=1
\Longleftrightarrow
0 - \eta\bigl(g_{ij}+\beta\,d_{ij}\bigr)\le -1
\Longleftrightarrow
-g_{ij} > \beta\,d_{ij}.
\]
Since at the 0‐boundary we take \(d_{ij}=+1\) for the “hardest” subgradient,
\[
-g_{ij}>\beta
\]
is required for insertion.

\paragraph{Deletion.}
At the deletion boundary \(p_{ij}=1\) with \(\bar p_{ij}<1\), we similarly take \(d_{ij}=-1\) and find
\[
p_{ij}^{(t+1)}=0
\Longleftrightarrow
1 - \eta\bigl(g_{ij}+\beta\,d_{ij}\bigr)\ge 1
\Longleftrightarrow
g_{ij}>\beta.
\]
\noindent In both cases, an edge flips exactly when its \emph{marginal benefit} or \emph{cost} in the prediction loss exceeds the fixed penalty \(\beta\), yielding a transparent sparsity threshold.

\subsubsection*{Role of the \(\Gamma\) Matrix}
Rather than using a uniform $\ell_1$ penalty 
\(
  \beta\sum_{i,j}\bigl|\bar P_{ij}-A_{ij}\bigr|,
\)
we shift the target by \(\Gamma\) and write
\[
  \min_{\bar P}\;L_{\rm pred}(\bar P)
  \;+\;\beta\sum_{i,j}\bigl|\bar P_{ij} - (A_{ij}+\gamma_{ij})\bigr|.
\]
By simple algebraic manipulation, this is equivalent (up to an additive constant) to
\[
  \min_{\bar P}\;L_{\rm pred}(\bar P)
  \;+\;\beta\sum_{i,j}w_{ij}\,\bigl|\bar P_{ij}-A_{ij}\bigr|,
  \quad
  w_{ij} \;\equiv\; \frac{1}{1+\gamma_{ij}}.
\]
In the latter form, the KKT subgradient for each $(i,j)$ is
\[
  \underbrace{\frac{\partial L_{\rm pred}}{\partial \bar P_{ij}}}_{\text{model term}}
  +\;
  \beta\,w_{ij}\,\mathrm{sign}\bigl(\bar P_{ij}-A_{ij}\bigr),
\]
so a larger \(\gamma_{ij}\) $\Rightarrow$ smaller \(w_{ij}\) $\Rightarrow$ smaller magnitude of the regularizer $\Rightarrow$ cheaper to flip edge \((i,j)\), all \textbf{without} changing the global~\(\beta\).

\subsection{Smoothness of the Prediction Loss}

We next show, at a high level, that our prediction loss \(L_{\rm pred}(P,N)\) admits an \(L\)-Lipschitz continuous gradient (i.e.\ is \(L\)-smooth) in the perturbation variables \((P,N)\).

\begin{lemma}\label{lem:smoothness}
Let \(f_\theta\) be a GNN with fixed weights \(\theta\), built from layers that are each Lipschitz continuous (e.g.\ linear transforms, neighbor‐aggregation, and elementwise activations such as ReLU).  Define
\[
  L_{\rm pred}(P,N)=L\bigl(f_\theta(A \odot P,\,X+N),\,y\bigr),
\]
where \(L\) is a twice‐differentiable loss (e.g.\ cross‐entropy).  Then there exists a constant \(L>0\) depending only on \(\theta\), the GNN architecture, and \(L\), such that for all \((P,N)\) and \((P',N')\),
\begin{equation*}
    \begin{gathered}
          \bigl\|\nabla L_{\rm pred}(P,N) -\nabla L_{\rm pred}(P',N')\bigr\|
\\  \le L\,\bigl\|\,(P,N)-(P',N')\bigr\|.
    \end{gathered}
\end{equation*}
Hence \(L_{\rm pred}\) is \(L\)-smooth.
\end{lemma}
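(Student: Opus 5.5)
The plan is to write $L_{\rm pred}$ as a composition of finitely many maps and use the fact that a composition of maps which are bounded, Lipschitz, and have bounded, Lipschitz Jacobians again has a Lipschitz gradient. Concretely, set $Z=(P,N)$ and write $L_{\rm pred}=\ell\circ h_K\circ\cdots\circ h_1\circ \iota(Z)$. Here $\iota(P,N)=(A\odot P,\,X+N)$ is affine. Each $h_k$ is one GNN layer, namely a degree-normalised aggregation $\bar D^{-1/2}(\cdot)\bar D^{-1/2}$, a multiplication by the fixed weight $W_k$, and an elementwise activation. Finally $\ell(\cdot)=L(\cdot,y)$ is the twice-differentiable loss.

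The key tool is a one-line lemma. If $g$ is $\lambda_g$-Lipschitz with $\lambda'_g$-Lipschitz Jacobian, and $f$ has gradient bounded by $M_f$ and $\lambda'_f$-Lipschitz on the range of $g$, then
\[
\|J_g(z)^{\top}\nabla f(g(z))-J_g(z')^{\top}\nabla f(g(z'))\|\le \bigl(\lambda_g^2\lambda'_f+M_f\lambda'_g\bigr)\|z-z'\|.
\]
Applying this inductively from $\ell$ backwards through $h_K,\dots,h_1,\iota$ yields the constant $L$ as an explicit product-type expression in $\|W_k\|$, the Lipschitz and smoothness constants of the activations, and $\sup\|\nabla^2 \ell\|$. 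The order of work is therefore the following.

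\textbf{Step 1.} Bound the Jacobian of $\iota$ by $\max(\|A\|_\infty,1)$; it is constant, so its Lipschitz constant is $0$.

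\textbf{Step 2.} For each layer, bound $\|J_{h_k}\|$ by $\|\bar D^{-1/2}\|^2\|W_k\|\,\mathrm{Lip}(\sigma_k)$. The term $\hat A H$ is bilinear in the layer input, so its Jacobian is Lipschitz only with a constant that depends on bounds on $\hat A$ and $H$. The degree normalisation $\bar D$ also depends on $P$; it is smooth because the self-loops make $\bar D\succeq I$, but its derivatives must be bounded too.

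\textbf{Step 3.} For cross-entropy, use that the softmax Hessian has norm at most $1$, so that $\nabla\ell$ is $1$-Lipschitz and bounded.

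\textbf{Step 4.} Chain Steps 1--3 with the lemma.

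I expect the main obstacle to be that the statement as literally posed is too strong, so the proof has to make two implicit assumptions explicit. First, ReLU is Lipschitz but not differentiable, so $\nabla L_{\rm pred}$ can jump across activation boundaries and is not globally Lipschitz. I would therefore require smooth activations with Lipschitz derivatives (e.g.\ softplus, GELU, tanh), or, for ReLU, state the bound piecewise on each activation region, i.e.\ almost everywhere. This is consistent with the STE/Clarke-stationarity viewpoint of \Cref{sec:A.2 Thresholding}. Second, the bilinear coupling between the adjacency (through $P$) and the hidden features (through $N$) makes the Jacobian grow with $\|N\|$. The constant must therefore be taken over a bounded set: $P\in[0,1]^{n\times n}$ is already ensured by the projection of \Cref{app:loc_min_convergence}, and I would restrict $N$ to a ball $\|N\|\le R$. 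This is justified by coercivity in the Assumption of \Cref{app:loc_min_convergence}, which keeps PGD iterates in a sublevel set. With these caveats the constant $L$ depends on $\theta$, the architecture, $\ell$, and additionally on $R$ and $\|X\|$, and the induction above goes through routinely.
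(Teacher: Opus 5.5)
Your argument uses the same chain-rule skeleton as the paper but differs at the decisive step, and the difference matters. The paper's high-level proof treats each layer as linear in $(P,N)$, followed by a $1$-Lipschitz activation, and concludes $L\le L_{L}\prod_\ell L_{\text{layer},\ell}$. A product of layer Lipschitz constants bounds the Lipschitz constant of $L_{\rm pred}$ itself, not of $\nabla L_{\rm pred}$. Moreover the layers are not linear: already the first layer is bilinear in $(A\odot P,\,X+N)$ through $\hat A H$, and $\bar D$ also depends on $P$.

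Your composition lemma, with constant $\lambda_g^2\lambda'_f+M_f\lambda'_g$, is the right tool for propagating gradient-Lipschitzness backwards. Your two extra hypotheses appear precisely because this lemma needs Lipschitz Jacobians and bounded gradients on the relevant ranges, and they are genuinely necessary. With ReLU, e.g.\ $z=\mathrm{ReLU}(x+n)$, the derivative $\ell'(z)\mathbf{1}[x+n>0]$ jumps at $x+n=0$ whenever $\ell'(0)\neq 0$. Even with smooth activations, a logit $z=w\,p_{ij}(x_j+n_j)$ gives $\partial^2_{p_{ij}}L_{\rm pred}=\ell''(z)\,w^2(x_j+n_j)^2$, which at $p_{ij}=0$ equals $\ell''(0)\,w^2(x_j+n_j)^2$ and is unbounded in $N$.

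So the paper's route buys brevity and a clean product-form constant depending only on $\theta$, the architecture and the loss, but it does not establish $L$-smoothness as stated. Your route proves a corrected statement: smooth activations (or a piecewise, almost-everywhere version for ReLU), $P$ in the box so that $\bar D\succeq I$, and $\|N\|\le R$. The price is a constant that also depends on $R$ and $\|X\|$ and accumulates additive terms layer by layer.

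One small slip: in Step 2 your bound on the $H$-Jacobian omits a factor $\|\hat A\|$. The clean bound is $\|W_k\|\,\mathrm{Lip}(\sigma_k)$, since for symmetric nonnegative $\hat A$ with self-loops one has $\|\bar D^{-1/2}\hat A\bar D^{-1/2}\|\le 1$.
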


\begin{proof}[High-Level]
Note that each GNN layer can be written as the composition of:
\begin{itemize}
  \item A linear map in \((P,N)\) (adjacency mask enters via \(A\odot P\), features via \(X+N\)), which is Lipschitz.
  \item An elementwise activation (e.g.\ ReLU or smooth ReLU), which is 1‑Lipschitz.
  \item A final differentiable loss \(L\), whose gradient is Lipschitz in the model’s output.
\end{itemize}
By the chain rule, the gradient of the overall map \((P,N)\mapsto L\bigl(f_\theta(\cdot),y\bigr)\) is Lipschitz, with constant
\[
  L \;\le\; L_{L}\,\prod_{\ell=1}^L L_{\text{layer},\ell},
\]
where \(L_{L}\) is the loss’s Lipschitz constant and each \(L_{\text{layer},\ell}\) upper‐bounds the Lipschitz constant of layer \(\ell\).  
\end{proof}

\paragraph{Remark.} In practice, we do not require the exact value of \(L\), only the existence of such a bound to invoke standard convergence results for projected gradient methods on smooth\,+\; non-smooth objectives.

\subsection{Choosing the Trade-off}
In practice, $\beta$ controls sparsity versus fidelity. A grid search typically selects $\beta \in [0.1,1]$ to yield few edits while guaranteeing $\Phi(G')\ne \Phi(G).$  The step size ($\alpha$ is $\eta$ in ~\Cref{alg:XPlore}) is as well selected via a grid search in $\{10^{-3}, 10^{-2}, 10^{-1}, 1\}$. 

\subsection{Computational Complexity.}\label{sec:complexity}
Each iteration of \Cref{alg:XPlore} performs:
\begin{itemize}
  \item one forward+backward pass through the GNN, which on a graph with $n$ nodes, $|E|$ edges, hidden‐dimensionality $d$ and feature‐dimensionality $f$ takes
  \[
    O\bigl(|E|\,d+n\,d\,f\bigr),
  \]
  since message‐passing scales linearly in edges and feature multiplications scale in $n\,f\times d$;
  \item elementwise thresholding of the perturbation masks $P$ and $N$, costing
  \[
    O\bigl(|E|+n\,f\bigr),
  \]
  as we only inspect each edge entry and each feature entry once. And the STE back-pass adds no asymptotic overhead. 
\end{itemize}
Thus, the dominant per‐iteration cost is 
\[
  O\bigl(|E|\,d+n\,d\,f\bigr),
\]
i.e.,  linear in the graph size under standard GNN architectures, ensuring the method scales to large graphs. Recall that $K$ total iterations are performed in ~\Cref{alg:XPlore}.
{Treating $d$ as a small constant, when the graph is dense $|E| \sim n^2 $, the complexity reduces to $O\bigl( n^2 \bigr)$.}

\subsection{Extension to Weighted and Directed Graphs}
Our theoretical developments extend straightforwardly to weighted or directed graphs.  For weighted graphs, replace the binary mask $P\in\{0,1\}^{n\times n}$ by a continuous mask $P\in[a,b]^{n\times n}$ (e.g.\ $[0,1]$), and substitute the unweighted $\ell_1$ distance 
\[
  \sum_{i,j}\bigl|p_{ij}-\bar p_{ij}\bigr|
\]
with a weighted version 
\[
  \sum_{i,j}w_{ij}\,\bigl|p_{ij}-\bar p_{ij}\bigr|,
\]
where $w_{ij}>0$ can reflect edge‐specific costs.  All projected gradient and subgradient arguments carry over by projecting onto the box $[a,b]$ and using the weighted sign subgradient for the $\ell_1$ term.  For directed graphs, simply treat $(i,j)$ and $(j,i)$ as distinct entries in $P$, and the same “flip when $|\partial L_{\rm pred}/\partial p_{ij}|>\beta\,w_{ij}$” threshold holds.  Convergence to a KKT‐stationary point and the $\ell_1$‐minimality bound remain valid with these modifications, since they rely only on box‐constraints and separable $\ell_1$ penalties.

\subsection{Search‐Space Expressivity}
By jointly parameterizing the adjacency perturbation $\bar P \in [0,1]^{n\times n}$ and feature perturbation $\bar N \in \mathbb{R}^{n\times f}$ (see ~\Cref{sec:Method,alg:XPlore}), our gradient‐based explainer can, in principle, reach \emph{any} discrete graph–feature configuration (once binarized via sigmoid and thresholding). In practice, however, the non-convex loss landscape can trap plain gradient descent in a local minimum within the basin around the original graph, favoring small, local edits. To counteract this, we add a small amount of Gaussian noise to $\bar P$ at initialization. These random perturbations diversify the initial gradient directions, helping the optimizer “hop” across low-gradient regions and leading to a richer exploration of structurally distinct counterfactuals without sacrificing convergence.\\
Hence, even when the optimization converges, the explainer may fail to reach a counterfactual due to either local minima or an ill-conditioned decision boundary in the oracle -- e.g., the true label region may be disjoint, vanishingly thin, or entirely absent near the input. This is reflected in our experiments: some generated counterfactuals either fail to flip the label or result in out-of-distribution inputs. These issues could be mitigated by improving the oracle’s inductive bias and generalization capacity. 

\section{Datasets}\label{sec:datasets}

To evaluate our approach against state-of-the-art GCE methods, we conducted experiments on 18 datasets (13 real and 5 synthetic) spanning diverse domains and multi-class graph classification tasks. \Cref{tab:datasetdescription,tab:datasetnodedescription} illustrate the characteristics of the datasets used in this paper for graph and node classification, respectively.

\begin{table*}[!h]
    \centering
    \caption{Graph explanation datasets characteristics.}
    \label{tab:datasetdescription}
    \resizebox{\textwidth}{!}{%
    \begin{tabular}{l c ccccccccccccccccc} \toprule
         & \quad TCR & TG & BAS &  MUTAG&BZR & COX2 & AIDS & BBBP   & ENZYMES& PROTEINS& Fingerprint& COLLAB& COLOR-3&TRIANGLES &  {MSRC}& {DBLP} & {IMDB} & {TWITTER} \\ \midrule
         Avg \# of Nodes & \quad 28 & 64 & 64 &   17.93&35.75&  41.22&  15.69&  25.95 & 32.63& 39.06& 7.06& 74.49& 61.31&20.85 &  77.52&10.48& 19.77& 4.03\\
         \rowcolor{graybckgrnd}Node Attr.& \quad -& 1& 4& $-$& 3& 3& 4& $-$& 18& 29& 2& $-$& 4&$-$ &  $-$ &$-$ & $-$ & $-$ \\ 
         Avg \# of Edges & \quad 27.75 & 65.01 & 64.01 &  19.79&38.36 &    43.45 &     16.20 &  24.06 & 62.14& 72.82& 5.76& 2457.78& 91.03&32.74 &  198.32&19.65& 96.53& 4.98\\
         \rowcolor{graybckgrnd}\# of Graphs & \quad 5000 & 5000 & 5000 &  188&405 & 467 &  2000&  2039 & 600& 1113& 2149& 5000& 10500&45000 &  563&19456& 1000& 144033\\
         \# of Classes & \quad 2 & 2 & 2 &  2&2 & 2 & 2 & 2  & 6& 2& 15& 3& 11&10 &  20&2& 2& 2\\
         \rowcolor{graybckgrnd}Motif & \quad Cycle & Grid & House &       $-$&$-$& $-$& $-$& $-$& $-$& $-$& $-$& $-$& Color&Triangle &  $-$ &$-$ & $-$ & $-$ \\
 Category& Synthetic& Synthetic& Synthetic& Molecular& Molecular& Molecular& Molecular& Molecular& Bioinf.& Bioinf.& Comp.Vis.& Social net.& Synthetic&Synthetic &  Comp.Vis.&Social net.& Social net.& Social net.\\
         \midrule
         Oracle Test. Acc. &  \quad 100.00\%&  99.72\%&  100.00\%&   88.30\%&99.01\%&  98.72\%&  99.80\%&  99.41\% & 95.67\%& 98.02\%& 74.97\%& 69.60\%& 91.68\%&99.18\% &  42.27\%&80.83\%& 77.80\%& 91.32\%\\
         \bottomrule
    \end{tabular}%
    }
\end{table*}

\begin{table}[!h]
    \centering
    \caption{{Node explanation datasets characteristics. }}
    \label{tab:datasetnodedescription}
    \resizebox{\linewidth}{!}{%
    \begin{tabular}{lccccc} \toprule
         & BAS &  BZR&ENZYMES& MSRC& TWITTER\\
         \# of Node labels& 2 &  10&3& 22& 1323\\
         \midrule
         Oracle Test. Acc. &  100.00\%&   99.90\%&87.81\%&  92.81\%&  3.41\%\\
         \bottomrule
    \end{tabular}%
    }
\end{table}

\textbf{AIDS} \citep{riesen2008iam} consists of graphs representing molecular compounds. These graphs are derived from the AIDS Antiviral Screen Database of Active Compounds. This data set consists of two classes (active and inactive) that represent molecules with or without HIV activity. Molecules are converted into graphs straightforwardly by representing atoms as nodes and covalent bonds as edges. Nodes are labeled with the number of the corresponding chemical symbol and edges by the valence of the linkage. There are 2,000 elements in total (1,600 inactive elements and 400 active elements).

\textbf{BAShapes} \citep{ying2019gnnexplainer} is a synthetic dataset consisting of a base graph and motifs connected to the base. The base graph is a Barabasi-Albert (BA) graph with a house-shaped motif attached. The resulting graph is further perturbed by adding $0.1N$ random edges. Following the generation described in \cite{lucic2022cf}, the base graph has 8 nodes and 5 edges. Each base graph has 7 motives connected to it.

\textbf{BBBP} (Blood-Brain Barrier Penetration) \citep{martins2012bayesian} is a widely used dataset in drug discovery and neurological research for developing machine learning models to predict blood-brain barrier permeability. The blood-brain barrier is a protective membrane that shields the central nervous system by regulating the passage of solutes. Its presence is a critical consideration in drug development, whether for designing molecules that target the central nervous system or for identifying compounds that should not cross the barrier. BBBP contains binary labels for 2,053 curated molecules, indicating whether a compound can penetrate the blood-brain barrier. Specifically, 1,570 molecules can penetrate the barrier, while 483 cannot.

\textbf{BZR} and \textbf{COX2} \citep{doi:10.1021/ci034143r} are distinct molecular targets: 467 cyclooxygenase-2 (COX-2) inhibitors and 405 benzodiazepine receptor (BZR) ligands, respectively. These datasets are widely used in quantitative structure-activity relationships (QSAR) studies, which aim to correlate the biological activities of compounds with their structural attributes to elucidate the mechanisms by which they act and predict the activities of novel derivatives.

\textbf{COLLAB} \citep{yanardag2015deep} is a social network dataset comprising 5000 scientific collaborations derived from 3 public collaboration datasets \citep{leskovec2005graphs}, namely, High Energy Physics, Condensed Matter Physics, and Astro Physics. Ego networks of researchers from each field were generated, and each graph was labeled with the researcher's field. The task is then to determine whether a researcher's egocollaboration graph belongs to the High Energy, Condensed Matter, or Astrophysics field.  

\textbf{COLORS-3} \citep{knyazev2019understandingattentiongeneralizationgraph} is a synthetic dataset consisting of 10500 random graphs with 11 classes where features of each node are assigned to one of the three colors (red, green, or blue), $\mathbf{p} \in \mathbb{R}^3$. The dataset is extended to higher $n$-dimensional cases $\mathbf{p} \in \mathbb{R}^n$.

\textbf{DBLP} \citep{pan2013graph} dataset consists of bibliography data in computer science. Each record is associated with a set of attributes, including abstract, authors, year, venue, title, and reference ID. The graph stream is built by selecting the list of conferences and using the papers published in these conferences (in chronological order) to form a binary-class graph stream. The classification task is to predict whether a paper belongs to the DBDM (database and data mining) or the CVPR (computer vision and pattern recognition) field using each paper's references and title.

\textbf{ENZYMES} \citep{borgwardt2005protein} is a bio-informatics dataset consisting of a dataset of 600 enzymes, constructed from the Protein Data Bank \citep{berman2000protein} and labeled with their corresponding enzyme class labels from the BRENDA enzyme database \citep{schomburg2004brenda}. It includes 100 proteins from each of six classes (EC 1, EC 2, EC 3, EC 4, EC 5, EC 6), which represent the six enzyme commission top-level hierarchy (EC classes). Proteins are converted into graphs by representing their secondary structure elements as nodes and edges in an attributed graph. Nodes are labeled with their type (helix, sheet, or loop) and their amino acid sequence. Every node is connected to its three nearest neighbors in space by an edge. Edges are labeled with their type and the distance they represent in angstroms. 

\textbf{Fingerprint} \citep{riesen2008iam} is a computer vision dataset consisting of 2149 fingerprints that are converted into graphs by filtering the images and extracting regions that are relevant \citep{neuhaus2005graph}. To obtain graphs from fingerprint images, the relevant regions are binarized, and a noise-removal and thinning procedure is applied. This results in a skeletonized representation of the extracted regions. Ending points and bifurcation points of the skeletonized regions are represented by nodes. Additional nodes are inserted in regular intervals between the ending points and the bifurcation points. Finally, undirected edges are inserted to link nodes that are directly connected through a ridge in the skeleton. Each node is labeled with a two-dimensional attribute giving its position. Each edge is assigned an angle indicating its orientation relative to the horizontal direction.

\textbf{IMDB} is a movie collaboration dataset in which actors/actresses and genre information of different movies on IMDB are collected. For each graph, nodes represent actors/actresses, and there is an edge between them if they appear in the same movie. Collaboration graphs on Action and Romance genres were generated, and ego-networks for each actor/actress were derived. Note that a movie can belong to both genres at the same time; therefore, movies from the Romance genre already included in the Action genre were discarded. Similarly to the COLLAB dataset, each ego-network was labeled with the genre graph to which it belongs. The task is then simply to identify the genre to which an ego-network graph belongs.

\textbf{MSRC21} \citep{neumann2016propagation} is derived from the a state-of-the-art dataset in semantic image processing originally introduced by \citet{winn2005object}. Each image is represented by a conditional Markov random field graph. The nodes of each graph are derived by oversegmenting the images using the QuickShift algorithm \citep{vedaldi2008quick}, yielding a single graph per image. Nodes are connected if the superpixels are adjacent, and each node can further be annotated with a semantic label. Semantic (ground-truth) node labels are derived by taking the mode ground-truth label of all pixels in the corresponding superpixel. Semantic labels are object names plus a label void to handle objects that do not fall into one of the given classes. Images consisting solely of a single semantic label were removed, leaving a classification task across 20 classes for the MSRC21 dataset.

\textbf{MUTAG} \citep{debnath1991structure} is a widely used dataset consisting of 188 nitroaromatic chemical compounds divided into two classes according to their mutagenic effect on a Salmonella typhimurium bacterium. Input graphs represent chemical compounds, vertices stand for atoms and are labeled by the atom type (represented by one-hot encoding), while edges between vertices represent bonds between the corresponding atoms.

\textbf{PROTEINS} \citep{borgwardt2005protein} is a bio-informatics dataset comprising 1113 proteins from the dataset of enzymes (59\%) and non-enzymes (41\%) \citep{dobson2003distinguishing}. Proteins are modeled as feature vectors which indicate for each amino acid its fraction among all residues, its fraction of the surface area, the existence of ligands, the size of the largest surface pocket, and the number of disulfide bonds. 

\textbf{Tree-Cycles (TCR)} \citep{ying2019gnnexplainer}  is an emblematic synthetic dataset.  Each instance constitutes a graph comprising a central tree motif and multiple cycle motifs connected through singular edges.  The dataset comprises two distinct classes: graphs without cycles (0) and graphs with cycles (1). The TC also allows control over the number of nodes, the number of cycles, and the number of nodes per cycle.

\textbf{Tree-Grid (TG)} is a synthetic dataset similar to \textit{Tree-Cycles}, in which n-by-n grid motifs are attached to the main tree motif in place of cycle motifs. We used 5000 graphs with 64 nodes and randomly attached a 3-by-3 grid. 

\textbf{TRIANGLES} \citep{knyazev2019understandingattentiongeneralizationgraph} is a synthetic dataset comprising 45000 graphs. The task is to count the number of triangles in the graph. Node degree features are added as one-hot vectors to all graphs, so the oracle model can exploit both graph structure and features. 

{\textbf{TWITTER} \citep{pan2014graph}\citep{pan2015cogboost} dataset is extracted from twitter sentiment classification. Because of Twitter's inherently short and sparse nature, Twitter sentiment analysis (i.e., predicting whether a tweet reflects a positive or negative sentiment) is a difficult task. To build a graph dataset, each tweet is represented as a graph using its content, with nodes denoting terms and/or smiley symbols and edges indicating co-occurrence between two words or symbols in each tweet. To ensure graph quality, only tweets containing 20 or more words were used. Tweets from April 6 to June 16 were selected to generate 140,949 graphs (in chronological order).}

\section{Detailed Experimental Setup}

\subsection{Oracles and training}\label{app:oracles}
Oracle Graph Convolutional Network (GCN) models for different datasets were trained with varying architectures and hyperparameters, all using RMSprop and CrossEntropyLoss, with an 80\%-20\% train-test split. Most datasets (BAS, AIDS, BZR, COX2, TCR, TG, and ENZYMES) used 3 convolutional layers and 1 dense layer, BBBP (5 conv, 3 dense), COX2, MUTAG (2 conv, 2 dense), TRIANGLES (5 conv, 2 dense), COLORS-3 (2 conv, 1 dense), COLLAB, PROTEINS (3 conv, 2 dense), Fingerprint (5 conv, 5 dense). Learning rates varied between 0.001 and 0.01, with training epochs ranging from 20 to 1000, and batch sizes typically 32, except BBBP, ENZYMES, PROTEINS (64), Fingerprint (128), COLLAB (256), TRIANGLES (1024). The experiments were conducted on an 8GB NVIDIA RTX 4060 GPU with an Intel Core i9-13900HX and 32GB RAM, while node explanation experiments used a 4GB NVIDIA RTX 3050 GPU with an Intel Core i7-11800H and 16GB RAM.

\subsection{Hyperparameters search}\label{app:hyperparams} 
We perform a hyperparameter search according to these combination of parameters: $\alpha \in \{0.001, 0.01, 0.1, 1\}, \beta \in \{0, 0.5, 1\}, K \in \{50, 100, 500, 5000\}, \gamma \in \{0, 0.1, 0.01, 0.001\} $. The optimal configuration found was $\alpha = 0.1, \beta = 0.5, K=50$ and $\gamma = 0$; if accuracy is preferred over other metrics (\Cref{sec:ablation}), then $\gamma = 0.01$ performed best over the other choices.

\subsection{Metrics}\label{app:metrics}

\textbf{Oracle Accuracy} tells us how accurate the model $\Phi$ is at predicting the ground truth labels, $\frac{1}{N} \sum_{i=1}^{N} \mathbf{1}[ \Phi(G_i) = y_i \bigr]$.
 
\noindent\textbf{Validity} illustrates the capability of the explainer to cross the decision boundary of $\Phi$ given an input $G_i$,  $\frac{1}{N}\sum_{i=1}^{N} \mathbf{1}[\Phi(G'_i) \neq \Phi(G_i)]$.

\noindent\textbf{Fidelity} Fidelity measures how faithful the explainer’s generated counterfactuals are to the original instance’s true labels, not just to the oracle’s learned decision boundary. $\frac{1}{N}\sum_{i=1}^{N} \max(\mathbf{1}(\Phi(G_i) = y_i) - \mathbf{1}[\Phi(G'_i) = y_i],0)$.

Note that if the oracle misclassifies an instance, the explainer may produce a counterfactual that changes the predicted class to the original class; in such cases, accuracy would indicate success even though the counterfactual is not a true solution, whereas fidelity correctly reflects this.

\noindent\textbf{Sparsity} measures the ratio between the number of structural features modified to obtain a valid counterfactual explanation and the number of structural features in the original instance.

\noindent\textbf{Graph Edit Distance (GED)} tracks the modification applied to the graph of the valid counterfactual explanation, i.e., the number of added/removed nodes/edges.

\noindent\textbf{Oracle Calls} is the number of times the explainer calls $\Phi$ to generate a valid counterfactual explanation.

\noindent \textbf{Runtime to counterfactual}: the time in seconds required by the explainer to generate a valid counterfactual explanation.
\Cref{tab:results_extended,tab:results_extended_new_row} shows all the performances of XPlore against the SoTA methods .~\Cref{fig:runtime} shows the runtime of all explainers, showcasing that XPlore is lightweight and only second to CLEAR and CF\textsuperscript{2}. However, it shows that XPlore is more efficient than its baseline CF-GNNExpl. which is, notice, more restrictive in perturbing the input graph (i.e., only allowing edge deletions). Conversely, XPlore allows both edge additions and removals, as well as node perturbations. {\Cref{fig:violin_plots} presents the CS and GED distributions, showing that XPlore retains high semantic meaning in the CFs it generates. XPlore performs better because it can also identify CFs in hard instances where competitors fail. Consequently, it achieves CS scores comparable to or better than those of other baselines on easy samples, while maintaining strong semantic meaning for hard instances that competitors miss.}

\begin{table*}[!t]
    \centering
    \caption{Extended comparison of XPlore with SoTA methods on the used datasets. Standard deviation is reported. Bold is best-performing; underline is second-best. Highlighted columns show the key metrics used to evaluate the explainer's faithfulness in generating counterfactuals. When an explainer cannot produce valid counterfactuals in a dataset, it does not make sense to evaluate the other metrics (see $-$).}
    \label{tab:results_extended}
\begin{minipage}{.48\linewidth}
\resizebox{\linewidth}{!}{%
        \begin{tabular}{@{}ll
>{\columncolor[HTML]{EFEFEF}}c 
>{\columncolor[HTML]{EFEFEF}}c cc@{}}
\toprule
Dataset &
  Method &
  \cellcolor[HTML]{EFEFEF}Validity$\uparrow$ &
  \cellcolor[HTML]{EFEFEF}Fidelity$\uparrow$ &
  Sparsity$\downarrow$ &
  Oracle Calls$\downarrow$ \\ \midrule
                        & iRand         & \cellcolor[HTML]{EFEFEF}27.92\% ± 44.86\% & \cellcolor[HTML]{EFEFEF}0.279 ± 0.449 & \textbf{0.062} ± 0.023 & 10.666 ± 0.084 \\
                        & CF$^\text{2}$ & \cellcolor[HTML]{EFEFEF}50.04\% ± 50.00\% & \cellcolor[HTML]{EFEFEF}0.500 ± 0.500  & 0.491 ± 0.000  & \textbf{0.000} ± 0.000 \\
                        & CLEAR         & \cellcolor[HTML]{EFEFEF}50.68\% ± 50.00\% & \cellcolor[HTML]{EFEFEF}0.507 ± 0.500 & 3.037 ± 0.193 & \textbf{0.000} ± 0.004 \\
                        & RSGG-CE       & \cellcolor[HTML]{EFEFEF}\underline{67.90\%} ± 46.69\% & \cellcolor[HTML]{EFEFEF}\underline{0.679} ± 0.467 & \underline{0.305} ± 0.057 & 8.890 ± 0.169 \\
                        & D4Explainer   & \cellcolor[HTML]{EFEFEF}44.82\% ± 49.73\% & \cellcolor[HTML]{EFEFEF}0.448 ± 0.497 & 0.491 ± 0.000 & \textbf{0.000} 0.012 \\ \cmidrule(l){2-6} 
                        & CF-GNNExpl    & \cellcolor[HTML]{EFEFEF}50.04\% ± 50.00\% & \cellcolor[HTML]{EFEFEF}0.500 ± 0.500  & 0.745 ± 0.000 & 46.000 ± 0.399\\
    \multirow{-7}{*}{\color{black}{TCR}} &
      XPlore & \cellcolor[HTML]{EFEFEF}\textbf{100.00\%} ± 0.00\% & \cellcolor[HTML]{EFEFEF}\textbf{1.000} ± 0.000 & 2.708 ± 0.906 & \underline{3.593} ± 0.009 \\ \bottomrule
    \end{tabular}
}
\end{minipage}%
    \hfill
\begin{minipage}{.48\linewidth}
\resizebox{\linewidth}{!}{%
        \begin{tabular}{@{}ll
>{\columncolor[HTML]{EFEFEF}}c 
>{\columncolor[HTML]{EFEFEF}}c cc@{}}
\toprule
Dataset &
  Method &
  \cellcolor[HTML]{EFEFEF}Validity$\uparrow$ &
  \cellcolor[HTML]{EFEFEF}Fidelity$\uparrow$ &
  Sparsity$\downarrow$ &
  Oracle Calls$\downarrow$ \\ \midrule
                        & iRand         & \cellcolor[HTML]{EFEFEF}36.16\% ± 48.05\% & \cellcolor[HTML]{EFEFEF}0.356 ± 0.485 & \textbf{0.175} ± 0.086 & 69.177 ± 0.0542 \\
                        & CF$^\text{2}$ & \cellcolor[HTML]{EFEFEF}49.86\% ± 50.00\% & \cellcolor[HTML]{EFEFEF}0.499 ± 0.500 & 0.496 ± 0.000 & \textbf{0.000} ± 0.001 \\
                        & CLEAR         & \cellcolor[HTML]{EFEFEF}58.40\% ± 49.29\% & \cellcolor[HTML]{EFEFEF}0.584 ± 0.493 & 6.353 ± 0.400 & \textbf{0.000} ± 0.003 \\
                        & RSGG-CE       & \cellcolor[HTML]{EFEFEF}\underline{89.28\%} ± 30.94\% & \cellcolor[HTML]{EFEFEF}\underline{0.888} ± 0.324 & 0.316 ± 0.091 & 19.152 ± 0.911 \\
                        & D4Explainer   & \cellcolor[HTML]{EFEFEF}49.86\% ± 50.00\% & \cellcolor[HTML]{EFEFEF}0.499 ± 0.500 & 0.500 ± 0.001 & \textbf{0.000} ± 0.013 \\ \cmidrule(l){2-6} 
                        & CF-GNNExpl    & \cellcolor[HTML]{EFEFEF}49.86\% ± 50.00\% & \cellcolor[HTML]{EFEFEF}0.499 ± 0.500 & 0.784 ± 0.000 & 46.000 ± 0.414 \\
\multirow{-7}{*}{TG} &
  XPlore & \cellcolor[HTML]{EFEFEF}\textbf{100.00\%} ± 0.00\% & \cellcolor[HTML]{EFEFEF}\textbf{0.994} ± 0.106 & \underline{0.248} ± 0.004 & \underline{14.692} ± 0.236 \\ \bottomrule
    \end{tabular}
}
\end{minipage}%
\hfill
\begin{minipage}{.48\linewidth}
\resizebox{\linewidth}{!}{%
        \begin{tabular}{@{}ll
>{\columncolor[HTML]{EFEFEF}}c 
>{\columncolor[HTML]{EFEFEF}}c cc@{}}
\toprule
Dataset &
  Method &
  \cellcolor[HTML]{EFEFEF}Validity$\uparrow$ &
  \cellcolor[HTML]{EFEFEF}Fidelity$\uparrow$ &
  Sparsity$\downarrow$ &
  Oracle Calls$\downarrow$ \\ \midrule
                        & iRand         & \cellcolor[HTML]{EFEFEF}50.70\% ± 50.00\% & \cellcolor[HTML]{EFEFEF}0.507 ± 0.500 & \textbf{0.112} ± 0.044 & 43.600 ± 0.293 \\
                        & CF$^\text{2}$ & \cellcolor[HTML]{EFEFEF}45.78\% ± 49.82\% & \cellcolor[HTML]{EFEFEF}0.457 ± 0.499 & 0.496 ± 0.000 & \textbf{0.000} ± 0.001 \\
                        & CLEAR         & \cellcolor[HTML]{EFEFEF}50.96\% ± 49.99\% & \cellcolor[HTML]{EFEFEF}0.510 ± 0.500 & 6.007 ± 0.325 & \textbf{0.000} ± 0.035 \\
                        & RSGG-CE       & \cellcolor[HTML]{EFEFEF}\underline{91.04\%} ± 28.56\% & \cellcolor[HTML]{EFEFEF}\underline{0.910} ± 0.286 & \underline{0.331} ± 0.108 & 23.585 ± 1.069 \\
                        & D4Explainer   & \cellcolor[HTML]{EFEFEF}44.56\% ± 49.70\%           & \cellcolor[HTML]{EFEFEF}0.446 ± 0.497 & 0.499 ± 0.002 & \textbf{0.000} ± 0.014 \\ \cmidrule(l){2-6} 
                        & CF-GNNExpl    & \cellcolor[HTML]{EFEFEF}44.18\% ± 49.66\% & \cellcolor[HTML]{EFEFEF}0.442 ± 0.497 & 0.748 ± 0.000 & 46.000 ± 0.283 \\
\multirow{-7}{*}{BAS} &
  XPlore &
  \cellcolor[HTML]{EFEFEF}\textbf{100.00\%} ± 0.00\% & \cellcolor[HTML]{EFEFEF}\textbf{1.000} ± 0.000 & 7.600 ± 7.557 & \underline{2.514} ± 0.047 \\ \bottomrule
    \end{tabular}
}
\end{minipage}%
\hfill
\begin{minipage}{.48\linewidth}
\resizebox{\linewidth}{!}{%
        \begin{tabular}{@{}ll
>{\columncolor[HTML]{EFEFEF}}c 
>{\columncolor[HTML]{EFEFEF}}c cc@{}}
\toprule
Dataset &
  Method &
  \cellcolor[HTML]{EFEFEF}Validity$\uparrow$ &
  \cellcolor[HTML]{EFEFEF}Fidelity$\uparrow$ &
  Sparsity$\downarrow$ &
  Oracle Calls$\downarrow$ \\ \midrule
                        & iRand         & 2.66\% ± 16.09 & 0.005 ± 0.163 & \textbf{0.035} ± 0.011 & 4.200 ± 0.004 \\
                        & CF$^\text{2}$ & 0.00\% ± 0.00\% & 0.005 ± 0.318 & $-$& $-$\\
                        & CLEAR         & \cellcolor[HTML]{EFEFEF}35.11\% ± 47.73\% & \cellcolor[HTML]{EFEFEF}0.309 ± 0.506 & 2.438 ± 0.375 & \textbf{0.000} ± 0.001 \\
                        & RSGG-CE       & \cellcolor[HTML]{EFEFEF}\underline{56.91\%} ± 49.52\% & \cellcolor[HTML]{EFEFEF}\underline{0.516} ± 0.550 & \underline{0.264} ± 0.004 & \underline{2.000} ± 0.002 \\
                        & D4Explainer   & 9.57\% ± 29.42\% & 0.021 ± 0.309 & 0.526 ± 0.016 & \textbf{0.000} ± 0.007 \\ \cmidrule(l){2-6} 
                        & CF-GNNExpl    & 10.11\% ± 30.14\% & 0.005 ± 0.318 & 0.758 ± 0.005 & 46.000 ± 0.147 \\
\multirow{-7}{*}{MUTAG} & XPlore   & \textbf{67.55\%} ± 46.82\% & \textbf{0.548} ± 0.613 & 0.459 ± 0.036 & 2.866 ± 0.089 \\ \bottomrule
    \end{tabular}
}
\end{minipage}%
\hfill
\begin{minipage}{.48\linewidth}
\resizebox{\linewidth}{!}{%
        \begin{tabular}{@{}ll
>{\columncolor[HTML]{EFEFEF}}c 
>{\columncolor[HTML]{EFEFEF}}c cc@{}}
\toprule
Dataset &
  Method &
  \cellcolor[HTML]{EFEFEF}Validity$\uparrow$ &
  \cellcolor[HTML]{EFEFEF}Fidelity$\uparrow$ &
  Sparsity$\downarrow$ &
  Oracle Calls$\downarrow$ \\ \midrule
                        & iRand         & 27.16\% ± 44.48\% & 0.262 ± 0.451 & \textbf{0.104} ± 0.045 & 25.373 ± 0.213 \\
                        & CF$^\text{2}$ & 19.75\% ± 39.81\% & 0.188 ± 0.403 & 0.516 ± 0.007 & \textbf{0.000} ± 0.002 \\
                        & CLEAR         & \underline{60.49\%} ± 48.89\% & \underline{0.595} ± 0.501 & 4.937 ± 0.843 & \textbf{0.000} ± 0.012 \\
                        & RSGG-CE       & 21.23\% ± 40.90\% & 0.202 ± 0.414 & \underline{0.258} ± 0.003 & \underline{2.000} ± 0.015 \\
                        & D4Explainer   & 20.00\% ± 40.00\% & 0.190 ± 0.405 & 0.523 ± 0.006 & \textbf{0.000} ± 0.015 \\ \cmidrule(l){2-6} 
                        & CF-GNNExpl    & 19.75\% ± 39.81\% & 0.188 ± 0.403 & 0.758 ± 0.003 & 46.000 ± 0.224 \\
\multirow{-7}{*}{BZR}   & XPlore   & \textbf{100.00\%} ± 0.00\% & \textbf{0.980} ± 0.198 & 6.595 ± 3.616 & 2.244 ± 0.029  \\ \bottomrule
    \end{tabular}
}
    \end{minipage}%
    \hfill
\begin{minipage}{.48\linewidth}
\resizebox{\linewidth}{!}{%
        \begin{tabular}{@{}ll
>{\columncolor[HTML]{EFEFEF}}c 
>{\columncolor[HTML]{EFEFEF}}c cc@{}}
\toprule
Dataset &
  Method &
  \cellcolor[HTML]{EFEFEF}Validity$\uparrow$ &
  \cellcolor[HTML]{EFEFEF}Fidelity$\uparrow$ &
  Sparsity$\downarrow$ &
  Oracle Calls$\downarrow$ \\ \midrule
                        & iRand         & \underline{69.81}\% ± 45.91\% & \underline{0.677} ± 0.490 & \textbf{0.087} ± 0.049 & 22.482 ± 0.192 \\
                        & CF$^\text{2}$ & 24.20\% ± 42.83\% & 0.229 ± 0.435 & 4.354 ± 0.235 & \textbf{0.000} ± 0.005 \\
                        & CLEAR         & 22.06\% ± 41.46\% & 0.208 ± 0.421 & \underline{0.512} ± 0.001 & \textbf{0.000} ± 0.002 \\
                        & RSGG-CE       & \textbf{99.36\%} ± 7.99\% & \textbf{0.968} ± 0.238 & 0.761 0.366 & 92.914 ± 1.102 \\
                        & D4Explainer   & 22.06\% ± 41.46\% & 0.208 ± 0.421 & 0.518 ± 0.003 & \textbf{0.000} ± 0.025 \\ \cmidrule(l){2-6} 
                        & CF-GNNExpl    & 22.06\% ± 41.46\% & 0.208 ± 0.421 & 0.756 ± 0.001 & 46.000 0.198 \\
\multirow{-7}{*}{COX2}  & XPlore   & \textbf{99.36\%} ± 7.99\% & \textbf{0.968} ± 0.238 & 7.682 3.704 & \underline{11.388} ± 0.568 \\ \bottomrule
    \end{tabular}
}
    \end{minipage}%
    \hfill
\begin{minipage}{.48\linewidth}
\resizebox{\linewidth}{!}{%
        \begin{tabular}{@{}ll
>{\columncolor[HTML]{EFEFEF}}c 
>{\columncolor[HTML]{EFEFEF}}c cc@{}}
\toprule
Dataset &
  Method &
  \cellcolor[HTML]{EFEFEF}Validity$\uparrow$ &
  \cellcolor[HTML]{EFEFEF}Fidelity$\uparrow$ &
  Sparsity$\downarrow$ &
  Oracle Calls$\downarrow$ \\ \midrule
                        & iRand         & 0.00\% ± 0.00\% & 0.000 ± 0.000                                   & $-$              & $-$              \\
                        & CF$^\text{2}$ & 0.10\% ± 3.16\%                               & 0.001 ± 0.032                      & 3.870 ± 0.011     & \textbf{0.000} ± 0.004 \\
                        & CLEAR         & \cellcolor[HTML]{EFEFEF}16.75\% ± 37.34\% & \cellcolor[HTML]{EFEFEF}0.168 ± 0.373    & 13.442 ± 1.834  & \textbf{0.000} ± 0.008 \\
                        & RSGG-CE       & \underline{19.80\%} ± 39.85\%   & \underline{0.198} ± 0.398   & \textbf{0.258} ± 0.004& \underline{2.000} ± 0.027  \\
                        & D4Explainer   & 0.10\% ± 0.032\%   & 0.001 ± 0.032  & \underline{0.488} ± 0.031& \textbf{0.000} ± 0.001 \\ \cmidrule(l){2-6} 
                        & CF-GNNExpl    & 0.10\% ± 3.16\%                            & 0.001 ± 0.032                  & 0.745 ± 0.005      & 46.000 ± 0.059        \\
                       \multirow{-7}{*}{AIDS} & XPlore   & \textbf{32.30\%} ± 45.96\%& \textbf{0.323} ± 0.460& 2.445 ± 2.002& 5.851± 0.058\\ \bottomrule
    \end{tabular}
}
    \end{minipage}%
    \hfill
    \begin{minipage}{.48\linewidth}
\resizebox{\linewidth}{!}{%
        \begin{tabular}{@{}ll
>{\columncolor[HTML]{EFEFEF}}c 
>{\columncolor[HTML]{EFEFEF}}c cc@{}}
\toprule
Dataset &
  Method &
  \cellcolor[HTML]{EFEFEF}Validity$\uparrow$ &
  \cellcolor[HTML]{EFEFEF}Fidelity$\uparrow$ &
  Sparsity$\downarrow$ &
  Oracle Calls$\downarrow$ \\ \midrule
                        & iRand         & 19.76\% ± 39.82\% & \underline{0.275} ± 0.460 & \textbf{0.059} ± 0.037 & 12.928 ± 0.190 \\
                        & CF$^\text{2}$ & \underline{25.26\%} ± 43.45\% & 0.253 ± 0.434 & 0.510 ± 0.024  & \textbf{0.000} ± 0.002 \\
                        & CLEAR         & \cellcolor[HTML]{EFEFEF}22.90\% ± 42.02\% & \cellcolor[HTML]{EFEFEF}0.229 ± 0.420 & 51.889 ± 32.182 & \textbf{0.000} ± 0.022 \\
                        & RSGG-CE       & \cellcolor[HTML]{EFEFEF}22.90\% ± 42.02\% & \cellcolor[HTML]{EFEFEF}0.229 ± 0.420 & \underline{0.258} ± 0.006 & \underline{2.000} 8.131    \\
                        & D4Explainer   & 21.24\% ± 40.90\% & 0.212 ± 0.409 & 0.524 ± 0.010 & \textbf{0.000} ± 0.038 \\ \cmidrule(l){2-6} 
                        & CF-GNNExpl    & 22.31\% ± 41.64\% & 0.223 ± 0.416 & 0.758 ± 0.005 & 46.000 ± 0.185 \\
\multirow{-7}{*}{BBBP}  & XPlore   & \textbf{81.51\%} \% 38.82\% & \textbf{0.803} ± 0.412 & 2.217 ± 1.206 & 4.055 ± 0.030 \\ \bottomrule
    \end{tabular}
}
    \end{minipage}
\begin{minipage}{.48\linewidth}
\resizebox{\linewidth}{!}{%
        \begin{tabular}{@{}ll
>{\columncolor[HTML]{EFEFEF}}c 
>{\columncolor[HTML]{EFEFEF}}c cc@{}}
\toprule
Dataset &
  Method &
  \cellcolor[HTML]{EFEFEF}Validity$\uparrow$ &
  \cellcolor[HTML]{EFEFEF}Fidelity$\uparrow$ &
  Sparsity$\downarrow$ &
  Oracle Calls$\downarrow$ \\ \midrule
                        & iRand         & 26.67\% ± 44.22\% & 0.238 ± 0.445 & \textbf{0.074} ± 0.046 & 29.069 ± 0.046 \\
                        & CF$^\text{2}$ & 68.33\% ± 46.52\% & 0.633 ± 0.502 & 0.657 ± 0.034 & \textbf{0.000} ± 0.004 \\
                        & CLEAR         & 83.17\% ± 37.42\% & 0.797 0.402 & 27.942 ± 24.021 & \textbf{0.000} ± 0.017 \\
                        & RSGG-CE       & \underline{98.33}\% ± 12.80\% & \underline{0.930} ± 0.292\% & \underline{0.396} ± 0.152 & 19.400 ± 2.801 \\
                        & D4Explainer   & 68.00\% ± 46.65\% & 0.632 ± 0.499 & 0.663 ± 0.499 & \textbf{0.000} ± 0.066 \\ \cmidrule(l){2-6} 
                        & CF-GNNExpl    & 68.33\% ± 46.52\% & 0.633 ± 0.502 & 0.657 ± 0.034 & \underline{2.000} ± 0.002 \\
\multirow{-7}{*}{ENZYMES}  & XPlore   &  \textbf{100.00\%} ± 0.00\% & \textbf{0.942} ± 0.291 & 2.394 ± 1.538 & 2.615 ±  \\ \bottomrule
    \end{tabular}
}
    \end{minipage}
    \hfill
\begin{minipage}{.48\linewidth}
\resizebox{\linewidth}{!}{%
        \begin{tabular}{@{}ll
>{\columncolor[HTML]{EFEFEF}}c 
>{\columncolor[HTML]{EFEFEF}}c cc@{}}
\toprule
Dataset &
  Method &
  \cellcolor[HTML]{EFEFEF}Validity$\uparrow$ &
  \cellcolor[HTML]{EFEFEF}Fidelity$\uparrow$ &
  Sparsity$\downarrow$ &
  Oracle Calls$\downarrow$ \\ \midrule
                        & iRand         & 18.87\% ± 39.13\% & 0.183 ± 0.394 & \textbf{0.107} ± 0.078 & 75.524 ± 4.036 \\
                        & CF$^\text{2}$ & 16.35\% ± 36.98\% & 0.151 ± 0.375 & 0.651 ± 0.025 & \textbf{0.000} ± 0.001 \\
                        & CLEAR         & 0.00\% ± 0.000\% & 0.000 ± 0.000& $-$& $-$\\
                        & RSGG-CE       & \underline{58.67}\% 49.24\% & \underline{0.556} ± 0.527 & 0.664 ± 0.477 & 78.757 ± 3.667 \\
                        & D4Explainer   & 0.00\% ± 0.00\% & 0.000 ± 0.000 & $-$& $-$\\ \cmidrule(l){2-6} 
                        & CF-GNNExpl    & 16.35\% ± 36.98\% & 0.151 ± 0.375 & 0.651 ± 0.025 & \underline{2.000} ± 0.002 \\
\multirow{-7}{*}{PROTEINS}  & XPlore   &  \textbf{65.41\%} ± 47.57\% & \textbf{0.627} ± 0.511 & \underline{0.569} ± 0.039 & 2.798 ± 0.006 \\ \bottomrule
    \end{tabular}
}
    \end{minipage}
    \hfill
\begin{minipage}{.48\linewidth}
\resizebox{\linewidth}{!}{%
        \begin{tabular}{@{}ll
>{\columncolor[HTML]{EFEFEF}}c 
>{\columncolor[HTML]{EFEFEF}}c cc@{}}
\toprule
Dataset &
  Method &
  \cellcolor[HTML]{EFEFEF}Validity$\uparrow$ &
  \cellcolor[HTML]{EFEFEF}Fidelity$\uparrow$ &
  Sparsity$\downarrow$ &
  Oracle Calls$\downarrow$ \\ \midrule
                        & iRand         & 0.09\% ± 3.05\% & 0.001 ± 0.030\% & \textbf{0.031} ± 0.000 & 3.500 ± 0.007 \\
                        & CF$^\text{2}$ & 24.52\% ± 43.02\% & 0.133 ± 0.400 & 0.412 ± 0.052 & \textbf{0.000} ± 0.000 \\
                        & CLEAR         & 72.73\% ± 44.53\% & 0.515 ± 0.537 & 15.517 ± 14.400 & \textbf{0.000} ± 0.001 \\
                        & RSGG-CE       & \underline{90.46}\% ± 29.38\% & \underline{0.653} ± 0.566 & \underline{0.329} ± 0.142 & 6.479 ± 0.048 \\
                        & D4Explainer   & 24.52\% ± 43.02\% & 0.133 ± 0.400 & 0.412 ± 0.052 & \textbf{0.000} ± 0.014 \\ \cmidrule(l){2-6} 
                        & CF-GNNExpl    & 24.52\% ± 43.02\% & 0.133 ± 0.400 & 0.412 ± 0.052 & \underline{2.000} ± 0.002 \\
\multirow{-7}{*}{Fingerprint}  & XPlore   &  \textbf{100.00\%} ± 0.00\%& \textbf{0.730} ± 0.487& 0.872 ± 0.591& 4.168 ± 0.052\\ \bottomrule
    \end{tabular}
}
    \end{minipage}
    \hfill
\begin{minipage}{.48\linewidth}
\resizebox{\linewidth}{!}{%
        \begin{tabular}{@{}ll
>{\columncolor[HTML]{EFEFEF}}c 
>{\columncolor[HTML]{EFEFEF}}c cc@{}}
\toprule
Dataset &
  Method &
  \cellcolor[HTML]{EFEFEF}Validity$\uparrow$ &
  \cellcolor[HTML]{EFEFEF}Fidelity$\uparrow$ &
  Sparsity$\downarrow$ &
  Oracle Calls$\downarrow$ \\ \midrule
                        & iRand         & \underline{95.74\%} ± 20.20\% & \underline{0.391} ± 0.656 & \textbf{0.027} ± 0.039 & 24.731 ± 0.383 \\
                        & CF$^\text{2}$ & 90.94\% ± 28.70\% & 0.371 ± 0.523 & 0.719 ± 0.009 & \textbf{0.000} ± 0.000 \\
                        & CLEAR         & 23.98\% ± 42.70\% & 0.012 ± 0.158 & 10.630 ± 1.095 & \textbf{0.000} ± 0.034 \\
                        & RSGG-CE       & \textbf{100.00\%} ± 0.00\% & \textbf{0.423} ± 0.538 & \underline{0.368} ± 0.050 & 7.536 ± 1.666 \\
                        & D4Explainer   & 90.94\% ± 28.70\% & 0.371 ± 0.523 & 0.721 ± 0.009 & \textbf{0.000} ± 0.047 \\ \cmidrule(l){2-6} 
                        & CF-GNNExpl    & 90.94\% ± 28.70\% & 0.371 ± 0.523 & 0.859 ± 0.004 & 46.000 ± 0.078 \\
\multirow{-7}{*}{{MSRC}}  & XPlore   &  \textbf{100.00\%} ± 0.00\% & \textbf{0.423} ± 0.569 & 5.059 ± 0.756 & \underline{2.089} ± 0.006 \\ \bottomrule
    \end{tabular}
}
    \end{minipage}
    \hfill
\begin{minipage}{.48\linewidth}
\resizebox{\linewidth}{!}{%
        \begin{tabular}{@{}ll
>{\columncolor[HTML]{EFEFEF}}c 
>{\columncolor[HTML]{EFEFEF}}c cc@{}}
\toprule
Dataset &
  Method &
  \cellcolor[HTML]{EFEFEF}Validity$\uparrow$ &
  \cellcolor[HTML]{EFEFEF}Fidelity$\uparrow$ &
  Sparsity$\downarrow$ &
  Oracle Calls$\downarrow$ \\ \midrule
                        & iRand         & 4.60\% ± 20.95\% & 0.006 ± 0.213 & \textbf{0.018} ± 0.021 & 58.696 ± 2.505 \\
                        & CF$^\text{2}$ & \underline{52.66\%} ± 49.93\% & \underline{0.262} ± 0.624 & 0.886 ± 0.061 & \textbf{0.000} ± 0.003 \\
                        & CLEAR         & 0.00\% ± 0.00\% & 0.000 ± 0.000 & $-$& $-$\\
                        & RSGG-CE       & 0.00\% ± 0.00\% & 0.000 ± 0.000 & $-$& $-$\\
                        & D4Explainer   & 0.00\% ± 0.00\% & 0.000 ± 0.000 & $-$& $-$\\ \cmidrule(l){2-6} 
                        & CF-GNNExpl    & \underline{52.66\%} ± 49.93\% & \underline{0.262} ± 0.624 & 0.886 ± 0.061 & \underline{2.000} ± 0.004 \\
\multirow{-7}{*}{COLLAB}  & XPlore   &  \textbf{100.00\%} ± 0.00\% & \textbf{0.570} ± 0.709& \underline{0.810} ± 0.068 & 3.604 ± 0.021 \\ \bottomrule
    \end{tabular}
}
    \end{minipage}
    \hfill
\begin{minipage}{.48\linewidth}
\resizebox{\linewidth}{!}{%
        \begin{tabular}{@{}ll
>{\columncolor[HTML]{EFEFEF}}c 
>{\columncolor[HTML]{EFEFEF}}c cc@{}}
\toprule
Dataset &
  Method &
  \cellcolor[HTML]{EFEFEF}Validity$\uparrow$ &
  \cellcolor[HTML]{EFEFEF}Fidelity$\uparrow$ &
  Sparsity$\downarrow$ &
  Oracle Calls$\downarrow$ \\ \midrule
                        & iRand         & 1.17\% ± 10.74\% & 0.006 ± 0.108 & \textbf{0.032} ± 0.019 & \underline{8.780} ± 0.018 \\
                        & CF$^\text{2}$ & 5.76\% ± 23.29\% & 0.027 ± 0.240 & 0.530 ± 0.158 & \textbf{0.000} ± 0.001 \\
                        & CLEAR         & 0.68\% ± 8.21\% & 0.005 ± 0.082 & 3.528 ± 1.037 & \textbf{0.000} ± 0.003 \\
                        & RSGG-CE       & \underline{57.51\%} ± 49.43\% & \underline{0.450} ± 0.686 & \underline{0.419} ± 0.225 & 21.848 ± 0.844 \\
                        & D4Explainer   & 7.02\% ± 25.55\% & 0.031 ± 0.265 & 0.599 ± 0.089 & \textbf{0.000} ± 0.021 \\ \cmidrule(l){2-6} 
                        & CF-GNNExpl    & 5.75\% ± 23.29\% & 0.027 ± 0.240 & 0.765 ± 0.079 & 46.000 ± 0.035 \\
\multirow{-7}{*}{{DBLP}}  & XPlore   &  \textbf{91.84\%} ± 27.37\% & \textbf{0.748} ± 0.765 & 0.569 ± 0.278 & 36.115 ± 0.097 \\ \bottomrule
    \end{tabular}
}
    \end{minipage}
    \hfill

\end{table*}

\begin{table*}[!t]
    \centering
    \caption{Extended comparison of XPlore with SoTA methods on the used datasets. Standard deviation is reported. Bold is best-performing; underline is second-best. Highlighted columns show the key metrics used to evaluate the explainer's faithfulness in generating counterfactuals. When an explainer cannot produce valid counterfactuals in a dataset, it does not make sense to evaluate the other metrics (see $-$).}
    \label{tab:results_extended_new_row}
\begin{minipage}{.48\linewidth}
\resizebox{\linewidth}{!}{%
        \begin{tabular}{@{}ll
>{\columncolor[HTML]{EFEFEF}}c 
>{\columncolor[HTML]{EFEFEF}}c cc@{}}
\toprule
Dataset &
  Method &
  \cellcolor[HTML]{EFEFEF}Validity$\uparrow$ &
  \cellcolor[HTML]{EFEFEF}Fidelity$\uparrow$ &
  Sparsity$\downarrow$ &
  Oracle Calls$\downarrow$ \\ \midrule
                        & iRand         & 3.70\% ± 18.88\% & 0.026 ± 0.192 & \textbf{0.024} ± 0.013 & 10.865 ± 0.048 \\
                        & CF$^\text{2}$ & 50.60\% ± 50.00\% & 0.366 ± 0.674 & 0.803 ± 0.038 & \textbf{0.000} ± 0.001 \\
                        & CLEAR         & 56.20\% ± 49.61\% & 0.420 ± 0.696 & 34.773 ± 15.650 & \textbf{0.000} ± 0.022 \\
                        & RSGG-CE       & \textbf{86.10\%} ± 34.60\% & \textbf{0.664} ± 0.802 & 0.462 ± 0.122 & 34.890 ± 1.795\\
                        & D4Explainer   & 49.60\% ± 50.00\% & 0.358 ± 0.669 & 0.797 ± 0.037 & \textbf{0.000} ± 0.015 \\ \cmidrule(l){2-6} 
                        & CF-GNNExpl    & 50.60\% ± 50.00\% & 0.366 ± 0.674 & 0.902 ± 0.019 & 46.000 ± 0.230 \\
\multirow{-7}{*}{{IMDB}}  & XPlore   &  \underline{78.80} ± 40.87\% & \underline{0.606} ± 0.780 & \underline{0.390} ± 0.398 & \underline{5.042} ± 0.047 \\ \bottomrule
    \end{tabular}
}
    \end{minipage}
    \hfill
\begin{minipage}{.48\linewidth}
\resizebox{\linewidth}{!}{%
        \begin{tabular}{@{}ll
>{\columncolor[HTML]{EFEFEF}}c 
>{\columncolor[HTML]{EFEFEF}}c cc@{}}
\toprule
Dataset &
  Method &
  \cellcolor[HTML]{EFEFEF}Validity$\uparrow$ &
  \cellcolor[HTML]{EFEFEF}Fidelity$\uparrow$ &
  Sparsity$\downarrow$ &
  Oracle Calls$\downarrow$ \\ \midrule
                        & iRand         & 0.00\% ± 0.00\% & 0.000 ± 0.000 & $-$ & $-$ \\
                        & CF$^\text{2}$ & 38.82\% ± 48.74\% & 0.345 ± 0.545 & 0.476 ± 0.111 & \textbf{0.000} ± 0.001 \\
                        & CLEAR         & 7.65\% ± 26.58\% & 0.039 ± 0.277 & 2.352 ± 0.246 & \textbf{0.000} ± 0.003 \\
                        & RSGG-CE       & \underline{48.36\%} ± 49.97\% & \underline{0.434} ± 0.579 & \underline{0.238} ± 0.056 & 2.000 ± 0.003 \\
                        & D4Explainer   & 23.45\% ± 42.37\% & 0.161 ± 0.476 & 0.386 ± 0.157 & \textbf{0.000} ± 0.012 \\ \cmidrule(l){2-6} 
                        & CF-GNNExpl    & 38.82\% ± 48.74\% & 0.345 ± 0.545 & 0.476 ± 0.111 & \underline{1.000} ± 0.061 \\
\multirow{-7}{*}{{TWITTER}}  & XPlore   &  \textbf{50.71\%} ± 50.00\% & \textbf{0.613} ± 0.626 & \textbf{0.217} ± 0.143 & \underline{1.000} ± 0.027 \\ \bottomrule
    \end{tabular}
}
    \end{minipage}
    \hfill
\begin{minipage}{.48\linewidth}
\resizebox{\linewidth}{!}{%
        \begin{tabular}{@{}ll
>{\columncolor[HTML]{EFEFEF}}c 
>{\columncolor[HTML]{EFEFEF}}c cc@{}}
\toprule
Dataset &
  Method &
  \cellcolor[HTML]{EFEFEF}Validity$\uparrow$ &
  \cellcolor[HTML]{EFEFEF}Fidelity$\uparrow$ &
  Sparsity$\downarrow$ &
  Oracle Calls$\downarrow$ \\ \midrule
                        & iRand         & 42.99\% ± 49.51\% & 0.300 ± 0.571 & \textbf{0.076} ± 0.064 & 66.175 ± 62.151 \\
                        & CF$^\text{2}$ & 52.07\% ± 49.96\% & 0.398 ± 0.568 & 0.598 ± 0.046 & \textbf{0.000} ± 0.001 \\
                        & CLEAR         & 0.00\% ± 0.00\% & 0.000 ± 0.000 & $-$& $-$\\
                        & RSGG-CE       & \underline{94.57}\% ± 22.66\% & \underline{0.824} ± 0.471 & \underline{0.312} ± 0.079 & 4.040 ± 0.848 \\
                        & D4Explainer   & 0.00\% ± 0.00\% & 0.000 ± 0.000 & $-$& $-$\\ \cmidrule(l){2-6} 
                        & CF-GNNExpl    & 52.07\% ± 49.96\% & 0.398 ± 0.568 & 0.598 ± 0.046 & \underline{2.000} ±0.007 \\
\multirow{-7}{*}{COLORS-3}  & XPlore   &  \textbf{100.00\%} ± 0.00\% & \textbf{0.871} ± 0.453 & 0.536 ± 0.057 & 12.204 ± 0.155 \\ \bottomrule
    \end{tabular}
}
    \end{minipage}
    \hfill
\begin{minipage}{.48\linewidth}
\resizebox{\linewidth}{!}{%
        \begin{tabular}{@{}ll
>{\columncolor[HTML]{EFEFEF}}c 
>{\columncolor[HTML]{EFEFEF}}c cc@{}}
\toprule
Dataset &
  Method &
  \cellcolor[HTML]{EFEFEF}Validity$\uparrow$ &
  \cellcolor[HTML]{EFEFEF}Fidelity$\uparrow$ &
  Sparsity$\downarrow$ &
  Oracle Calls$\downarrow$ \\ \midrule
                        & iRand         & 6.38\% ± 24.44\% & 0.053 ± 0.247 & \textbf{0.123} ± 0.080 & 71.421 ± 0.346 \\
                        & CF$^\text{2}$ & 37.13\% ± 48.31\% & 0.364 ± 0.488 & 0.621 0.032 & \textbf{0.000} ± 0.001 \\
                        & CLEAR         & 89.99\% ± 30.01\% & 0.892 ± 0.311 & 53.517 ± 36.786 & \textbf{0.000} ± 0.019 \\
                        & RSGG-CE       & \underline{99.84}\% ± 4.00\% & \underline{0.987} ± 0.141 & \underline{0.325} ± 0.080 & 5.161 ± 0.378 \\
                        & D4Explainer   & 31.11\% ± 46.29\% & 0.302 ± 0.468 & 0.619 ± 0.038 & \textbf{0.000} ± 76.890 \\ \cmidrule(l){2-6} 
                        & CF-GNNExpl    & 37.13\% ± 48.31\% & 0.364 ± 0.488 & 0.621 ± 0.032 & 46.000 ± 0.056 \\
\multirow{-7}{*}{TRIANGLES}  & XPlore   &  \textbf{100.00\%} ± 0.00\% & \textbf{0.988} ± 0.138 & 0.532 ± 0.053 & \underline{3.366} ± 0.026 \\ \bottomrule
    \end{tabular}
}

    \end{minipage}
    \hfill
\end{table*}

\begin{figure*}[!t]
    \centering
    \begin{subfigure}{\textwidth}
        \centering
        \includegraphics[width=\linewidth]{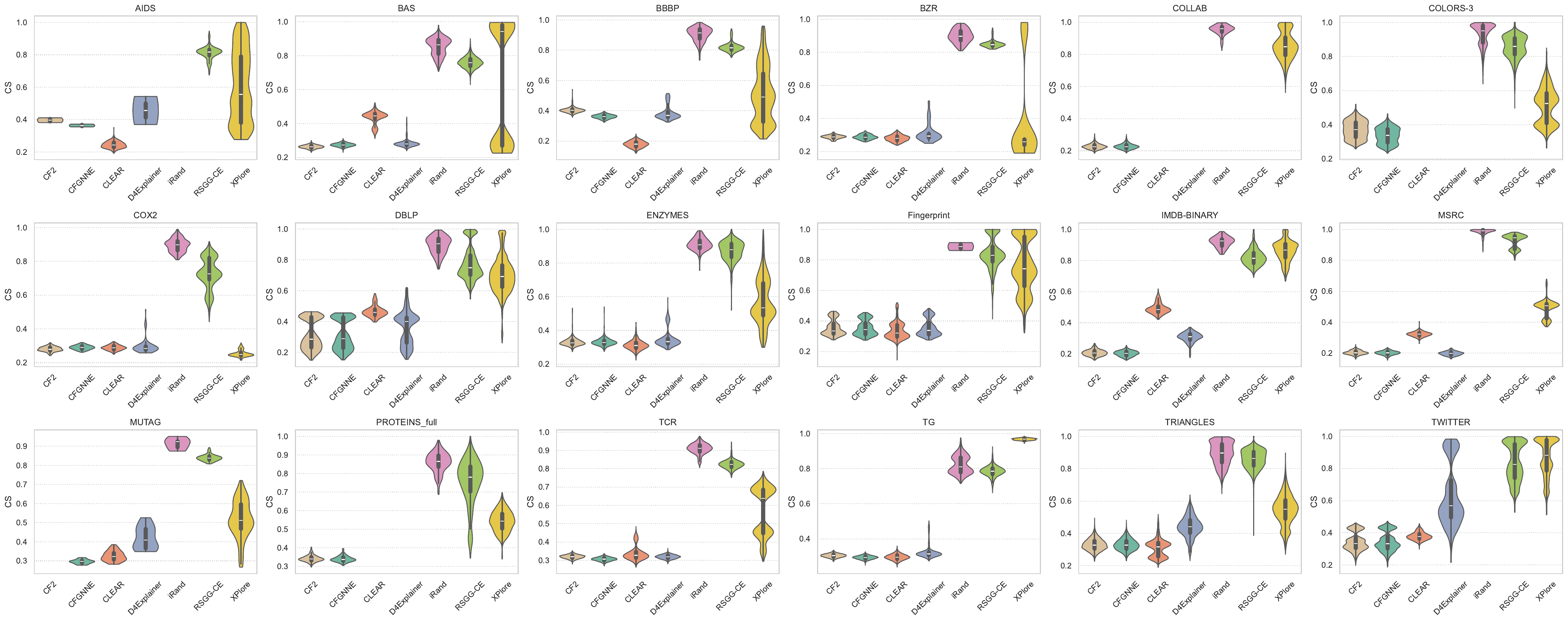}
    \caption{{Cosine similarity distributions comparison.}}
    \label{fig:CS-violin}
    \end{subfigure}
    \begin{subfigure}{\textwidth}
        \centering
        \includegraphics[width=\linewidth]{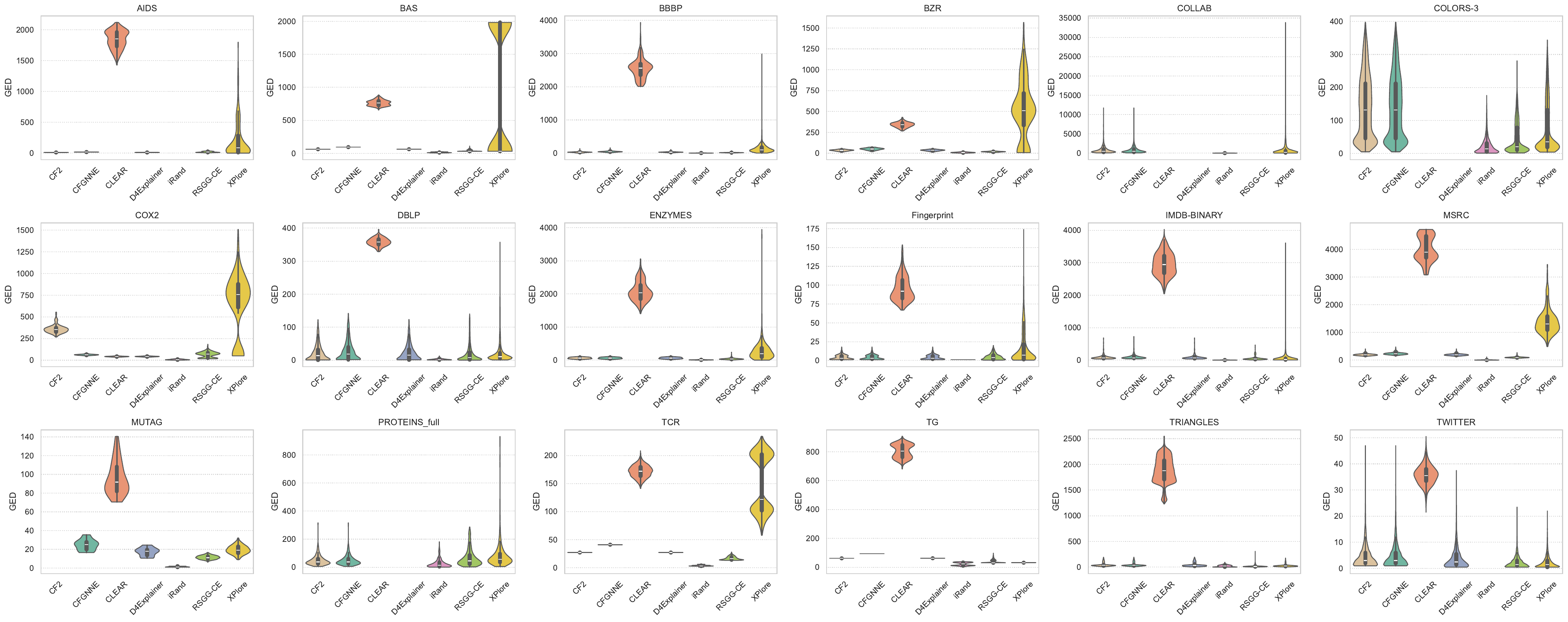}
    \caption{{Graph edit distance distributions comparison.}}
        \label{fig:GED-violin}
    \end{subfigure}
    \caption{{XPlore identifies more CFs, thus capturing harder instances. Elongated and bimodal distributions reflect competitive performance and semantic meaning retention.}}
    \label{fig:violin_plots}
\end{figure*}

\subsection{Out-of-distribution effect analysis}\label{app:OOD}
{
\paragraph{Comparison of XPlore and RSGG: t-SNE projection of Wavelet Characteristic embeddings (Tree-Cycle dataset)}
We can compare XPlore's behaviour with that of the second-best explainer on the Tree-Cycle dataset. Depending on the original instance, we can see distinct behaviours of the two explainers taking place, let's analyze them.

\begin{figure*}[!t]
    \centering
        \includegraphics[width=\linewidth]{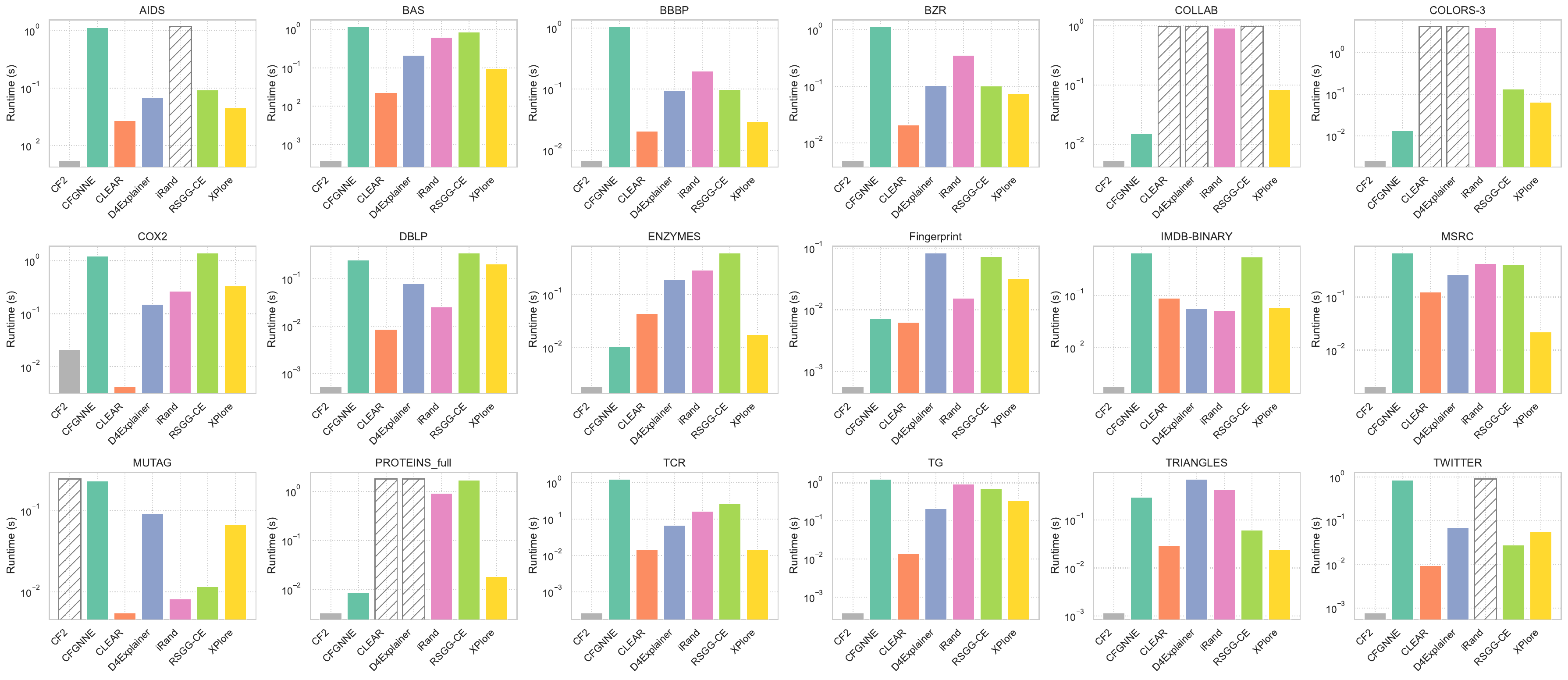}
    \caption{Runtime (s) at inference across multiple datasets. \textbf{XPlore (yellow) maintains a competitive runtime, making it the most reliable explainer in practice.} Placeholders are used for explainers with 0\% accuracy.}
    \label{fig:runtime}
\end{figure*}
\subparagraph{XPlore unable to land on Tree distribution for a Cycle CF}
XPlore sometimes is unable to land on the Tree distribution for a Cycle CF (\Cref{fig:OOD-1a}). \ref{fig:OOD-1b} is a significant error by XPlore. RSGG is sometimes unable to correctly land on the Tree distribution (\Cref{fig:OOD-1c}).

\subparagraph{XPlore strengths, RSGG weaknesses}
Looking at counterfactuals for the Tree class, the results show an \textbf{opposite trend}, XPlore is able to \textbf{correctly land CFs} on the Cycle distribution, while RSGG struggles to generate in-distribution CFs, not always landing on the Cycle distribution: (\Cref{fig:OOD-1d}); yet sometimes, both succeed (\Cref{fig:OOD-1e}).

\paragraph{Comparison of XPlore, CF-GNNExpl, and RSGG: t-SNE projection of Wavelet Characteristic embeddings (Tree-Cycle dataset)}
We now compare together CFs for XPlore, CF-GNNExpl, and RSGG. Here, we show only CFs for the Tree class, as we plot only instances where all 3 explainers were able to find a counterfactual, and CF-GNNExpl has success only in finding CFs for this class.

\subparagraph{CF-GNNExpl does not land on Cycle distribution}
CF-GNNExpl does not land on Cycle distribution (\Cref{fig:OOD-2a}). This is expected, as \textbf{CF-GNNExpl only removes edges}, so it cannot produce Cycles. Note how \textbf{XPlore and RSGG behave similarly}. Sometimes, \textbf{XPlore is the only explainer finding a solution} (\Cref{fig:OOD-2b}). Rarely, the embedder gets tricked (we know CF-GNNExpl cannot produce cycles) (\ref{fig:OOD-2c}).

\begin{figure*}[!t]
    \centering
    \begin{subfigure}{0.4\textwidth}
        \centering
        \includegraphics[width=\linewidth]{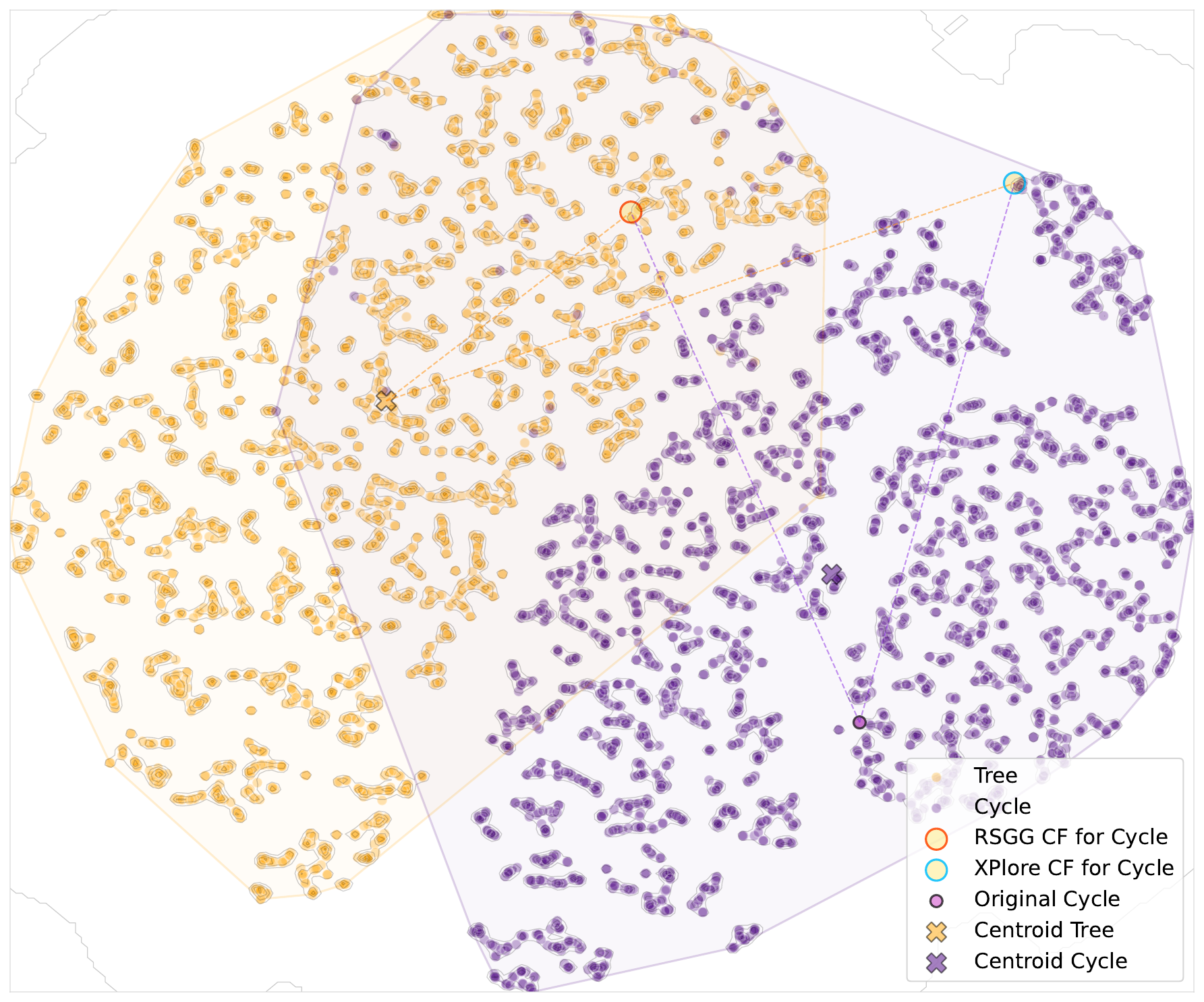}
    \caption{{}}
    \label{fig:OOD-1a}
    \end{subfigure}
    \begin{subfigure}{0.4\textwidth}
        \centering
        \includegraphics[width=\linewidth]{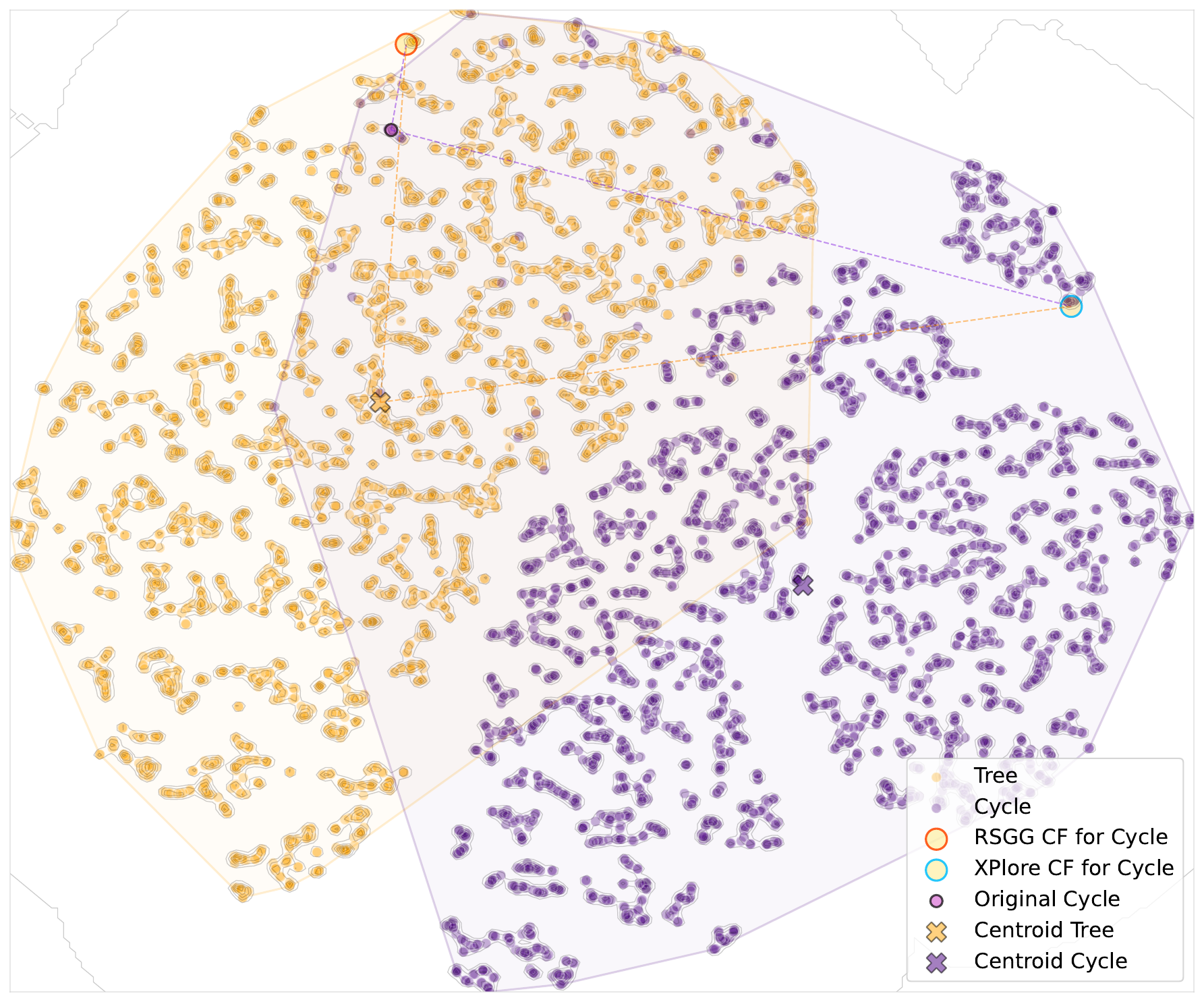}
    \caption{{}}
    \label{fig:OOD-1b}
    \end{subfigure}
    \begin{subfigure}{0.4\textwidth}
        \centering
        \includegraphics[width=\linewidth]{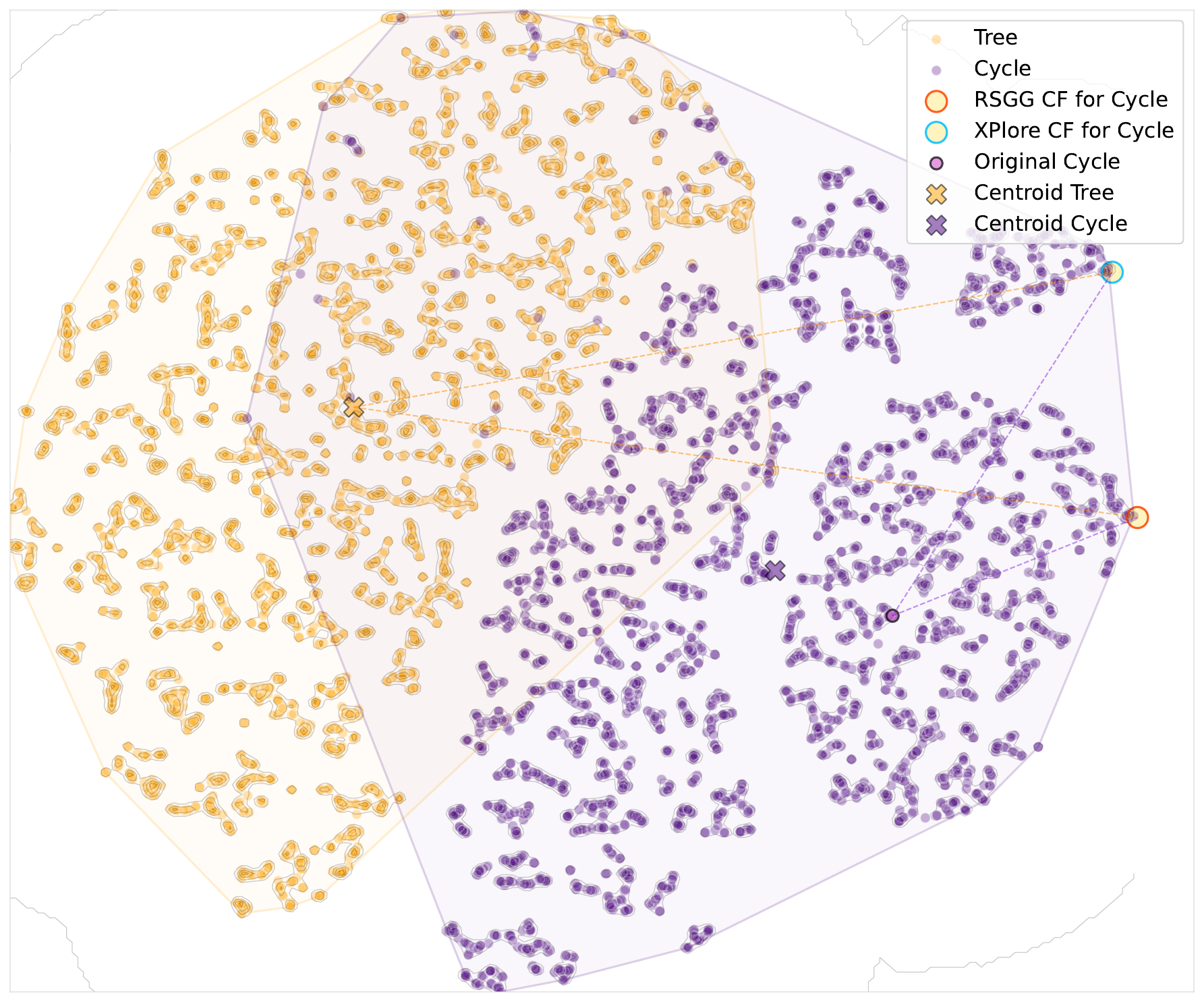}
    \caption{{}}
    \label{fig:OOD-1c}
    \end{subfigure}
    \begin{subfigure}{0.4\textwidth}
        \centering
        \includegraphics[width=\linewidth]{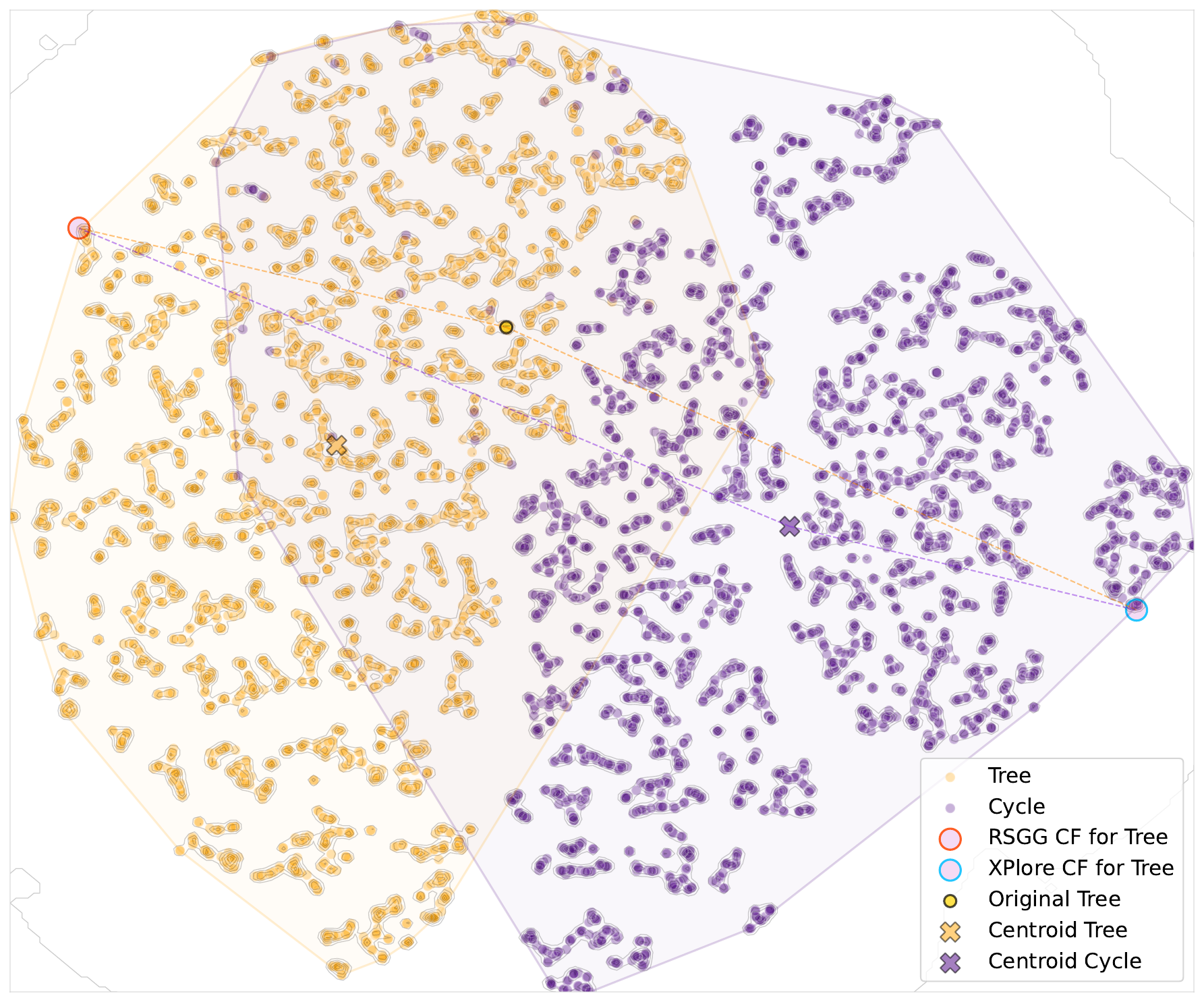}
    \caption{{}}
    \label{fig:OOD-1d}
    \end{subfigure}
    \begin{subfigure}{0.4\textwidth}
        \centering
        \includegraphics[width=\linewidth]{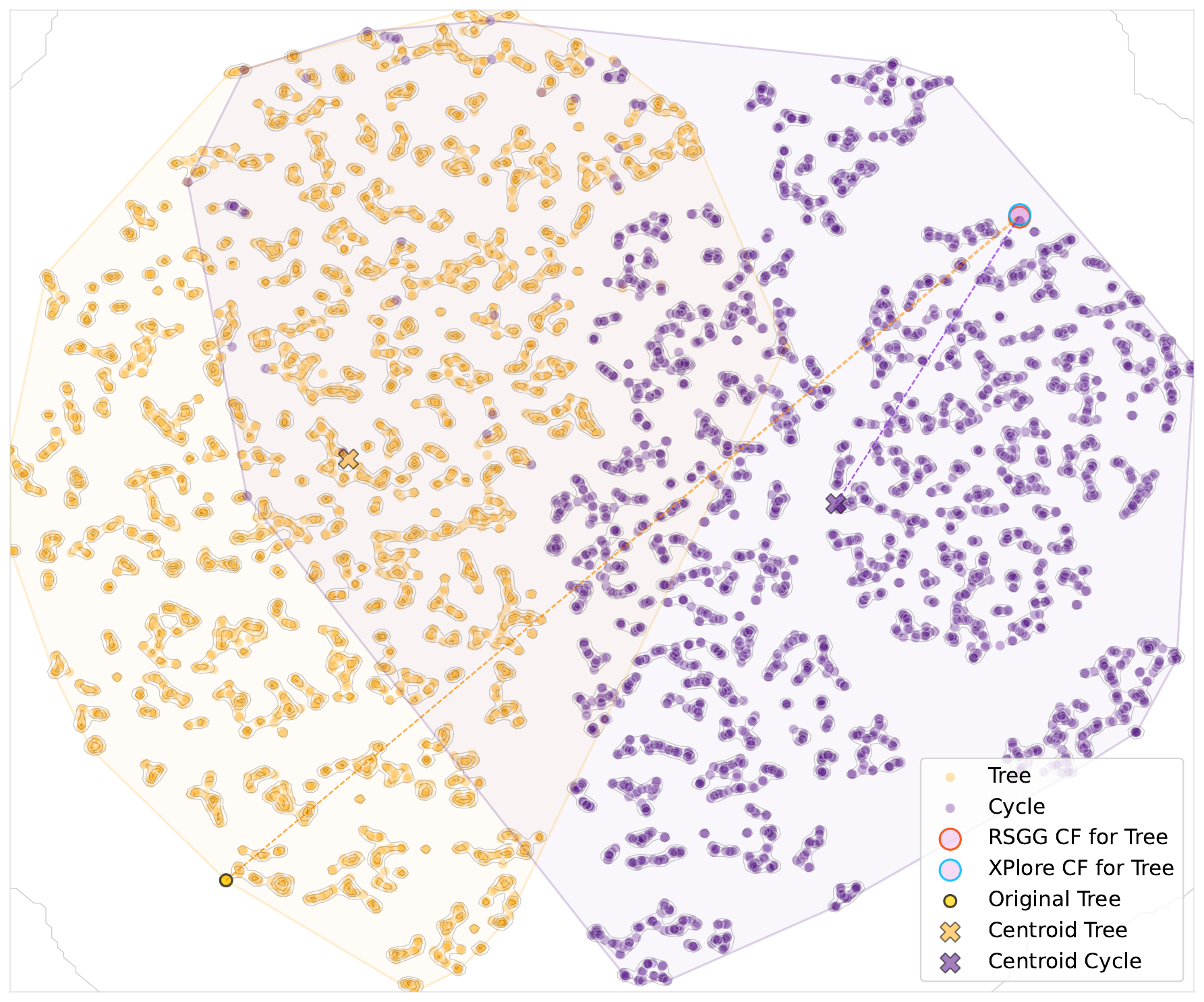}
    \caption{{}}
    \label{fig:OOD-1e}
    \end{subfigure}
    \caption{{Comparison of XPlore and RSGG: t-SNE projection of Wavelet Characteristic embeddings (Tree-Cycle dataset)}}
    \label{fig:OOD-1}
\end{figure*}

\begin{figure*}[!t]
    \centering
    \begin{subfigure}{0.49\textwidth}
        \centering
        \includegraphics[width=\linewidth]{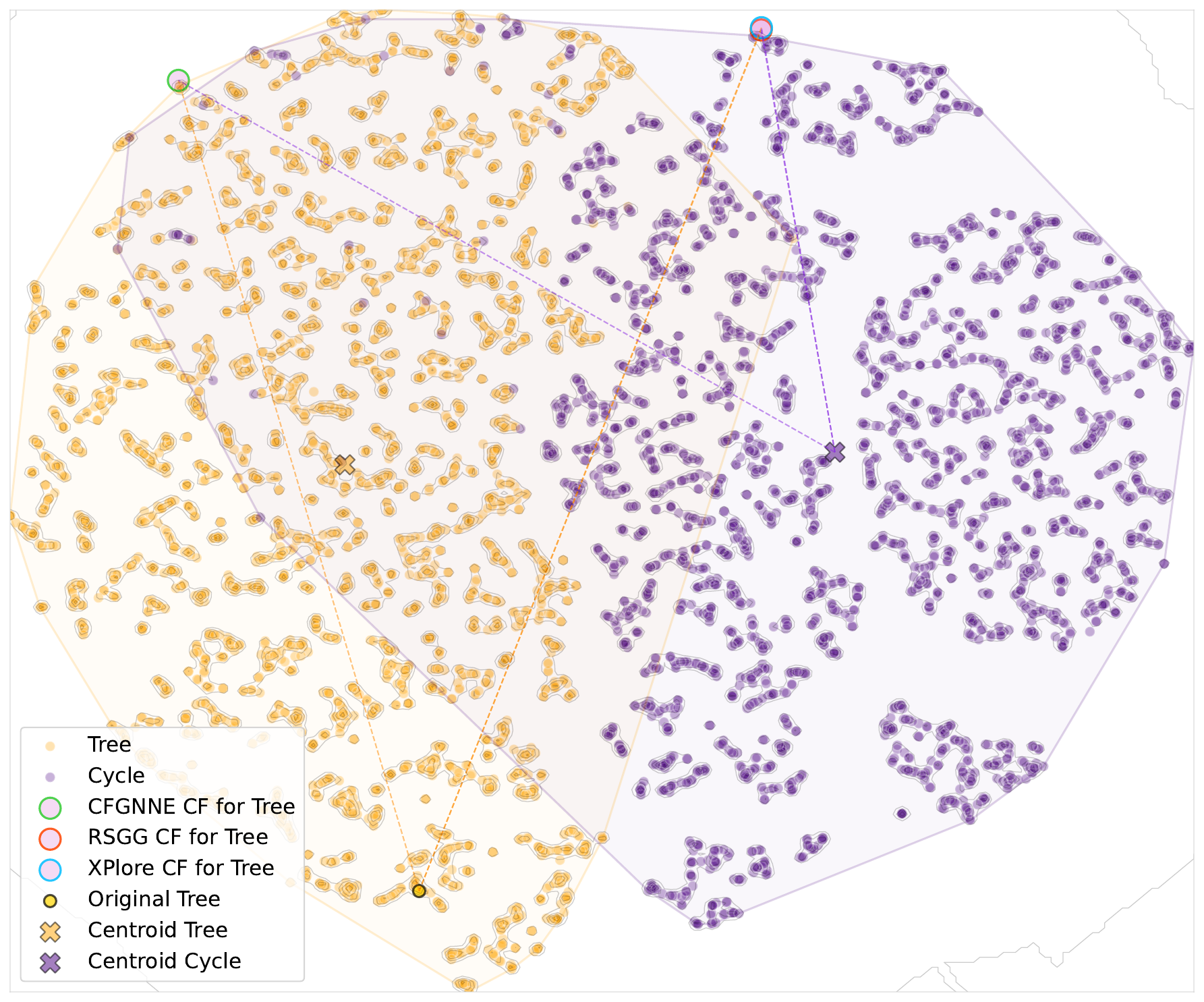}
    \caption{{}}
    \label{fig:OOD-2a}
    \end{subfigure}
    \begin{subfigure}{0.49\textwidth}
        \centering
        \includegraphics[width=\linewidth]{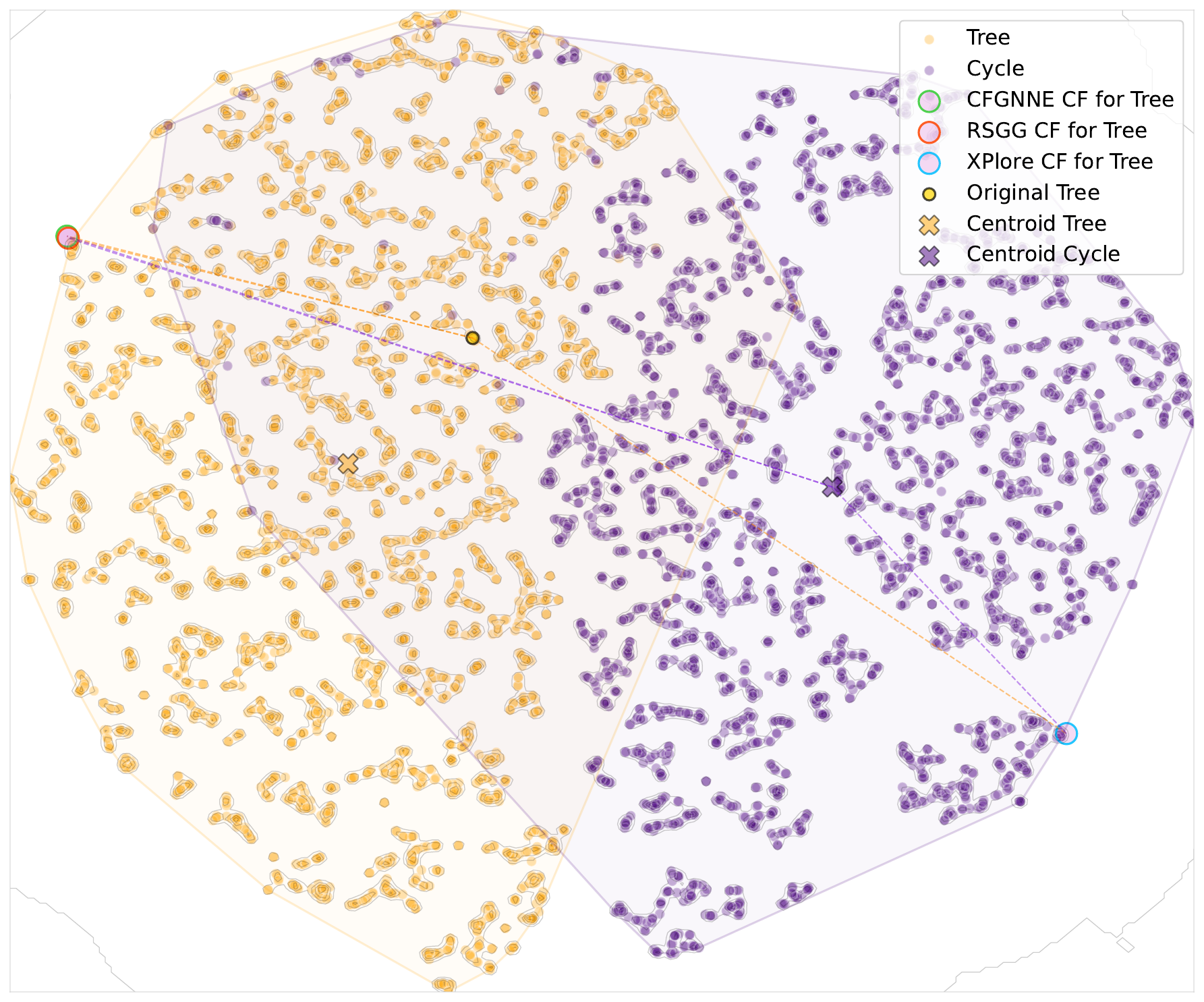}
    \caption{{}}
    \label{fig:OOD-2b}
    \end{subfigure}
    \begin{subfigure}{0.49\textwidth}
        \centering
        \includegraphics[width=\linewidth]{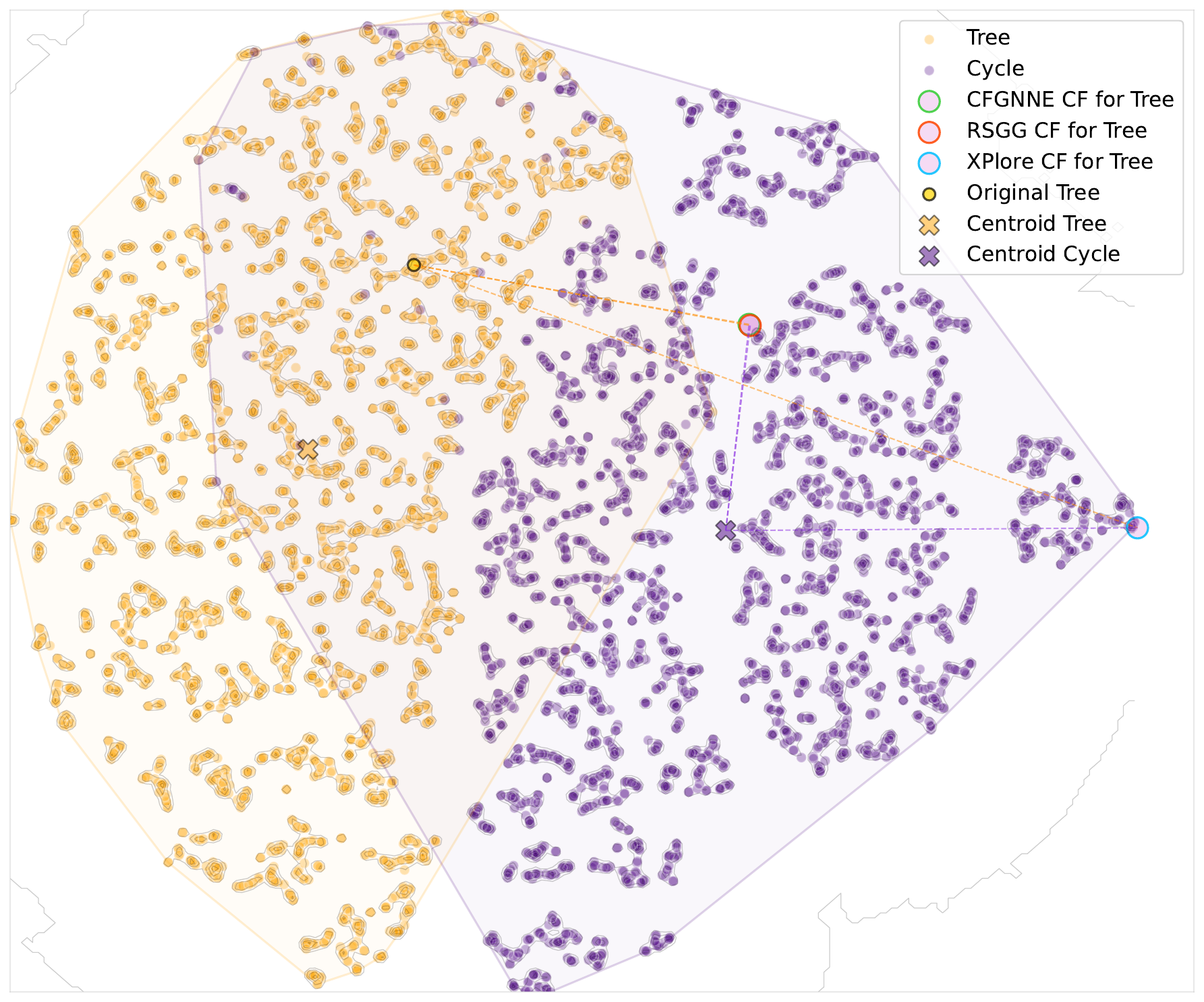}
    \caption{{}}
    \label{fig:OOD-2c}
    \end{subfigure}
    \caption{{Comparison of XPlore, CF-GNNExpl and RSGG: t-SNE projection of Wavelet Characteristic embeddings (Tree-Cycle dataset)}}
    \label{fig:OOD-2}
\end{figure*}

\end{document}

%% file: math_commands.tex
\usepackage{amsmath,amsfonts,bm}

\def\eqref#1{equation~\ref{#1}}

\def\1{\bm{1}}

\DeclareMathAlphabet{\mathsfit}{\encodingdefault}{\sfdefault}{m}{sl}
\SetMathAlphabet{\mathsfit}{bold}{\encodingdefault}{\sfdefault}{bx}{n}



%% file: bibliography.bib
@String{Computing = "Computing" }

@String{Computer = "{IEEE} Computer" }

@String{Springer = "Springer-Verlag" }

@inproceedings{leemann2024towards,
  title={Towards Non-adversarial Algorithmic Recourse},
  author={Leemann, Tobias and Pawelczyk, Martin and Prenkaj, Bardh and Kasneci, Gjergji},
  booktitle={World Conference on Explainable Artificial Intelligence},
  pages={395--419},
  year={2024},
  organization={Springer}
}

@inproceedings{prado2024robust,
  title={Robust Stochastic Graph Generator for Counterfactual Explanations},
  author={Prado-Romero, Mario Alfonso and Prenkaj, Bardh and Stilo, Giovanni},
  booktitle={Proc. of the AAAI Conf. on Artificial Intelligence},
  volume={38},
  number={19},
  pages={21518--21526},
  year={2024}
}

@article{ying2019gnnexplainer,
  title={Gnnexplainer: Generating explanations for graph neural networks},
  author={Ying, Z. and Bourgeois, D. and You, J. and Zitnik, M. and Leskovec, J.},
  journal={Adv. in Neural Inf. Processing Systems},
  volume={32},
  year={2019}
}

@article{10.1145/3554981,
  title={Dual subgraph-based graph neural network for friendship prediction in location-based social networks},
  author={Wei, Xuemei and Liu, Yezheng and Sun, Jianshan and Jiang, Yuanchun and Tang, Qifeng and Yuan, Kun},
  journal={ACM Trans. on Knowl. Disc. from Data},
  year={2022},
  publisher={ACM New York, NY}
}

@article{prado2022survey,
author = {Prado-Romero, Mario Alfonso and Prenkaj, Bardh and Stilo, Giovanni and Giannotti, Fosca},
title = {A Survey on Graph Counterfactual Explanations: Definitions, Methods, Evaluation, and Research Challenges},
year = {2023},
publisher = {Association for Computing Machinery},
address = {New York, NY, USA},
issn = {0360-0300},
journal = {ACM Comput. Surv.},
month = {sep}
}

@article{nguyen2022explaining,
  title={Explaining Black Box Drug Target Prediction through Model Agnostic Counterfactual Samples},
  author={Nguyen, Tri Minh and Quinn, Thomas P and Nguyen, Thin and Tran, Truyen},
  journal={IEEE/ACM Trans. on Compt. Biology and Bioinformatics},
  year={2022},
  publisher={IEEE}
}

@inproceedings{ma2022clear,
title={{CLEAR}: Generative Counterfactual Explanations on Graphs},
author={Jing Ma and Ruocheng Guo and Saumitra Mishra and Aidong Zhang and Jundong Li},
booktitle={Adv. in Neural Inf. Processing Systems},
year={2022}
}

@inproceedings{numeroso2021meg,
  title={Meg: Generating molecular counterfactual explanations for deep graph networks},
  author={Numeroso, Danilo and Bacciu, Davide},
  booktitle={2021 Int. Joint Conf. on Neural Networks},
  pages={1--8},
  year={2021},
  organization={IEEE}
}

@article{guidotti2018survey,
  title={A survey of methods for explaining black box models},
  author={Guidotti, Riccardo and Monreale, Anna and Ruggieri, Salvatore and Turini, Franco and Giannotti, Fosca and Pedreschi, Dino},
  journal={ACM computing surveys (CSUR)},
  volume={51},
  number={5},
  pages={1--42},
  year={2018},
  publisher={ACM New York, NY, USA}
}

@inproceedings{lucic2022cf,
  title={Cf-gnnexplainer: Counterfactual explanations for graph neural networks},
  author={Lucic, A. and Ter Hoeve, M.A. and Tolomei, G. and De Rijke, M. and Silvestri, F.},
  booktitle={Int. Conf. on AI and Statistics},
  pages={4499--4511},
  year={2022},
  organization={PMLR}
}

@inproceedings{ding2019effective,
  title={Effective feature learning with unsupervised learning for improving the predictive models in massive open online courses},
  author={Ding, Mucong and Yang, Kai and Yeung, Dit-Yan and Pong, Ting-Chuen},
  booktitle={Proc. of the 9th Int. Conf. on Learning Analytics \& Knowl.},
  pages={135--144},
  year={2019}
}

@article{vermeire2022explainable,
  title={Explainable image classification with evidence counterfactual},
  author={Vermeire, Tom and Brughmans, Dieter and Goethals, Sofie and de Oliveira, Raphael Mazzine Barbossa and Martens, David},
  journal={Pattern Analysis and Applications},
  volume={25},
  number={2},
  pages={315--335},
  year={2022},
  publisher={Springer}
}

@inproceedings{liu2021multi,
  title={Multi-objective Explanations of GNN Predictions},
  author={Liu, Yifei and Chen, Chao and Liu, Yazheng and Zhang, Xi and Xie, Sihong},
  booktitle={2021 IEEE Int. Conf. on Data Mining (ICDM)},
  pages={409--418},
  year={2021},
  organization={IEEE}
}

@inproceedings{tancf2,
author = {Tan, Juntao and Geng, Shijie and Fu, Zuohui and Ge, Yingqiang and Xu, Shuyuan and Li, Yunqi and Zhang, Yongfeng},
title = {Learning and Evaluating Graph Neural Network Explanations Based on Counterfactual and Factual Reasoning},
year = {2022},
booktitle = {Proc. of the ACM Web Conf. 2022},
pages = {1018–1027},
numpages = {10},
location = {Virtual Event, Lyon, France},
series = {WWW '22}
}

@inproceedings{aragona2021coronna,
  title={CoRoNNa: a deep sequential framework to predict epidemic spread},
  author={Aragona, Dario and Podo, Luca and Prenkaj, Bardh and Velardi, Paola},
  booktitle={Proc. of the 36th Annual ACM Symposium on Applied Computing},
  pages={10--17},
  year={2021}
}

@article{verenich2019predicting,
  title={Predicting process performance: A white-box approach based on process models},
  author={Verenich, I. and Dumas, M. and La Rosa, M. and Nguyen, H.},
  journal={Journal of Software: Evolution and Process},
  volume={31},
  number={6},
  pages={e2170},
  year={2019},
  publisher={Wiley Online Library}
}

@article{8882211,  author={Loyola-González, O.},  journal={IEEE Access},   title={Black-Box vs. White-Box: Understanding Their Advantages and Weaknesses From a Practical Point of View},   year={2019},  volume={7},  number={},  pages={154096-154113}}

@article{petch2021opening,
  title={Opening the black box: the promise and limitations of explainable machine learning in cardiology},
  author={Petch, J. and Di, S. and Nelson, W.},
  journal={Canadian Journal of Cardiology},
  year={2021},
  publisher={Elsevier}
}

@InProceedings{Zemni_2023_CVPR,
    author    = {Zemni, Mehdi and Chen, Micka\"el and Zablocki, \'Eloi and Ben-Younes, H\'edi and P\'erez, Patrick and Cord, Matthieu},
    title     = {OCTET: Object-Aware Counterfactual Explanations},
    booktitle = {Proc. of the IEEE/CVF Conf. on Computer Vision and Pattern Recognition (CVPR)},
    month     = {June},
    year      = {2023},
    pages     = {15062-15071}
}

@article{chen2023d4explainerindistributiongnnexplanations,
  title={D4explainer: In-distribution explanations of graph neural network via discrete denoising diffusion},
  author={Chen, Jialin and Wu, Shirley and Gupta, Abhijit and Ying, Rex},
  journal={Adv. in Neural Inf. Processing Systems},
  volume={36},
  pages={78964--78986},
  year={2023}
}

@inproceedings{riesen2008iam,
  title={IAM graph database repository for graph based pattern recognition and machine learning},
  author={Riesen, Kaspar and Bunke, Horst},
  booktitle={Joint IAPR international workshops on statistical techniques in pattern recognition (SPR) and structural and syntactic pattern recognition (SSPR)},
  pages={287--297},
  year={2008},
  organization={Springer}
}

@inproceedings{prenkaj2024unifying,
  title={Unifying Evolution, Explanation, and Discernment: A Generative Approach for Dynamic Graph Counterfactuals},
  author={Prenkaj, Bardh and Villaiz{\'a}n-Vallelado, Mario and Leemann, Tobias and Kasneci, Gjergji},
  booktitle={Proc. of the 30th ACM SIGKDD Conf. on Knowl. Disc. and Data Mining},
  pages={2420--2431},
  year={2024}
}

@article{doi:10.1021/ci034143r  ,
author = {Sutherland, Jeffrey J. and O'Brien, Lee A. and Weaver, Donald F.},
title = {Spline-Fitting with a Genetic Algorithm:  A Method for Developing Classification Structure-Activity Relationships},
journal = {Journal of Chemical Inf. and Computer Sciences},
volume = {43},
number = {6},
pages = {1906-1915},
year = {2003}
}

@misc{rozemberczki2020characteristicfunctionsgraphsbirds,
      title={Characteristic Functions on Graphs: Birds of a Feather, from Statistical Descriptors to Parametric Models}, 
      author={Benedek Rozemberczki and Rik Sarkar},
      year={2020},
      eprint={2005.07959},
      archivePrefix={arXiv},
      primaryClass={cs.LG},
      url={https://arxiv.org/abs/2005.07959}, 
}

@misc{narayanan2017graph2veclearningdistributedrepresentations,
      title={graph2vec: Learning Distributed Representations of Graphs}, 
      author={Annamalai Narayanan and Mahinthan Chandramohan and Rajasekar Venkatesan and Lihui Chen and Yang Liu and Shantanu Jaiswal},
      year={2017},
      eprint={1707.05005},
      archivePrefix={arXiv},
      url={https://arxiv.org/abs/1707.05005}, 
}

@inproceedings{Tsitsulin_2018,
   title={NetLSD: Hearing the Shape of a Graph},
   booktitle={Proc. of the 24th ACM SIGKDD Int. Conf. on Knowl. Discovery \& Data Mining},
   publisher={ACM},
   author={Tsitsulin, Anton and Mottin, Davide and Karras, Panagiotis and Bronstein, Alexander and Müller, Emmanuel},
   year={2018}
}

@inproceedings{Wang_2021,
  title={Graph embedding via diffusion-wavelets-based node feature distribution characterization},
  author={Wang, Lili and Huang, Chenghan and Ma, Weicheng and Cao, Xinyuan and Vosoughi, Soroush},
  booktitle={Proc. of the 30th ACM Int. Conf. on Information \& Knowl. Management},
  pages={3478--3482},
  year={2021}
}

@inproceedings{prado2023generative,
  title={Are Generative-Based Graph Counterfactual Explainers Worth It?},
  author={Prado-Romero, Mario Alfonso and Prenkaj, Bardh and Stilo, Giovanni},
  booktitle={Joint European Conf. on Machine Learning and Knowl. Disc. in Databases},
  pages={152--170},
  year={2023},
  organization={Springer}
}

@inproceedings{galland:hal-02947290,
  TITLE = {{Invariant embedding for graph classification}},
  AUTHOR = {Galland, Alexis and Lelarge, Marc},
  URL = {https://hal.science/hal-02947290},
  BOOKTITLE = {{ICML 2019 Workshop on Learning and Reasoning with Graph-Structured Data}},
  ADDRESS = {Long Beach, United States},
  YEAR = {2019},
  MONTH = Jun,
  HAL_ID = {hal-02947290},
  HAL_VERSION = {v1},
}

@misc{cai2022simpleeffectivebaselinenonattributed,
      title={A simple yet effective baseline for non-attributed graph classification}, 
      author={Chen Cai and Yusu Wang},
      year={2022},
      eprint={1811.03508},
      archivePrefix={arXiv},
      primaryClass={cs.LG},
      url={https://arxiv.org/abs/1811.03508}, 
}

@InProceedings{pmlr-v97-gao19e,
  title = 	 {Geometric Scattering for Graph Data Analysis},
  author =       {Gao, Feng and Wolf, Guy and Hirn, Matthew},
  booktitle = 	 {Proc. of the 36th Int. Conf. on Machine Learning},
  pages = 	 {2122--2131},
  year = 	 {2019},
  volume = 	 {97},
  publisher =    {PMLR}
}

@inproceedings{10.1007/978-3-030-36718-3_1,
  title={Gl2vec: Graph embedding enriched by line graphs with edge features},
  author={Chen, Hong and Koga, Hisashi},
  booktitle={Neural Information Processing: 26th Int. Conf. ICONIP 2019},
  pages={3--14},
  year={2019},
  organization={Springer}
}

@misc{delara2018simplebaselinealgorithmgraph,
      title={A Simple Baseline Algorithm for Graph Classification}, 
      author={Nathan de Lara and Edouard Pineau},
      year={2018},
      eprint={1810.09155},
      archivePrefix={arXiv},
      primaryClass={cs.LG},
      url={https://arxiv.org/abs/1810.09155}, 
}

@inproceedings{NIPS2017_d2ddea18,
 author = {Verma, Saurabh and Zhang, Zhi-Li},
 booktitle = {Advances in Neural Information Processing Systems},
 title = {Hunt For The Unique, Stable, Sparse And Fast Feature Learning On Graphs},
 volume = {30},
 year = {2017}
}

@misc{knyazev2019understandingattentiongeneralizationgraph,
      title={Understanding Attention and Generalization in Graph Neural Networks}, 
      author={Boris Knyazev and Graham W. Taylor and Mohamed R. Amer},
      year={2019},
      eprint={1905.02850},
      archivePrefix={arXiv},
      primaryClass={cs.LG},
      url={https://arxiv.org/abs/1905.02850}, 
}

@article{martins2012bayesian,
  title={A Bayesian approach to in silico blood-brain barrier penetration modeling},
  author={Martins, Ines Filipa and Teixeira, Ana L and Pinheiro, Luis and Falcao, Andre O},
  journal={Journal of chemical information and modeling},
  volume={52},
  number={6},
  pages={1686--1697},
  year={2012},
  publisher={ACS Publications}
}

@article{debnath1991structure,
  title={Structure-activity relationship of mutagenic aromatic and heteroaromatic nitro compounds. correlation with molecular orbital energies and hydrophobicity},
  author={Debnath, Asim Kumar and Lopez de Compadre, Rosa L and Debnath, Gargi and Shusterman, Alan J and Hansch, Corwin},
  journal={Journal of medicinal chemistry},
  volume={34},
  number={2},
  pages={786--797},
  year={1991},
  publisher={ACS Publications}
}

@article{bertsekas1997nonlinear,
  title={Nonlinear programming},
  author={Bertsekas, Dimitri P},
  journal={Journal of the Operational Research Society},
  volume={48},
  number={3},
  pages={334--334},
  year={1997},
  publisher={Taylor \& Francis}
}

@article{qu2024greedy,
  title={GreeDy and CoDy: Counterfactual Explainers for Dynamic Graphs},
  author={Qu, Zhan and Gomm, Daniel and F{\"a}rber, Michael},
  journal={arXiv preprint arXiv:2403.16846},
  year={2024}
}

@misc{bengio2013estimatingpropagatinggradientsstochastic,
      title={Estimating or Propagating Gradients Through Stochastic Neurons for Conditional Computation}, 
      author={Yoshua Bengio and Nicholas Léonard and Aaron Courville},
      year={2013},
      eprint={1308.3432},
      archivePrefix={arXiv},
      primaryClass={cs.LG},
      url={https://arxiv.org/abs/1308.3432}, 
}

@article{borgwardt2005protein,
  title={Protein function prediction via graph kernels},
  author={Borgwardt, Karsten M and Ong, Cheng Soon and Sch{\"o}nauer, Stefan and Vishwanathan, SVN and Smola, Alex J and Kriegel, Hans-Peter},
  journal={Bioinformatics},
  volume={21},
  number={suppl\_1},
  pages={i47--i56},
  year={2005},
  publisher={Oxford University Press}
}

@article{schomburg2004brenda,
  title={BRENDA, the enzyme database: updates and major new developments},
  author={Schomburg, Ida and Chang, Antje and Ebeling, Christian and Gremse, Marion and Heldt, Christian and Huhn, Gregor and Schomburg, Dietmar},
  journal={Nucleic acids research},
  volume={32},
  number={suppl\_1},
  pages={D431--D433},
  year={2004},
  publisher={Oxford University Press}
}

@article{dobson2003distinguishing,
  title={Distinguishing enzyme structures from non-enzymes without alignments},
  author={Dobson, Paul D and Doig, Andrew J},
  journal={Journal of molecular biology},
  volume={330},
  number={4},
  pages={771--783},
  year={2003},
  publisher={Elsevier}
}

@article{berman2000protein,
  title={The protein data bank},
  author={Berman, Helen M and Westbrook, John and Feng, Zukang and Gilliland, Gary and Bhat, Talapady N and Weissig, Helge and Shindyalov, Ilya N and Bourne, Philip E},
  journal={Nucleic acids research},
  volume={28},
  number={1},
  pages={235--242},
  year={2000},
  publisher={Oxford University Press}
}

@inproceedings{neuhaus2005graph,
  title={A graph matching based approach to fingerprint classification using directional variance},
  author={Neuhaus, Michel and Bunke, Horst},
  booktitle={International Conference on Audio-and Video-Based Biometric Person Authentication},
  pages={191--200},
  year={2005},
  organization={Springer}
}

@inproceedings{leskovec2005graphs,
  title={Graphs over time: densification laws, shrinking diameters and possible explanations},
  author={Leskovec, Jure and Kleinberg, Jon and Faloutsos, Christos},
  booktitle={Proceedings of the eleventh ACM SIGKDD international conference on Knowledge discovery in data mining},
  pages={177--187},
  year={2005}
}

@inproceedings{pan2013graph,
  title={Graph stream classification using labeled and unlabeled graphs},
  author={Pan, Shirui and Zhu, Xingquan and Zhang, Chengqi and Yu, Philip S},
  booktitle={2013 IEEE 29th International Conference on Data Engineering (ICDE)},
  pages={398--409},
  year={2013},
  organization={IEEE}
}

@inproceedings{yanardag2015deep,
  title={Deep graph kernels},
  author={Yanardag, Pinar and Vishwanathan, SVN},
  booktitle={Proceedings of the 21th ACM SIGKDD international conference on knowledge discovery and data mining},
  pages={1365--1374},
  year={2015}
}

@article{neumann2016propagation,
  title={Propagation kernels: efficient graph kernels from propagated information},
  author={Neumann, Marion and Garnett, Roman and Bauckhage, Christian and Kersting, Kristian},
  journal={Machine learning},
  volume={102},
  number={2},
  pages={209--245},
  year={2016},
  publisher={Springer}
}

@inproceedings{winn2005object,
  title={Object categorization by learned universal visual dictionary},
  author={Winn, John and Criminisi, Antonio and Minka, Thomas},
  booktitle={Tenth IEEE International Conference on Computer Vision (ICCV'05) Volume 1},
  volume={2},
  pages={1800--1807},
  year={2005},
  organization={IEEE}
}

@inproceedings{vedaldi2008quick,
  title={Quick shift and kernel methods for mode seeking},
  author={Vedaldi, Andrea and Soatto, Stefano},
  booktitle={European conference on computer vision},
  pages={705--718},
  year={2008},
  organization={Springer}
}

@article{pan2015cogboost,
  title={CogBoost: Boosting for fast cost-sensitive graph classification},
  author={Pan, Shirui and Wu, Jia and Zhu, Xingquan},
  journal={IEEE Transactions on Knowledge and Data Engineering},
  volume={27},
  number={11},
  pages={2933--2946},
  year={2015},
  publisher={IEEE}
}

@article{pan2014graph,
  title={Graph ensemble boosting for imbalanced noisy graph stream classification},
  author={Pan, Shirui and Wu, Jia and Zhu, Xingquan and Zhang, Chengqi},
  journal={IEEE transactions on cybernetics},
  volume={45},
  number={5},
  pages={954--968},
  year={2014},
  publisher={IEEE}
}

@article{verma2024induce,
  title={InduCE: Inductive counterfactual explanations for graph neural networks},
  author={Verma, Samidha and Armgaan, Burouj and Medya, Sourav and Ranu, Sayan},
  journal={Transactions on Machine Learning Research},
  year={2024}
}

@article{giorgi2025combinex,
  title={COMBINEX: A Unified Counterfactual Explainer for Graph Neural Networks via Node Feature and Structural Perturbations},
  author={Giorgi, Flavio and Silvestri, Fabrizio and Tolomei, Gabriele},
  journal={arXiv preprint arXiv:2502.10111},
  year={2025}
}

@inproceedings{ma2025c2explainer,
  title={C2Explainer: Customizable Mask-based Counterfactual Explanation for Graph Neural Networks},
  author={Ma, Jiali and Takigawa, Ichigaku and Yamamoto, Akihiro},
  booktitle={Proceedings of the 2025 ACM Conference on Fairness, Accountability, and Transparency},
  pages={137--149},
  year={2025}
}
